\documentclass{article}

\PassOptionsToPackage{numbers,compress}{natbib}

\usepackage[preprint]{neurips_2026}

\usepackage{microtype}
\usepackage{hyperref}
\usepackage{url}
\usepackage{booktabs}

\usepackage[utf8]{inputenc} %
\usepackage[T1]{fontenc}    %
\usepackage{hyperref}       %
\usepackage{url}            %
\usepackage{booktabs}       %
\usepackage{amsfonts}       %
\usepackage{nicefrac}       %
\usepackage{microtype}      %
\usepackage{xcolor}         %
\usepackage{comment}
\usepackage{amsmath}
\usepackage{mathtools}

\usepackage{bm}
\usepackage{amsthm}
\renewcommand{\vec}[1]{\bm{#1}}
\newcommand{\sumT}{\sum_{t=1}^T} 
\usepackage{algorithm}
\usepackage{algpseudocode}
\newcommand{\grd}{\nabla}
\newcommand{\expec}[2]{\mathop{\mathbb{E}}_{#1}\left[{#2}\right]}
\newcommand{\cQ}{\mathcal{Q}}
\newcommand{\comseq}{\{\vec u_t\}_{t=1}^T}

\newcommand{\dtp}[2]{\langle {#1}, {#2} \rangle}
\newcommand{\X}{\mathcal{X}}

\DeclareMathOperator*{\argmin}{argmin}

\newcommand{\ti}{\tilde}
\newcommand{\R}{\mathcal{R}}
\newcommand{\bigo}{\mathcal{O}}
\newcommand{\Lap}{\mathrm{Lap}}
\newcommand{\Rind}{\R^{\mathbf{1\{\cdot\}}}}
\renewcommand{\Rind}{\R^{\mathbf{1}}}
\newcommand{\RindH}[1]{\R^{\mathbf{1},{{#1}}}}

\usepackage[capitalise]{cleveref}
\usepackage{thmtools}
\declaretheorem[name=Theorem]{theorem}
\declaretheorem[name=Lemma, sibling=theorem]{lemma}
\declaretheorem[name=Proposition, sibling=theorem]{proposition}
\declaretheorem[name=Assumption, sibling=theorem]{assumption}

\declaretheorem[name=Corollary, sibling=theorem]{corollary}

\newcommand{\E}{\mathbb{E}}

\newcommand{\one}{\mathbf{1}}
\newcommand{\TV}[1]{\left\lVert #1 \right\rVert_{\mathrm{TV}}}

\newcommand{\cP}{\mathcal{P}}
\newcommand{\cH}{\mathcal{H}}
\newcommand{\cI}{\mathcal{I}}
\newcommand{\cG}{\mathcal{G}}

\newcommand{\cE}{\mathcal{E}}
\newcommand{\cD}{\mathcal{D}}
\newcommand{\cU}{\mathcal{U}}

\newcommand{\ind}{\mathbf{1}\{\vec x_t \neq \vec x_{t-1}\}}
\newcommand{\ip}[2]{\left\langle #1, #2 \right\rangle}
\usepackage{enumitem}

\title{Dynamic Regret in Online Convex Optimization with Indicator Switching Costs}

\author{
  Naram Mhaisen \qquad George Iosifidis \\
  Faculty of Electrical Engineering, Mathematics and Computer Science \\
  TU Delft, Netherlands
}
\begin{document}
\maketitle

\begin{abstract}
We study dynamic regret in online convex optimization with an \emph{indicator switching cost}: a fixed penalty incurred whenever two consecutive decisions differ. This captures startup overheads such as server activation, model deployment, and cache updates, and on a bounded domain it recovers norm-based movement costs as a special case. Existing guarantees for indicator costs handle only static comparators. We show that a direct extension of these techniques to dynamic regret provably fails, motivating a different approach. We propose a  meta-learning framework: a set of randomized lazy FTRL base learners restarted at dyadic time scales, aggregated by a movement-aware master that mixes their proposal densities and samples actions via maximal coupling of consecutive mixtures. The resulting algorithm satisfies, in expectation, $\mathcal{R}^{\mathbf{1}}_T \le \tilde{\mathcal{O}}(\min\{\sqrt{T(S_T{+}1)},T^{2/3}(P_T+1)^{1/3}\})$, where $\mathcal{R}^{\mathbf{1}}_T$ is the dynamic regret plus the cumulative indicator switching cost, $S_T$ counts comparator switches, and $P_T$ is the comparator path length. The bound holds simultaneously for all sequences and requires no prior knowledge of $S_T$ or $P_T$: it is minimax-optimal (up to logarithmic factors) for tracking piecewise-constant comparators, and also captures frequently moving comparators with small total path length.
\end{abstract}

\section{Introduction}
We study Online Convex Optimization (OCO) in non-stationary environments. At each round $t \in [T]$, a learner selects an action $\vec x_t \in \mathcal X$ \emph{before} observing the convex loss $f_t(\cdot)$, and incurs $f_t(\vec x_t)$. A central performance criterion in this setting is \emph{dynamic regret}, which compares the learner against a time-varying comparator sequence $\comseq$. To capture stability requirements, dynamic regret is often augmented with a \emph{switching cost} $m_t(\vec x_t,\vec x_{t-1})$ that penalizes changes between consecutive actions. The augmented formulation, known as \emph{smoothed} OCO (SOCO), models the central tradeoff between responsiveness to non-stationarity and stability of the decision sequence. While this tradeoff is well understood for switching costs that are norm-based, the regime in which any reconfiguration incurs an overhead independent of the magnitude of the change remains underexplored.

In many practical problems, the switching cost is modeled as an \emph{indicator switching cost}
\citep{cesa2013online, jaghargh2019consistent, sherman2021lazy}: $\lambda_t \,\mathbf 1\{\vec x_t \neq \vec x_{t-1}\},$ which charges a (possibly time-varying) penalty $\lambda_t$ whenever $\vec x_t \neq \vec x_{t-1}$, regardless of how large the change is. Such costs arise whenever any reconfiguration triggers a one-shot overhead: server activation in cloud scheduling, cache updates in networks, model redeployment in production ML pipelines, retraining-trigger costs in continual learning, and the per-iteration privacy budget in differentially private learning~\citep{agarwal2023differentially}. The indicator switching cost is, in fact, the \emph{more demanding} primitive. Setting $\lambda_t = \|\vec x_t - \vec x_{t-1}\|$ recovers the norm-based switching cost as a special case, and on a bounded domain $\|\vec x_t - \vec x_{t-1}\| \le D \,\mathbf 1\{\vec x_t \neq \vec x_{t-1}\}$ where $D \doteq \mathrm{diam}(\mathcal X)$ is the diameter of the set $\mathcal{X}$, so any algorithm that controls the indicator switching cost automatically controls the norm-based one (up to the factor $D$), but not vice versa. In this sense indicator switching costs subsume norm-based ones and impose a strictly stronger notion of stability.

In Learning with Expert Advice (LEA), where the decision set is the
simplex and losses are linear, indicator switching costs admit a useful
reduction. Under a maximal coupling%
\footnote{Specifically, under such a coupling,
$\Pr(\vec x_t \neq \vec x_{t-1}) = \tfrac12\|\vec x_t-\vec x_{t-1}\|_1$; see, e.g., \citep[Sec.~5.2]{daniely2019competitive}.},
the expected number of switches is exactly the $\ell_1$ movement of the played distributions, so the problem reduces to controlling $\ell_1$ norm on the simplex, and both static \citep{altschuler2018online} and dynamic\footnote{In LEA, Dynamic regret is known as tracking regret. Also, strongly adaptive regret implies tracking regret \citep[Sec. 3]{daniely2015strongly}.} \citep{{daniely2019competitive}} regret bounds follow
from standard algorithms such as exponentiated gradient and fixed-share.
In general OCO, a line of work has nevertheless made progress on this
harder regime via the FTRL framework, obtaining minimax-optimal
\emph{static} regret under indicator switching costs
\citep{shermanArxiv,agarwal2023differentially}. These remain the only
available results in OCO, because the other main framework, Online Mirror Descent (OMD), defines each iterate recursively from the previous one via a Bregman divergence term (see, e.g., \citep[Sec. 6]{mcmahan-survey17}). This recursive dependence of each action on its predecessor precludes a closed-form density that can be evaluated pointwise, which is necessary for the maximal coupling used to control indicator switches.

The dynamic-regret case has thus remained open, and the difficulty traces back to the FTRL framework itself. Even without switching costs, vanilla FTRL is
suboptimal for dynamic regret \citep{jacobsen2022parameter}; prior work
has recovered dynamic-regret guarantees through enhancements such as history pruning \citep{pruning-icml,mhaisen2026partially}, but these methods proceed by
linearizing the FTRL objective, after which the iterate also becomes a recursive update, inheriting the same difficulty as OMD in deriving the density. Our
first contribution shows that even when assisted with perturbations and
fixed restarts (which can be viewed as the analogue of history pruning
but without linearization) FTRL still fails to deliver (optimal) sublinear
dynamic regret. A different architecture beyond a single FTRL run is therefore required.

The natural question is thus: \emph{can one obtain dynamic regret
guarantees in OCO when the learner pays indicator switching costs, and
if so, against which comparator classes?} We answer this in the
affirmative for the two canonical comparator classes in the dynamic
regret literature: piecewise-constant comparators that change at most
$S_T$ times, and comparators whose total variation is bounded by $P_T$.
As we detail later in the contributions, the bounded-variation class
is the larger of the two, and it further admits comparators that change
at every round while moving only slightly, sequences against which the
switch count $S_T$ is uninformative. Against the first class, we obtain
order-optimal dynamic regret of $\tilde{\mathcal{O}}(\sqrt{(S_T+1)\,T})$,
matching the minimax rate up to logarithmic factors. Against the second,
the same algorithm attains $\tilde{\mathcal{O}}(T^{2/3} P_T^{1/3})$.
A single comparator-oblivious algorithm thus handles both canonical
forms of non-stationarity simultaneously. The approach section below details how this is achieved through a multiscale meta-learner over restarted lazy randomized FTRL experts.

\subsection{Contributions}
We first state the standing assumptions used throughout the paper.
\begin{assumption}
\label{assump:domain}
The set $\mathcal{X} \subset \mathbb{R}^d$ is convex and bounded with
diameter $D \doteq \sup_{\vec{x},\vec{y}\in\mathcal{X}}\|\vec{x}-\vec{y}\|$
and radius $R \doteq \sup_{\vec{x}\in\mathcal{X}}\|\vec{x}\|$.
\end{assumption}

\begin{assumption}
\label{assump:losses}
The loss $f_t : \mathcal{X} \to \mathbb{R}$
is twice differentiable, convex, $G$-Lipschitz, and $\beta$-smooth; i.e.,
\[
    \|\nabla f_t(\vec{x})\| \le G,
    \qquad
    \nabla^2 f_t(\vec{x}) \preceq \beta I,
    \qquad \forall t\in[T], \vec{x} \in \mathcal{X}.
\]
\end{assumption}
\begin{assumption}
\label{assump:switching}
The switching coefficients $\{\lambda_t\}_{t=2}^T$ satisfy $0<\lambda_t\le \lambda$
for a known constant $\lambda>0$.
\end{assumption}

\begin{assumption}
\label{assump:bounded-function}
There exists $M_f>0$ such that $0 \le f_t(\vec{x}) \le M_f,\quad \forall t\in[T], \vec{x} \in \mathcal{X}$.
\end{assumption}

Our performance criterion is the \emph{dynamic regret with indicator switching cost} $\Rind_T$, defined as:
\begin{align*}
    \Rind_T(\comseq)
    \doteq
    \sum_{t=1}^T \bigl(f_t(\vec x_t)-f_t(\vec u_t)\bigr)
    + \sum_{t=2}^T \lambda_t\,\mathbf{1}\{\vec x_t \neq \vec x_{t-1}\},
\end{align*}
When clear from context, we drop $\comseq$ from the argument. $\Rind_T$ compares the learner against a time-varying comparator sequence
$\comseq=(\vec u_1,\ldots,\vec u_T)$ while charging the learner
for its own switches. We measure the complexity of the comparator sequence through two canonical quantities:
\begin{align*}
    S_T(\comseq)
    \doteq \sum_{t=2}^T \mathbf{1}\{\vec u_t \neq \vec u_{t-1}\},
    \qquad
    P_T(\comseq)
    \doteq \sum_{t=2}^T \|\vec u_t-\vec u_{t-1}\|,
\end{align*}
These measures induce two comparator classes,
\[
    \mathcal U_S(s) \doteq \{\comseq : S_T(\comseq) \le s\},
    \qquad
    \mathcal U_P(p) \doteq \{\comseq : P_T(\comseq) \le p\}.
\]
These classes capture different forms of non-stationarity. $\mathcal U_S(s)$ allows the comparator to change at most $s$ times, regardless of the magnitude. $\mathcal U_P(p)$ instead places no limit on how often the comparator changes, only on the total magnitude of changes, which cannot exceed $p$. Since any sequence with $s$ changes has total movement at most $D s$, we have $\mathcal U_S(s) \subseteq \mathcal U_P(Ds)$; but $\mathcal U_P$ also contains sequences that, e.g., move at every round by small amounts, for which $S_T$ becomes uninformative. Our guarantees are therefore stated
simultaneously for every sequence as functions of $S_T$ and $P_T$, with the algorithm itself oblivious to which complexity is smaller. The contributions are listed below.

\textbf{An impossibility for randomized FTRL.}
We first show that the existing machinery for indicator switching costs, known as FPRLL \cite{sherman2021lazy, shermanArxiv,agarwal2023differentially}\footnote{These works use different names for essentially the same algorithm: FTRL with perturbations and a barrier; see the Addendum  section of \cite{shermanArxiv}.}, cannot deliver optimal dynamic regret:
\begin{enumerate}[label=(\alph*), leftmargin=*, align=left]
  \item[(a)]\!\!\!\! Without restarts, FPRLL suffers $\Omega(T)$ dynamic regret even on $\mathcal U_S(1)$.
  \item[(b)]\!\!\!\! With any fixed restart period $k$, FPRLL$(k)$ suffers $\Omega\bigl(T^{2/3}\tau^{1/3}/\log T\bigr)$ dynamic regret on $\mathcal U_S(\tau)$.
\end{enumerate}
Note that the same lower bounds remain valid for the larger class $\mathcal U_P(\cdot)$.
\textbf{Strongly adaptive regret.}
We introduce a comparator-oblivious algorithm that combines restarted FPRLL base learners at all dyadic time scales, aggregated by the discounted-normal predictor of \citep{daniely2019competitive}. For every interval $I = [s,e] \subseteq [T]$ of length $L$,
\begin{align*}
  \Rind_I(\vec u)\doteq\E\Biggl[\sum_{t \in I} \bigl(f_t(\vec x_t) - f_t(\vec u)\bigr)
  + \sum_{t=s+1}^{e} \lambda_t \,\mathbf 1\{\vec x_t \neq \vec x_{t-1}\}\Biggr]
  \;\le\; \tilde{\mathcal{O}}(\sqrt L)
  \tag*{(\Cref{thm:reduction-geometric})}
\end{align*}
uniformly in $\vec u \in \mathcal X$ and in the interval $I$. To our knowledge, this is the first strongly adaptive regret guarantee under indicator switching costs for OCO, and may be of independent interest.

\textbf{Optimal dynamic regret for piecewise-constant comparators.}
A standard reduction along the time segmentation of $\comseq$ converts the strongly adaptive bound above into a dynamic regret guarantee: 
\begin{align}
  \expec{}{\Rind_T} \;\le\; \tilde{\mathcal O}\bigl(\sqrt{(S_T+1)\,T}\bigr)
  \tag*{(\Cref{cor:dr-pwc})},
\end{align}
without prior knowledge of $S_T$. This matches, up to logarithmic factors, the minimax rate for tracking moving comparators \cite{zhang2018adaptive}, despite the learner incurring an additional cost for its indicator switches.
\textbf{Path-length refinement and a unified bound.}
The same algorithm is adaptive to path measure:
\begin{align*}
  \E\bigl[\Rind_T\bigr]
  \;\le\; \tilde{\mathcal{O}}\!\left(\max\{P_T^{1/3}T^{2/3}\!\!,\; \sqrt{T}\}\right).
  \tag*{(\Cref{thm:dynamic-dyadic-master})}
\end{align*}
Combining the two yields the unified guarantee
\begin{align}
    \label{eq:result-two-branches}
  \E\bigl[\Rind_T\bigr]
  \;\le\;
  \tilde{\mathcal{O}}\!\left(
    \min\Bigl\{
      \sqrt{(S_T+1)\,T},
      \;\;
      (P_T+1)^{1/3}\,T^{2/3} 
    \Bigr\}
  \right).
\end{align}

\textbf{Limitation.}
In norm-based SOCO, the bound ${\mathcal{O}}(\sqrt{T(1+P_T)})$ is attainable on the larger class $\cU_P$, and since $\cU_S(s) \subseteq \cU_P(Ds)$, that single bound automatically yields the optimal ${\mathcal{O}}(\sqrt{T(1+S_T)})$ on $\cU_S$. Under indicator switching costs we do not preserve this unification: we recover $\tilde{\mathcal{O}}(\sqrt{T(S_T+1)})$ on $\cU_S$, but on $\cU_P$ our bound degrades to $\tilde{\mathcal{O}}(T^{2/3} P_T^{1/3})$. Whether \(\tilde{\mathcal{O}}(\sqrt{T(1+P_T)})\) is achievable under indicator switching remains open. The difficulty is that the learner is charged per change, whereas the comparator is charged only by total movement $P_T$.

\subsection{Overview of the approach}
To streamline presentation, we sketch here the algorithmic structure and analysis strategy. The algorithm has two layers: a base layer of restarted lazy randomized FTRL learners, one per dyadic scale, and a meta layer that aggregates them based on their expected loss and decision change.

\textbf{Base layer.}
For each dyadic scale $H \in \mathcal H \doteq \{1, 2, 4, \dots, 2^{\lfloor \log_2 T \rfloor}\}$, we run a fresh copy of FPRLL that restarts at the beginning of every block of length $H$. Each copy maintains a probability density $\mathcal Q_t^{(H)}$ over $\mathcal X$, induced by a perturbed FTRL objective. Small scales adapt quickly but restart, and hence switch, often; large scales are stable but slow to respond to non-stationarity. The base layer thus spans the full tradeoff between responsiveness and switching control.
\textbf{Meta layer:}
At each round $t$, the master maintains weights $\vec v_t \in \Delta_{\mathcal{H}}$
and forms the mixture density
\begin{align}
    \label{eq:master-mixture}
    {\mathcal{P}}_t \;\doteq\; \sum_{H\in\mathcal{H}} v_{t,H}\,\mathcal{Q}_t^{(H)}.
\end{align}
Action $\vec{x}_t$ is sampled from ${\mathcal{P}}_t$ via maximal coupling with ${\mathcal{P}}_{t-1}$, so that $\Pr(\vec{x}_t \neq \vec{x}_{t-1}) \!=\! \|{\mathcal{P}}_t\! -\! {\mathcal{P}}_{t-1}\|_{\mathrm{TV}}$, where $\|\cdot\|_{\mathrm{TV}}$ is total variation distance. After observing $f_t$, the master scores each scale $H$ by the surrogate loss $g_t^{(H)} \!\doteq\! \mathbb{E}_{\vec{x}\sim\mathcal{Q}_t^{(H)}}[f_t(\vec{x})]
\!+\! \lambda\,\|\mathcal{Q}_t^{(H)}\!-\!\mathcal{Q}_{t-1}^{(H)}\|_{\mathrm{TV}},$ which charges each scale for both its expected loss and the switching cost it would have incurred had the learner followed it alone.

We learn the mixing weights using the discounted-normal predictor of \citep{daniely2019competitive}. It achieves strongly adaptive regret, hence competing with every sub-interval, while also controlling the number of transitions between experts within it. In our setting, this ensures the master performs nearly as the best  restart scale for that interval, without paying an uncontrolled cost for switching between scales.

Analytically, at the base layer, we fix a dyadic scale $H$ and characterize the dynamic regret of FPRLL restarted every $H$ rounds
(\Cref{thm:dyadic-restart}). The resulting bound exposes the tradeoff
between the cost of restarts and the cost of comparator variation. 
At the meta layer, a total-variation decomposition of the combined actions, and the surrogate loss design, enable a reduction to the ``$H$-experts $\lambda/2$-switching'' problem on $\mathcal H$ (\cref{lem:one-step-master-dyadic}), which is handled via the discounted-normal predictor to obtain a strongly adaptive guarantee at the meta level.
Combining the two layers yields our two branches in \eqref{eq:result-two-branches}: for $\mathcal U_S$, we partition the horizon into piecewise-constant segments, cover each by its dyadic decomposition, and sum the base regret
against the segment's static comparator, giving
$\tilde{\mathcal O}(\sqrt{(S_T+1)T})$. For $ \mathcal U_P$,
sparsity is uninformative, but some dyadic scale in $\mathcal H$ always
matches the local rate of comparator drift; the master
tracks it, yielding $\tilde{\mathcal O}(T^{2/3} P_T^{1/3})$.

\section{The FPRLL base learner}
\label{sec:FPRLL}
\textbf{Setup.} Let $\mathcal{X} \doteq \{\vec{x}\in\mathbb{R}^d : s_c(\vec{x})\ge 0,\ c=1,\dots,C\}$
with each $s_c$ concave (so $\mathcal{X}$ is convex). Without loss of generality, $\vec{0}$ is strictly feasible
and $s_c(\vec{0})=1$. For $\gamma\in(0,1)$, we define the shrunk domain
$\mathcal{X}^\gamma \doteq (1-\gamma)\mathcal{X}$ and the scaled log-barrier:
\[
b_\gamma(\vec{x}) \doteq \frac{b(\vec{x})}{M_\gamma},
\qquad
b(\vec{x}) \doteq \sum_{c=1}^C \log\!\frac{\bar s_c}{s_c(\vec{x})},
\qquad
\bar s_c \doteq \sup_{\vec z\in\mathcal X} s_c(\vec z),
\qquad
M_\gamma \doteq \max\Bigl\{1,\,\sup_{\vec z\in\mathcal X^\gamma} b(\vec z)\Bigr\}.
\]
This is the standard logarithmic barrier, shifted to make \(b\ge0\) on \(\mathcal X\) and scaled so that \(b_\gamma\le1\) on \(\mathcal X^\gamma\),
as in \cite{shermanArxiv}. We let $G_c\!\doteq\!\sup_{\vec x \in \mathcal X^\gamma}\!\|\nabla s_c(\vec x)\|, \forall c$. Next, we define the $\sigma$-strongly convex function $F_t$:
\begin{align}
	\label{eq:f0-Ft}
	F_t \doteq \sum_{\tau=0}^t f_\tau, \quad \text{with} \quad	f_0 \doteq r + b_\gamma,
	\quad r(\vec{x})\doteq\tfrac{\sigma}{2}\|\vec{x}\|^2.
\end{align}

\textbf{Minimizers and their densities.}
Let $\vec{p}\sim\mathrm{Lap}(\mu)$ have density
$\nu(\vec{p}) = (2\mu)^{-d}\exp(-\|\vec{p}\|_1/\mu)$.
The \emph{fresh-perturbation minimizer} at round $t$ is
\begin{align}
\label{eq:fresh-minimizer}
\tilde{\vec{x}}_{t+1}
\doteq
\argmin_{\vec{x}\in\mathcal{X}}\bigl\{F_t(\vec{x})+\langle\vec{p}_t,\vec{x}\rangle\bigr\},
\end{align}
with induced distribution $\mathcal{Q}_{t+1}$. Strong convexity makes
$\vec{x}\mapsto -\nabla F_t(\vec{x})$ a smooth bijection on
$\mathrm{int}(\mathcal{X})$, so the change-of-variables formula
(\cref{lem:fresh-minimizer-density}) gives
\begin{align}
\label{eq:density}
\mathcal{Q}_t(\vec{x})
=
\nu\bigl(-\nabla F_{t-1}(\vec{x})\bigr)\,
\bigl|\det\bigl(-\nabla^2 F_{t-1}(\vec{x})\bigr)\bigr|,
\qquad \vec{x}\in\mathrm{int}(\mathcal{X}).
\end{align}
The closed form lets us evaluate $\mathcal{Q}_t$ \emph{pointwise}, which the lazy coupling below exploits directly.

\textbf{Two perspectives.}
\cref{main-alg} can be deployed in two ways. As a standalone algorithm,
it lazy-samples an action $\vec{x}_t\sim\mathcal{Q}_t$ at each round
(lines 5--12), \emph{maximally coupling} consecutive draws so that $\Pr(\vec{x}_{t+1}\neq \vec{x}_t)
\;=\;
\|\mathcal{Q}_t - \mathcal{Q}_{t+1}\|_{\mathrm{TV}}.$
Alternatively, the meta-algorithm of \cref{sec:meta} consumes
\cref{main-alg} as a \emph{density oracle}: it queries $\mathcal{Q}_t$
without invoking the sampler, and lazy-samples instead at the
mixture level. Either way, the analysis below targets the same
per-round quantity,
\begin{align*}
\mathbb{E}_{\vec{x}\sim\mathcal{Q}_t}[f_t(\vec{x})]
\;+\;
\|\mathcal{Q}_t-\mathcal{Q}_{t-1}\|_{\mathrm{TV}},
\end{align*}
which is the expected regret-plus-switching cost
when the algorithm is run standalone, and also  the surrogate score charged to each base learner by the meta-algorithm.

\begin{algorithm}[t]
\caption{Follow the Perturbed Regularized Lazy Leader (FPRLL)}
\label{main-alg}
\begin{algorithmic}[1]
\Require horizon $T$, domain $\mathcal{X}$, parameters $\gamma,\sigma,\mu>0$
\Ensure actions $\vec{x}_1,\dots,\vec{x}_T$
\State Sample $\vec{p}_0\sim\mathrm{Lap}(\mu)$;\;
       set $\vec{x}_1\gets\argmin_{\vec{x}\in\mathcal{X}}\{f_0(\vec{x})+\langle\vec{p}_0,\vec{x}\rangle\}$
\For{$t=1,\dots,T$}
    \State Play $\vec{x}_t$;\; observe $f_t$ and incur loss $f_t(\vec{x}_t)$
    \State Evaluate $\mathcal{Q}_t(\vec{x}_t)$ and $\mathcal{Q}_{t+1}(\vec{x}_t)$ via \eqref{eq:density};\;
           sample $z\sim\mathrm{Unif}[0,\,\mathcal{Q}_t(\vec{x}_t)]$
    \If{$z<\mathcal{Q}_{t+1}(\vec{x}_t)$}
        \State $\vec{x}_{t+1}\gets\vec{x}_t$ \hfill\Comment{retain: switching probability $=0$ here}
    \Else
        \Repeat \hfill\Comment{sample from residual of $\mathcal{Q}_{t+1}$}
            \State Sample $\vec{p}\sim\mathrm{Lap}(\mu)$;\;
                   solve $\vec{y}\gets\argmin_{\vec{x}\in\mathcal{X}}\{F_t(\vec{x})+\langle\vec{p},\vec{x}\rangle\}$
            \State Evaluate $\mathcal{Q}_{t+1}(\vec{y})$ and $\mathcal{Q}_t(\vec{y})$ via \eqref{eq:density};\;
                   sample $z'\sim\mathrm{Unif}[0,\,\mathcal{Q}_{t+1}(\vec{y})]$
        \Until{$z'>\mathcal{Q}_t(\vec{y})$}
        \State $\vec{x}_{t+1}\gets\vec{y}$
    \EndIf
\EndFor
\end{algorithmic}
\end{algorithm}

\textbf{Regret guarantees.}
The following statements characterize the performance of FPRLL and its restarted variants.
The first statement describes  dynamic-regret of
FPRLL, explicit in all tunable parameters $(\sigma,\mu)$ and
the comparator path budget $P_T$.
\begin{theorem}[Dynamic regret of FPRLL]
\label{thm:agnostic-dynamic}
If \(\mathrm{FPRLL}\) is run with $\gamma=\frac{1}{\sqrt T}$,
then for any \(\sigma,\mu>0\),
\[
\expec{}{\Rind_T}
\le
\frac{G^2}{2\sigma}T
+R\sqrt{2d}\,\mu
+(TG+\sigma R)P_T
+\bigl(CG_cP_T+GR\bigr)\sqrt T
+\lambda T\Big(\frac{\beta d}{\sigma}{+}\frac{\sqrt d\,G}{\mu}
\Big)
+\frac{\sigma}{2}R^2
+1.
\]
In particular, choosing
$
\mu^\star=\sqrt{\frac{\lambda G T}{R\sqrt 2}},
\qquad
\sigma^\star=\sqrt{\frac{(G^2+2\lambda\beta d)\,T}{R^2}},
$
yields
\[
\expec{} {\Rind_T}\le
(R+P_T)\sqrt{(G^2+2\lambda\beta d)\,T}
+3\sqrt{\lambda dRGT}
+TG\,P_T
+\bigl(CG_cP_T+GR\bigr)\sqrt T
+1.
\]
\end{theorem}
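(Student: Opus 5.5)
The bound splits into a loss part and a switching part. Because consecutive draws are maximally coupled,
\[
\E\Bigl[\sum_{t\ge2}\lambda_t\one\{\vec x_t\neq\vec x_{t-1}\}\Bigr]\le\lambda\sum_t\TV{\cQ_{t+1}-\cQ_t}.
\]
Each $\vec x_t$ is marginally distributed as $\cQ_t$, so the loss part equals $\sum_t\E_{\cQ_t}[f_t]-f_t(\vec u_t)$. I will bound the two parts separately and then plug in $(\mu^\star,\sigma^\star)$.

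\textbf{Switching part.} I will show $\TV{\cQ_{t+1}-\cQ_t}\le \beta d/\sigma+\sqrt d\,G/\mu$, which gives the term $\lambda T(\beta d/\sigma+\sqrt dG/\mu)$. I would use the closed form \eqref{eq:density} and split the density ratio $\cQ_{t+1}(\vec x)/\cQ_t(\vec x)$ into two factors.
\begin{itemize}
\item The Laplace factor is $\nu(-\grd F_t(\vec x))/\nu(-\grd F_{t-1}(\vec x))$. Its log is at most $\|\grd f_t(\vec x)\|_1/\mu\le\sqrt d\,G/\mu$.
\item The Jacobian factor is $\det(\grd^2F_{t-1}+\grd^2 f_t)/\det(\grd^2F_{t-1})$. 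Since $\grd^2F_{t-1}\succeq\sigma I$ and $0\preceq\grd^2f_t\preceq\beta I$, its log lies in $[0,d\log(1+\beta/\sigma)]\subseteq[0,\beta d/\sigma]$.
\end{itemize}
Together these bound the log-ratio by $\epsilon\doteq\sqrt dG/\mu+\beta d/\sigma$. Then $\TV{\cdot}\le\int\cQ_t(1-e^{-\epsilon})^+\le\epsilon$. Some care is needed to show the densities live on $\mathrm{int}(\mathcal X)$ with no mass on the boundary; the barrier $b_\gamma$ should guarantee this.

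\textbf{Loss part.} I plan to reduce to a perturbed FTRL dynamic-regret bound by the standard be-the-leader argument.
\begin{itemize}
\item Let $\tilde{\vec x}_{t+1}$ be the minimizer of $F_t+\ip{\vec p}{\cdot}$ with a \emph{single} shared perturbation. Its marginal is still $\cQ_{t+1}$, so expectations are unchanged.
\item FTL/BTL on $F_t+\ip{\vec p}{\cdot}$ gives static regret against any fixed $\vec u$ of order
\[
\sum_t\bigl(f_t(\tilde{\vec x}_t)-f_t(\tilde{\vec x}_{t+1})\bigr)+f_0(\vec u)+\ip{\vec p}{\vec u-\tilde{\vec x}_1}.
\]
\item Strong convexity gives $\|\tilde{\vec x}_t-\tilde{\vec x}_{t+1}\|\le G/\sigma$, so the stability terms total $G^2T/(2\sigma)$ (using the standard half-factor refinement).
\item $f_0(\vec u)$ contributes $\frac\sigma2R^2$ plus the barrier, and the perturbation term contributes $\E\|\vec p\|_1R\le R\sqrt{2d}\,\mu$.
\end{itemize}

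\textbf{Passing to a dynamic comparator.} The barrier is infinite on $\partial\mathcal X$, so I first replace each $\vec u_t$ by $(1-\gamma)\vec u_t\in\mathcal X^\gamma$, where $b_\gamma\le1$. This costs $\gamma GRT=GR\sqrt T$ in loss and gives the $+1$.

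The dynamic version then comes from the telescoping argument: write $\sum_t\ip{\grd F_t(\tilde{\vec x}_{t+1})+\vec p}{\vec u_{t+1}-\vec u_t}$ as the extra terms. By first-order optimality of the lazy minimizer, this is bounded by $\|\grd F_t\|\,\|\vec u_{t+1}-\vec u_t\|$. Here $\|\grd F_t\|\le tG+\sigma R+\|\grd b_\gamma\|$, which yields $(TG+\sigma R)P_T$. On $\mathcal X^\gamma$ the barrier gradient is at most $\sum_cG_c/(\gamma\,s_c)\lesssim CG_c\sqrt T$, which yields the term $CG_cP_T\sqrt T$.

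\textbf{Tuning and main obstacle.} Substituting $\mu^\star,\sigma^\star$ balances the terms $G^2T/2\sigma+\lambda\beta dT/\sigma$ against $\sigma R^2/2$, and $R\sqrt{2d}\mu$ against $\lambda\sqrt dGT/\mu$; the final form follows by arithmetic. I expect the main obstacle to be this dynamic telescoping step: FTRL's iterate depends on the whole history, so comparator movement is weighted by the full accumulated gradient $\|\grd F_t\|$ rather than a single one. Bounding that gradient, including the barrier contribution near the shrunk boundary, is the delicate point, and it is exactly the source of the linear $TGP_T$ term.
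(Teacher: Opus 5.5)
Your plan is essentially the paper's proof. It uses the same shrinking $\vec u_t\mapsto\vec u_t^\gamma=(1-\gamma)\vec u_t$, the same be-the-leader split into a stability term and a comparator-drift term, and the same bound $tG+\sigma R+CG_c/\gamma$ on $\|\nabla F_t\|$ over $\mathcal X^\gamma$. The total-variation part is also the same density-ratio argument; your determinant bound via $0\preceq\nabla^2 f_t\preceq\beta I$ is a tidier form of the paper's Weyl step. The only organizational difference is that you couple all rounds through one shared perturbation from the start, with $\Phi_t\doteq F_t+\ip{\vec p}{\cdot}$. The paper instead runs the decomposition of \cref{lem:strong-ftrl-perturbed-regret} with fresh perturbations and passes to the shared-perturbation law $\mathcal B$ only inside the stability term (\cref{lem:static-part}). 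Since the expected loss depends only on the marginals $\cQ_t$, both routes are valid. Yours makes the terminal comparison $\Phi_T(\tilde{\vec x}_{T+1})\le\Phi_T(\vec u_T^\gamma)$ immediate, without the paper's device of setting $\vec p_T=\vec 0$.

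The drift step, however, is wrong as written. The extra terms are $\Phi_t(\vec u^\gamma_{t+1})-\Phi_t(\vec u^\gamma_t)$, and they are not controlled by $\ip{\nabla F_t(\tilde{\vec x}_{t+1})+\vec p}{\vec u_{t+1}-\vec u_t}$. By first-order optimality that inner product is identically zero, so it cannot produce the $TGP_T$ term. What you need, as in \cref{lem:dynamic-part-2}, is convexity of $F_t$ at the shrunk comparator: $F_t(\vec u^\gamma_{t+1})-F_t(\vec u^\gamma_t)\le\|\nabla F_t(\vec u^\gamma_{t+1})\|\,\|\vec u^\gamma_{t+1}-\vec u^\gamma_t\|$. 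Optimality of the lazy minimizer is used only in the stability term, to get $\nabla\Phi_t(\tilde{\vec x}_t)=\nabla f_t(\tilde{\vec x}_t)$. The evaluation point is essential. The barrier bound $\|\nabla b_\gamma\|\le CG_c/\gamma$ holds only on $\mathcal X^\gamma$, because $s_c((1-\gamma)\vec x)\ge\gamma$ by concavity and $s_c(\vec 0)=1$; by contrast, $\tilde{\vec x}_{t+1}$ can approach $\partial\mathcal X$. The linear parts $\ip{\vec p}{\vec u^\gamma_{t+1}-\vec u^\gamma_t}$ telescope and merge with the initial term into $\ip{\vec p}{\vec u^\gamma_T-\tilde{\vec x}_1}$. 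Since $\vec u^\gamma_T$ is deterministic and $\vec p$ is centered, its expectation is at most $R\,\E\|\vec p\|_2\le R\sqrt{2d}\,\mu$. This needs the Euclidean norm, as $\E\|\vec p\|_1=d\mu$. Finally, the factor $\frac12$ in $G^2T/(2\sigma)$ requires keeping the stability term in the form $\Phi_t(\tilde{\vec x}_t)-\Phi_t(\tilde{\vec x}_{t+1})\le G\delta_t-\frac\sigma2\delta_t^2$, with $\delta_t=\|\tilde{\vec x}_{t+1}-\tilde{\vec x}_t\|$, as the paper does. The $f_t(\tilde{\vec x}_t)-f_t(\tilde{\vec x}_{t+1})$ form you wrote only gives $G^2/\sigma$ per round.
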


Specializing the above result to a \emph{fixed} comparator on a length-$k$ block
and optimizing $(\gamma,\sigma,\mu)$ accordingly yields an
$\mathcal{O}(C_{\mathrm{base}}\sqrt{k})$ static-regret bound with indicator switching cost,
which is a per-block primitive used throughout the multiscale analysis.

\begin{corollary}[Static regret on a block]
\label{cor:static-block}
Consider \(\mathrm{FPRLL}\), and let
$I=\{(j-1)k+1,\dots,jk\}$
be any full restart block. On this block, the algorithm is restarted and run with
$\gamma=\frac{1}{\sqrt{k}},
\mu^\star_k=\sqrt{\frac{\lambda Gk}{R\sqrt 2}},
\sigma^\star_k=\frac{\sqrt{(G^2+2\lambda\beta d)\,k}}{R}.$
Then, for every fixed comparator \(\vec u\in\X\),
\begin{align*}
    \expec{}{\Rind_I(\vec u)} &\doteq 
    \mathbb E\Big[
    \sum_{t\in I} \bigl(f_t(\vec x_t)-f_t(\vec u)\bigr)
        +\lambda\sum_{t=(j-1)k+1}^{jk} \mathbf 1\{x_t\neq x_{t-1}\}
    \Big]
    \\
    &\le
    R\sqrt{(G^2+2\lambda\beta d)\,k}
    +3\sqrt{\lambda dRG\,k}
    +GR\sqrt{k}
    +\lambda+1 = C_{\mathrm{base}}\sqrt{k},
\end{align*}
where $C_{\mathrm{base}}
\doteq
R\sqrt{G^2+2\lambda\beta d}
+3\sqrt{\lambda dRG}
+GR+\lambda+1.$
\end{corollary}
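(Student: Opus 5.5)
We obtain the corollary by applying \Cref{thm:agnostic-dynamic} to the restarted run on the block $I$, with horizon $k$ in place of $T$ and a constant comparator. The algorithm is restarted at $t=(j-1)k+1$, so on $I$ it is exactly a fresh FPRLL instance with horizon $k$. Its cumulative function is $F_t=f_0+\sum_{\tau=(j-1)k+1}^{t} f_\tau$, and the parameters are $\gamma=1/\sqrt k$, $\mu^\star_k$, $\sigma^\star_k$. The only difference from the setting of \Cref{thm:agnostic-dynamic} is the switching term at the block boundary. The sum $\sum_{t=(j-1)k+1}^{jk}\mathbf 1\{x_t\neq x_{t-1}\}$ includes the indicator at $t=(j-1)k+1$, which compares the first action of the new instance with the last action of the previous block. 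The theorem does not account for this term. We bound it crudely by $\lambda\cdot 1$, using $\lambda_t\le\lambda$ from \Cref{assump:switching}. This is the source of the additive $+\lambda$ in the statement.

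For the within-block part, take $\vec u_t\equiv\vec u$ for all $t\in I$, so $P_k=0$. In the tuned bound of \Cref{thm:agnostic-dynamic} with $T\to k$, every $P_T$-dependent term then vanishes: $P_T\sqrt{(G^2+2\lambda\beta d)k}$, $kGP_T$, and $CG_cP_T\sqrt k$. What remains is
\[
R\sqrt{(G^2+2\lambda\beta d)\,k}+3\sqrt{\lambda dRGk}+GR\sqrt k+1 .
\]
Adding the boundary term gives exactly the displayed inequality. We also need to check that the theorem's choices, with $T$ replaced by $k$, agree with $\mu^\star_k$ and $\sigma^\star_k$. They agree because $\sigma^\star=\sqrt{(G^2+2\lambda\beta d)T/R^2}$ equals $\sqrt{(G^2+2\lambda\beta d)k}/R$ when $T=k$. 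The parameters are therefore admissible, and the theorem applies verbatim. Within the block the theorem charges the time-varying $\lambda_t$, and these are again dominated by $\lambda$.

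Finally, since $k\ge1$ we have $\lambda+1\le(\lambda+1)\sqrt k$. Factoring $\sqrt k$ out of every term gives the bound $C_{\mathrm{base}}\sqrt k$ with the stated constant. Read literally, the "$=$" in the statement should be "$\le$".

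The main point to be careful about is whether \Cref{thm:agnostic-dynamic} really transfers to a restarted block. Its proof presumably depends only on the losses observed since the last restart, the initial draw from $f_0$, and horizon-dependent quantities through $\gamma$ and $T$; all of these are replaced consistently by their block counterparts. The other delicate step is the accounting of the first-round switch within the block. That switch depends on the previous block's state, and we charge it the worst-case cost $\lambda$ rather than analyzing it through coupling.
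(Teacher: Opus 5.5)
Your proposal is correct and takes the route the paper intends; the paper gives no separate proof and only says the corollary follows by ``specializing'' \Cref{thm:agnostic-dynamic}. You apply the tuned bound with horizon $k$ and $P_T=0$, which leaves $R\sqrt{(G^2+2\lambda\beta d)k}+3\sqrt{\lambda dRGk}+GR\sqrt{k}+1$, and charge the single boundary switch at $t=(j-1)k+1$ at most $\lambda$, consistent with the $\lambda(m-1)$ restart term in the proof of \Cref{thm:dyadic-restart}.

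One small nit concerns the within-block switches. The corollary weights them by the constant $\lambda$ rather than $\lambda_t$, so ``$\lambda_t\le\lambda$'' points the wrong way for the left-hand side. The correct justification is that the theorem's proof directly bounds $\lambda\sum_t\|\mathcal Q_t-\mathcal Q_{t-1}\|_{\mathrm{TV}}\le\lambda k\bigl(\tfrac{\beta d}{\sigma}+\tfrac{\sqrt d\,G}{\mu}\bigr)$, which already covers the $\lambda$-weighted count.
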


For each $k$, let FPRLL$(k)$ denote the algorithm that restarts FPRLL every $k$ rounds, and let $\RindH{k}\doteq\expec{}{\Rind_T(\comseq)}$ be its expected dynamic regret against $\comseq$. The following theorem shows that, among the dyadic restart periods
$k\in\mathcal{H}$, there always exists one that achieves a balance between laziness cost and tracking cost.

\begin{theorem}[A dyadic restart scale]
\label{thm:dyadic-restart}
Assume for simplicity that \(T\) is a power of two, and let $\mathcal H \doteq \{1,2,4,\dots,T\}$.
For each $k\in\mathcal H$, FPRLL$(k)$ uses within each block the tuning
$\gamma=\frac{1}{\sqrt{k}},\
\mu^\star_k=\sqrt{\frac{\lambda Gk}{R\sqrt 2}},\
\sigma^\star_k=\frac{\sqrt{(G^2+2\lambda\beta d)\,k}}{R}.$
Define the constants
$A \doteq
C_{\mathrm{base}}+\lambda,
\
B \doteq
\sqrt{G^2+2\lambda\beta d}+CG_c.$
Then there exists a dyadic restart scale $k^\dagger\in\mathcal H$ such that
\[
\RindH{k^\dagger}_T
\;\le\;
\sqrt{2}\,
\min_{1\le k\le T}
\left(
A\frac{T}{\sqrt{k}}
+
(B+G)P_T\,k
\right).
\]
In particular, for $P_T=\bigo(T^\alpha)$, $\alpha\in[0,1)$,
$\RindH{k^\dagger} = \bigo\big(T^{2/3}P_T^{1/3}\big).$
\end{theorem}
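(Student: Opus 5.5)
The plan is to bound the dynamic regret of FPRLL$(k)$ for an arbitrary fixed $k\in\mathcal H$ by $A\,T/\sqrt{k} + (B+G)P_T\,k$ up to constants, and then argue by dyadic rounding that the best dyadic $k$ loses at most a factor $\sqrt2$ relative to the best real $k\in[1,T]$.

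\textbf{Step 1: per-block bound.} Split $[T]$ into $T/k$ blocks $I_j$ of length $k$, and let $P^{(j)}$ be the path length of $\comseq$ inside $I_j$, so that $\sum_j P^{(j)}\le P_T$. On each block the algorithm is a fresh FPRLL run with horizon $k$ and the tuning of \cref{cor:static-block}.

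\begin{itemize}[leftmargin=*]
\item Apply \cref{thm:agnostic-dynamic} with $T$ replaced by $k$, $\gamma=1/\sqrt k$, and $(\sigma,\mu)=(\sigma^\star_k,\mu^\star_k)$, against the restricted comparator $\{\vec u_t\}_{t\in I_j}$.
\item This yields the static part $C_{\mathrm{base}}\sqrt k$, as in \cref{cor:static-block}. It also yields path terms: $P^{(j)}\sqrt{(G^2+2\lambda\beta d)k}$ from the $\sigma^\star_k R\cdot P$ piece, plus $kG P^{(j)}$, plus $CG_cP^{(j)}\sqrt k$.
\item Since $\sqrt k\le k$, all path terms are at most $(\sqrt{G^2+2\lambda\beta d}+CG_c+G)\,P^{(j)}k=(B+G)P^{(j)}k$.
\item The restart at the start of each block may force one extra switch, costing at most $\lambda$. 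Absorbing it gives a per-block bound $(C_{\mathrm{base}}+\lambda)\sqrt k\le A\sqrt k$, using $\sqrt k\ge1$.
\end{itemize}

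Summing over the $T/k$ blocks gives
\[
\RindH{k}_T\le A\,\frac{T}{\sqrt k}+(B+G)P_T\,k .
\]

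\textbf{Step 2: dyadic rounding.} Let $h(k)=A\,T/\sqrt k+(B+G)P_Tk$ and let $k^\ast\in[1,T]$ be its minimizer. Take $k^\dagger=2^{\lfloor\log_2 k^\ast\rfloor}\in\mathcal H$, so that $k^\ast/2<k^\dagger\le k^\ast$.
\begin{itemize}[leftmargin=*]
\item The first term satisfies $AT/\sqrt{k^\dagger}\le\sqrt2\,AT/\sqrt{k^\ast}$.
\item The second term only decreases, since $k^\dagger\le k^\ast$.
\end{itemize}
Hence $h(k^\dagger)\le\sqrt2\,h(k^\ast)$, which is the claim.

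For the rate, set $k\asymp (T/P_T)^{2/3}$, clipped to $[1,T]$. This gives $h=\bigo(T^{2/3}P_T^{1/3})$. The condition $\alpha<1$ ensures $k\ge1$ when $P_T\le T$, and when $P_T$ is small one uses $k=T$.

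\textbf{Main obstacle.} The delicate step is Step 1: checking that \cref{thm:agnostic-dynamic}, with the block tuning, really contributes only $\bigo(P^{(j)}k)$ of path-dependent cost. In particular, the $\sigma^\star_k R P^{(j)}$ term becomes $P^{(j)}\sqrt{(G^2+2\lambda\beta d)k}$ only because $\sigma^\star_k$ carries a factor $1/R$. One must also confirm the statement applies to a horizon-$k$ subproblem, with the initial switch at the block boundary accounted for by the extra $\lambda$.
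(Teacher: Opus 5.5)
Your proposal is correct and follows the paper's proof essentially step for step. The paper also applies \cref{thm:agnostic-dynamic} blockwise with the horizon-$k$ tuning, charges at most $\lambda$ per restart boundary, uses $\sqrt{k}\le k$ to merge the path terms into $(B+G)P_T k$, and rounds the clipped minimizer $k^\star\in[1,T]$ down to a power of two, which loses only a factor $\sqrt{2}$ on the $T/\sqrt{k}$ term. For the rate, the paper spells out the three regimes. Note that your ``$k=T$ when $P_T$ is small'' branch gives $\mathcal{O}(\sqrt{T})$, and this is dominated by $T^{2/3}P_T^{1/3}$ only because the paper reads the hypothesis as $P_T=T^{\alpha}\ge 1$ with $\alpha\ge 0$.
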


We complement \cref{thm:dyadic-restart} with a matching lower bound, up to
logarithmic factors, for restarted randomized lazy learners of the form in
\cref{main-alg}. The lower bound exposes a tradeoff in the restart period
$k$ between tracking ($f_t(\vec x_t) - f_t(\vec u_t)$), and laziness costs $\ind$.

\textbf{Tracking cost.} A long restart period prevents the learner from reacting to comparator
switches that occur inside a block. The next theorem captures this through a
single oblivious distribution that is hard simultaneously for all $k$.

\begin{theorem}[Tracking lower bound]
\label{thm:dyadic-restart-lb}
Consider online linear optimization on $\mathcal{X}=[-1,1]$ with losses
$f_t(x)=g_t x$, $g_t\in\{-1,+1\}$. Let $\tau\ge 1$ and $T\ge 4\tau$.
There exists a distribution $\mathcal{D}$ over oblivious loss sequences such
that, for every integer $k\in[4,\,T/\tau]$, FPRLL$(k)$ run against
$\mathcal{D}$ satisfies
\[
\E\bigl[\Rind_T(u_{1:T})\bigr]
\;=\;
\Omega\left(\frac{k\tau}{\log(T/\tau)}\right)
\]
for the associated comparator sequence with $S_T(u_{1:T})=\tau$ and
$P_T(u_{1:T})=2\tau$. 
\end{theorem}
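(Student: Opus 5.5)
We construct a single oblivious distribution $\mathcal D$ and show that, for every $k$ in the stated range, FPRLL$(k)$ must lose $\Omega(k/\log(T/\tau))$ at each of $\tau$ comparator switches.

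\textbf{Construction of $\mathcal D$.} Split $[T]$ into $\tau$ epochs of length $\lfloor T/\tau\rfloor$. Inside epoch $j$, draw a switch time $t_j$ with a log-uniform law over dyadic offsets: pick $m$ uniformly in $\{2,\dots,\lfloor\log_2(T/\tau)\rfloor\}$, then place $t_j$ uniformly inside a window of length $2^{m}$ located near the end of the epoch. The comparator is fixed per epoch. On $u$, the losses are $g_t=-\sigma_j$ before $t_j$ and $g_t=+\sigma_j$ after $t_j$, with $\sigma_j\in\{\pm1\}$ a fresh random sign. The comparator sits at $u=\sigma_j$ before $t_j$ and at $u=-\sigma_j$ after. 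This gives $S_T\le 2\tau$ and $P_T\le 4\tau$; halving the number of epochs, or letting consecutive epochs share signs, fixes the constants exactly at $\tau$ and $2\tau$. The comparator then gains $1$ per round throughout.

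\textbf{Key step: one switch costs $\Omega(k)$.} Fix $k\in[4,T/\tau]$. The log-uniform choice of window scale puts the switch in a dyadic window of scale $\approx k$ with probability $\gtrsim 1/\log(T/\tau)$. Conditioned on that, $t_j$ is uniform in the window, so with constant probability it lands in the first half of an FPRLL$(k)$ block, and at least $k/2$ rounds of that block remain after $t_j$. Inside the block, FPRLL is lazy FTRL on the cumulative gradient since the restart. Before $t_j$ it has accumulated $\Theta(k)$ mass pushing toward $\sigma_j$, so its iterate sits near $\sigma_j$. After $t_j$ the cumulative sum $\sum_{s}g_s$ stays of the same sign until the reversal count catches up. For the fraction of the remaining rounds during which the net drift still favors $\sigma_j$, the iterate stays on the wrong side, with $\E[x_t]\sigma_j\ge c$ for a constant $c$. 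This requires showing that the barrier, the regularizer $\sigma^\star_k\|x\|^2/2$ and the Laplace noise $\mu^\star_k=\Theta(\sqrt k)$ cannot pull the minimizer toward $0$ faster than the $\Theta(k)$ cumulative signal dictates. This holds because $\sigma^\star_k=\Theta(\sqrt k)$ and the perturbation is $O(\sqrt k)$, so once the net gradient exceeds $C\sqrt k$ in magnitude the iterate is pushed to within $O(1/\sqrt k)$ of the boundary. Meanwhile the comparator earns $-1$ per round. The per-round regret is therefore $\Omega(1)$ over $\Omega(k)$ rounds. Combined with the $1/\log(T/\tau)$ probability, each epoch contributes $\Omega(k/\log(T/\tau))$ in expectation.

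\textbf{Summation and the main obstacle.} Summing over the $\tau$ epochs gives $\Omega(k\tau/\log(T/\tau))$. The condition $k\le T/\tau$ ensures that each epoch contains a full block at scale $k$, and $k\ge 4$ keeps the constants positive. The switching costs are nonnegative and can be dropped. Outside the critical rounds, regret is nonnegative in expectation, because the comparator is optimal on each constant segment. This last claim needs care at epoch boundaries; the random signs $\sigma_j$ make the learner's carried-over state uninformative, so its expected loss there is $0$ while the comparator's is $-1$.

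The main obstacle is the quantitative step showing the FPRLL iterate stays on the wrong side for $\Omega(k)$ post-switch rounds. This needs an explicit one-dimensional analysis of $\argmin_x\{\sigma x^2/2+b_\gamma(x)+(\sum g_s+p)x\}$ with Laplace $p$, bounding $\Pr(\sigma_j x_t>1/2)$ from below whenever $\sigma_j\sum_s g_s\lesssim -C\sqrt k$. I would handle it with the first-order condition on $[-1,1]$, using $b_\gamma'(x)\le O(1/(1-|x|))/M_\gamma$.
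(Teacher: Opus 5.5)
The gap is the step you flag as the main obstacle, and the route you propose for it does not cover the whole range of $k$. Your plan only certifies post-switch rounds in which the in-block net gradient still exceeds $C\sqrt{k}$. If $a\le k/2$ rounds were accumulated since the restart, that is roughly $a-C\sqrt{k}$ rounds. This is vacuous for $k$ below an absolute constant, and it still needs Laplace-tail and barrier-scaling estimates, so ``$k\ge 4$ keeps the constants positive'' is unsupported. The missing idea is the sign-symmetry argument in Step~2 of the proof of \cref{thm:lb-FPRLL-dyn-robust-sigmat}. The first-order map $x\mapsto \sigma x+b_\gamma'(x)$ is odd and strictly increasing, and the perturbation is symmetric. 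Hence $\mathbb E[x_t]=-m(G_{t-1})$ with $m$ odd and increasing, where $G_{t-1}$ is the in-block cumulative gradient. So $\mathbb E[x_t]$ has the sign opposite to $G_{t-1}$ for any $\sigma,\mu,M_\gamma$, and lazy sampling preserves these marginals. Your comparator satisfies $u_t=-g_t$, so the per-round regret $1+g_tx_t$ is nonnegative pathwise, which makes your epoch-boundary discussion unnecessary. Moreover, that regret is at least $1$ in expectation in every post-switch round whose in-block cumulative gradient still favors the old comparator or vanishes. This gives at least $\min\{a,\text{post-switch rounds before the next restart or switch}\}$ units of regret, for every $k\ge 1$, with no quantitative analysis.

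The bookkeeping also needs repair.
\begin{itemize}
\item Landing in the first half of a block does not give $\Theta(k)$ accumulated bias. You need the in-block position of $t_j$ in a middle range such as $[k/8,k/4]$.
\item The claim that $k\le T/\tau$ puts a full block in every epoch is false when the restart grid is misaligned with the epochs. For $k=\lfloor T/\tau\rfloor-1$, only an $O(1/k)$ fraction of epochs contain one. So the block containing $t_j$ may start in the previous epoch, and only the \emph{net} in-block bias counts.
\item Fresh random signs create a comparator switch at about half of the epoch boundaries, and sharing signs creates one at every boundary. Exact $S_T=\tau$, $P_T=2\tau$ requires alternating signs $\sigma_{j+1}=-\sigma_j$. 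This also makes the carried-over gradient point the right way.
\end{itemize}

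By contrast, the paper conditions on a random dyadic gadget half-length $h^\star\approx\eta k$, so that its $\tau$ alternating gadgets of length $2h^\star<k$ are mostly uncut by restarts. Inside each uncut gadget it pairs rounds $i$ and $2h^\star-i+1$, which gives regret $\ge h^\star$ for an \emph{arbitrary} incoming state $G_0$. So the paper needs no alignment or carry-over analysis; the price is matching $h$ to $k$, which is where the $1/\log(T/\tau)$ comes from. Your route uses the restart itself to create the bias, so it needs alignment bookkeeping but not scale matching. Once the sign argument is in place, your log-uniform window seems unnecessary. A switch time uniform over each epoch already has near-uniform position modulo every $k\le\lfloor T/\tau\rfloor$, which would appear to give $\Omega(k\tau)$.
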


Thus the tracking term is $\Omega(k\tau/\log(T/\tau))$, which worsens with
$k$.

\textbf{Laziness cost.}
A short restart period forces repeated starts from scratch.
Adapting \citep[Thm.~4]{shermanArxiv}, on a length-$k$ block, each algorithm
that makes $O(\sqrt{k})$ expected switches must incur
$\Omega(\sqrt{k})$ expected regret against a fixed comparator.
Replaying this hard
instance over the $T/k$ blocks gives $\Omega(T/\sqrt{k})$ regret
against a stationary comparator with $S_T\!=\!P_T\!=\!0$; see
\cref{lem:sherman-branch}. This term improves with $k$.

For every fixed $k$, mixing the tracking adversary with the
laziness adversary tuned to that scale gives
\begin{equation}
\E\bigl[\Rind_T\bigr]
\;=\;
\Omega\left(
\frac{k\tau}{\log(T/\tau)}
+
\frac{T}{\sqrt{k}}
\right)
\tag*{(\cref{lem:combined-lb})}.
\end{equation}

\section{Meta-learning framework}
\label{sec:meta}
We now construct a meta-learner on top of a number of restarted base
experts. First, we invoke a geometric cover fact that controls how
intervals decompose across dyadic scales. Second, we model each scale
$H\in\mathcal H$ as an expert that restarts every $H$ rounds. Third, we
attach to each expert a surrogate loss that combines its expected loss
with its switching cost, aggregate the experts into a mixture
$\mathcal P_t$, and sample via lazy coupling of the mixtures, reducing
the master to online linear optimization on the simplex with an $\ell_1$
switching penalty. Finally, we solve this reduced problem with the
strongly adaptive algorithm of \citep{daniely2019competitive}.

\textbf{Dyadic schedule and a geometric cover.}
For
$j=0,1,\dots,\lfloor \log_2 T\rfloor$, define the family of level-$j$
dyadic intervals
$
\mathcal G_j
\doteq
\bigl\{[(i-1)2^j+1,\, i2^j] : i\in\mathbb N,\ i2^j\le T\bigr\}$, and collect
\[
\mathcal G \doteq \bigcup_{j=0}^{\lfloor \log_2 T\rfloor}\mathcal G_j,
\qquad
\mathcal H \doteq \{2^j : 0\le j\le \lfloor \log_2 T\rfloor\},
\qquad
K \doteq |\mathcal H| = \lfloor \log_2 T\rfloor+1.
\]
Thus $\mathcal G$ is the set of all dyadic intervals contained in $[T]$, and
$\mathcal H$ is the set of dyadic \emph{lengths}. The following standard
combinatorial fact shows that every
interval in $[T]$ admits a short dyadic cover whose lengths sum to a term of the order $\sqrt{L}$. 

\begin{lemma}[Dyadic geometric cover]
\label{lem:dyadic-cover}
For every interval $I=[s,e]\subseteq[T]$ of length $L=e-s+1$, there exist
consecutive dyadic intervals $J_1,\dots,J_m\in\mathcal G$ partitioning $I$
such that
\begin{equation}
\label{eq:dyadic-cover}
m \le 2\lceil \log_2 L\rceil + 2
\qquad\text{and}\qquad
\sum_{r=1}^m \sqrt{|J_r|} \le C_{\mathrm{gc}}\sqrt{L},
\qquad
C_{\mathrm{gc}} \doteq 2+\sqrt{2}.
\end{equation}
\end{lemma}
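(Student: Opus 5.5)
We use the standard greedy dyadic decomposition from the segment-tree / geometric-cover literature. Let $I=[s,e]$ with $L=e-s+1$, and let $j^\star$ be the largest level such that $I$ contains a full level-$j^\star$ dyadic interval. Since $I$ must contain such an interval of length $2^{j^\star}$, we have $2^{j^\star}\le L$. Maximality also shows that $I$ cannot contain two consecutive aligned level-$j^\star$ intervals whose union is a level-$(j^\star+1)$ dyadic interval. Hence $I$ contains at most two level-$j^\star$ intervals, and if it contains two, they straddle a boundary of level $j^\star+1$.

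Fix the contained level-$j^\star$ interval(s) as the core. What remains is a left part $[s,a-1]$ and a right part $[b+1,e]$, each of length less than $2^{j^\star}$ (otherwise a further level-$j^\star$ block would fit). Here $a$ is aligned to a multiple of $2^{j^\star}$, and $b$ is one less than such a multiple.

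\textbf{Right part.} Write its length $\ell<2^{j^\star}$ in binary and peel off dyadic blocks greedily from left to right, largest first. Because $b+1$ is aligned to $2^{j^\star}$, after placing a block of length $2^{i}$ the next start point is aligned to every smaller power of two. So each piece is a genuine element of $\mathcal G$, the pieces have strictly decreasing distinct lengths $2^{i}$ with $i<j^\star$, and there is at most one piece per level.

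\textbf{Left part.} The same argument applies mirrored, peeling from $a$ leftwards.

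\textbf{Counting.} Each side uses at most $j^\star$ pieces, one per level $0,\dots,j^\star-1$, and the core contributes at most two. This gives $m\le 2j^\star+2\le 2\lfloor\log_2 L\rfloor+2\le 2\lceil\log_2 L\rceil+2$.

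\textbf{Square-root sum.} Since the lengths on each side are distinct powers of two, each below $2^{j^\star}$, each side contributes at most
\[
\sum_{i=0}^{j^\star-1}2^{i/2}=\frac{2^{j^\star/2}-1}{\sqrt2-1}\le(\sqrt2+1)\,2^{j^\star/2}.
\]
The core contributes at most $2\cdot 2^{j^\star/2}$. The total is therefore at most $(2\sqrt2+4)\,2^{j^\star/2}$. This constant is too big if we only use $2^{j^\star}\le L$, so the accounting must be sharpened; this is the main obstacle.

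We sharpen by splitting into cases on the size of the core.

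\emph{Two core blocks.} Then $L\ge 2^{j^\star+1}$, so $2^{j^\star/2}\le\sqrt{L/2}$. The bound becomes $(2\sqrt2+4)\sqrt{L/2}=(2+2\sqrt2)\sqrt L$, which still exceeds $C_{\mathrm{gc}}=2+\sqrt2$.

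\emph{Refinement.} A finer estimate uses the actual side lengths. Bound each side's sum by comparing it with a geometric series in its own largest block $2^{i_{\max}}\le\ell$, giving at most $(\sqrt2+1)\sqrt{\ell_{\mathrm{left}}}$ and $(\sqrt2+1)\sqrt{\ell_{\mathrm{right}}}$. The core has total length $c$ and contributes $\le\sqrt{2c}$. Then Cauchy–Schwarz over the three parts, with $\ell_{\mathrm{left}}+\ell_{\mathrm{right}}+c=L$, gives
\[
\sum_r\sqrt{|J_r|}\le\sqrt{\,2(\sqrt2+1)^2+2\,}\,\sqrt L.
\]
This is still loose, so we expect the tight constant $2+\sqrt2$ to need a slightly different organisation.

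\emph{Alternative organisation.} Group the pieces by level. At each level $i<j^\star$ there are at most two pieces (one per side), and at level $j^\star$ at most two. Let $n_i\le 2$ be the number of pieces at level $i$. Then
\[
\sum_r\sqrt{|J_r|}=\sum_i n_i2^{i/2}
\quad\text{while}\quad
L=\sum_i n_i2^i .
\]
One checks that the ratio $\sum_i n_i2^{i/2}\big/\sqrt{\sum_i n_i2^i}$ is maximised essentially by geometric profiles. Carrying out this optimisation, using $\sum_{i\le j}2\cdot 2^{i/2}\le 2(2+\sqrt2)2^{j/2}\cdot\tfrac{1}{\sqrt2}$ together with $L\ge 2^{j^\star}$ when the core has one block, is where $C_{\mathrm{gc}}=2+\sqrt2$ should emerge. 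We would verify this final constant computation last, since the structural part of the argument (partition, at most two pieces per level, $m$ bound) is routine.
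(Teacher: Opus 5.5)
Your structural construction is sound. Take the largest level $j^\star$ with a dyadic block inside $I$, form a core of one or two aligned level-$j^\star$ blocks, and peel the two remainders greedily in binary. This does produce consecutive elements of $\mathcal G$ partitioning $I$, with $m\le 2j^\star+2\le 2\lceil\log_2 L\rceil+2$, and it is a valid self-contained substitute for the paper's appeal to Orabona's geometric-covering lemma.

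The gap is the second inequality: you never prove $\sum_r\sqrt{|J_r|}\le(2+\sqrt2)\sqrt L$.
\begin{itemize}
\item The constants you actually obtain are $4+2\sqrt2$, then $2+2\sqrt2$ (two-block case only), then $\sqrt{8+4\sqrt2}\approx 3.70$. All exceed $2+\sqrt2\approx 3.41$, and the last step is left as an unverified optimisation.
\item The inequality you display in the ``alternative organisation'', $\sum_{i=0}^{j}2\cdot 2^{i/2}\le 2(2+\sqrt2)2^{j/2}/\sqrt2$, is false for $j\ge 3$. Its left side equals $2(1+\sqrt2)(\sqrt2\,2^{j/2}-1)$ and its right side equals $2(1+\sqrt2)\,2^{j/2}$.
\item Your per-side bound $(1+\sqrt2)\sqrt{\ell}$ is true, but it does not follow from comparing with a geometric series in the largest block $2^{i_{\max}}\le\ell$. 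That comparison only gives $(1+\sqrt2)(2^{(i_{\max}+1)/2}-1)\le(2+\sqrt2)\sqrt{\ell}$.
\end{itemize}

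The missing idea is to stop treating the core as a separate third part. Assign the core blocks to the sides: one to each side if there are two, to either side if there is one. Each side group is then a set of distinct powers of two. Sorted decreasingly as $a_0>a_1>\cdots$, it satisfies $a_{i+1}\le a_i/2$. The tail sums $S_i\doteq\sum_{j\ge i}a_j$ therefore obey $S_{i+1}\le a_i$ and $S_i\le 2a_i$, so $\sqrt{S_i}+\sqrt{S_{i+1}}\le(1+\sqrt2)\sqrt{a_i}$. Hence $\sqrt{a_i}\le(1+\sqrt2)\bigl(\sqrt{S_i}-\sqrt{S_{i+1}}\bigr)$, which telescopes to $(1+\sqrt2)\sqrt{A}$ for a group of total length $A$. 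With two groups of lengths $A+B=L$, the inequality $\sqrt A+\sqrt B\le\sqrt{2L}$ gives exactly $(1+\sqrt2)\sqrt2=2+\sqrt2$.

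This is the paper's argument: its left chain $J_{-k},\dots,J_0$ and right chain $J_1,\dots,J_p$ are your two sides with the core distributed between them. Your per-level view with $n_i\le 2$ becomes rigorous the same way, by splitting the at most two pieces at each level into two collections of distinct powers of two and applying the telescoping bound to each.
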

\textbf{Restarted FPRLL base at each dyadic scale.}
For each dyadic scale $H\in\mathcal H$, we define a base expert
$\mathcal E_H$ that runs a fresh FPRLL$(H)$ instance on every
consecutive block of length $H$, restarting at the beginning of each new
block (\cref{cor:static-block}). Let $\mathcal Q_t^{(H)}$ denote the
marginal law of $\mathcal E_H$'s action at round $t$. We attach to each
expert its instantaneous expected loss, TV movement, and surrogate loss:
\begin{equation}
\label{eq:expert-losses}
\ell_t^{(H)} \doteq \E_{\vec x\sim \mathcal Q_t^{(H)}}[f_t(\vec x)],
\qquad
c_t^{(H)} \doteq \|\mathcal Q_t^{(H)}-\mathcal Q_{t-1}^{(H)}\|_{\mathrm{TV}},
\qquad
g_t^{(H)} \doteq \ell_t^{(H)} + \lambda\, c_t^{(H)},
\end{equation}
with the convention $c_1^{(H)}\doteq 0$. We state the main analysis using the exact quantities
\(\ell_t^{(H)}\) and \(c_t^{(H)}\) for clarity. In practice, these
quantities need not be available in closed form; a practical implementation can
therefore feed the meta-learner a sampled surrogate whose conditional
mean is an easily computable upper bound on \(g_t^{(H)}\). Details are
deferred to \cref{subsec:hat-surrogate}.

The surrogate $g_t^{(H)}$ is the natural expert loss for our problem: it
already prices in the switching cost that the expert would incur if the
master followed it exclusively. Indeed, if actions were sampled directly
from $\mathcal Q_t^{(H)}$ under the lazy coupling of
\cref{lem:lazy-sampling}, then $\Pr(\vec x_t\neq \vec x_{t-1})=c_t^{(H)}$,
and therefore for every comparator $\comseq\in\X^T$,
\begin{equation}
\label{eq:expert-regret-as-surrogate}
\E\left[\sum_{t=1}^T f_t(\vec x_t)+\lambda\sum_{t=2}^T
\mathbf 1\{\vec x_t\neq \vec x_{t-1}\}\right]
-\sum_{t=1}^T f_t(\vec u_t)
=
\sum_{t=1}^T \bigl(g_t^{(H)}-f_t(\vec u_t)\bigr) = \RindH{H}_T.
\end{equation}
Hence a meta-learner that competes with the best cumulative surrogate
automatically inherits both the prediction \emph{and} the switching
guarantees of the corresponding expert.

\textbf{Density-mixture master and the reduction.}
At round $t$, the meta-learner plays a weight vector
$
\vec v_t \in \Delta_K
\doteq
\Bigl\{\vec v\in\mathbb R_+^{\mathcal H}
: \textstyle\sum_{H\in\mathcal H} v_H = 1\Bigr\},$
and forms the mixture marginal as in \cref{eq:master-mixture}
The played action $\vec x_t$ is then drawn from $\mathcal P_t$ via the lazy
coupling with $\mathcal P_{t-1}$, which guarantees
$\vec x_t\sim\mathcal P_t$ and
$\Pr(\vec x_t\neq \vec x_{t-1})=\|\mathcal P_t-\mathcal P_{t-1}\|_{\mathrm{TV}}$.
The full procedure is summarized in \cref{alg:master} below.

\begin{algorithm}[h]
\caption{Density-mixture master (instantiation of the reduction)}
\label{alg:master}
\begin{algorithmic}[1]
\Require base experts $\{\mathcal E_H\}_{H\in\mathcal H}$; meta-learner
         $\mathcal M$ satisfying \cref{lem:dm-imported} with
         $N=K$ experts and switching cost $D=\lambda/M$
\Ensure actions $\vec x_1,\dots,\vec x_T$
\State Sample $\vec x_1\sim \mathcal P_1$
\For{$t=1,2,\dots,T$}
  \State Play $\vec x_t$, observe $f_t$
  \State For each $H\in\mathcal H$: update $\mathcal E_H$ to obtain
         $\mathcal Q_{t+1}^{(H)}$
         \Comment{restarted FPRLL$(H)$}
  \State Form surrogate losses
         $g_t^{(H)} = \E_{\mathcal Q_t^{(H)}}[f_t]
                      + \lambda\,\|\mathcal Q_t^{(H)}-\mathcal Q_{t-1}^{(H)}\|_{\mathrm{TV}}$
  \State Feed $\vec\ell_t \doteq \vec g_t / M \in [0,1]^K$ to $\mathcal M$;
         receive $\vec v_{t+1}\in\Delta_K$
         \Comment{\cref{alg:dm-multiscale}}
  \State Form mixture
         $\mathcal P_{t+1}\gets\sum_{H\in\mathcal H} v_{t+1,H}\,\mathcal Q_{t+1}^{(H)}$
  \State $\vec x_{t+1}\gets \textsc{LazySample}(\vec x_t,\mathcal P_t,
         \mathcal P_{t+1})$
         \Comment{\cref{alg:lazy-sample}}
\EndFor
\end{algorithmic}
\end{algorithm}

\cref{alg:master} is also efficient. The meta-learner of
\citet{daniely2019competitive} has per-round complexity \(\mathcal{O}(K\log T)\)
for \(K\) experts, and in our construction \(K=|\mathcal H|=\mathcal{O}(\log T)\).
Thus the meta-level overhead is \(\mathcal{O}(\log^2 T)\) per round. The remaining
cost is that of updating the \(\mathcal{O}(\log T)\) base learners, each a copy of
FPRLL. Although \cref{alg:master} writes the exact surrogate
losses for ease of presentation, in implementation we feed the
meta-learner the sampled surrogate losses described in
\cref{subsec:hat-surrogate}. Thus the per-round cost is the cost of
\(\mathcal{O}(\log T)\) FPRLL updates, plus an additional
\(\mathcal{O}(\log^2 T)\) meta-level overhead and the cost of computing the
implementable surrogate losses.

The next lemma is the structural backbone of the analysis: it decomposes
the switching probability of the master into an \emph{expert-movement} term
(the average TV movement of the dyadic coordinates) and a
\emph{weight-movement} term (the $\ell_1$ movement of the meta weights).

\begin{lemma}[One-step master reduction]
\label{lem:one-step-master-dyadic}
Let $\vec g_t \doteq (g_t^{(H)})_{H\in\mathcal H}$. For every $t\ge 2$,
\begin{equation}
\label{eq:mixture-tv-dyadic}
\|{\mathcal P}_t-{\mathcal P}_{t-1}\|_{\mathrm{TV}}
\;\le\;
\sum_{H\in\mathcal H} v_{t,H}\, c_t^{(H)}
+
\tfrac12\|\vec v_t-\vec v_{t-1}\|_1,
\end{equation}
\vspace{-4mm}
\begin{align}
\label{eq:one-step-master-dyadic}
\text{and consequently} \qquad \E[f_t(\vec x_t)] + \lambda \Pr(\vec x_t\neq \vec x_{t-1})
\;\le\;
\langle \vec v_t, \vec g_t\rangle
+
\tfrac{\lambda}{2}\|\vec v_t-\vec v_{t-1}\|_1.
\end{align}
\end{lemma}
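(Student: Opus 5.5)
The plan is to split the mixture difference with an add-and-subtract step, bound each part with the triangle inequality, and then use the properties of the lazy coupling.

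\textbf{Step 1: decomposition.} For $t\ge 2$, I would write
\[
\mathcal P_t-\mathcal P_{t-1}
=\sum_{H\in\mathcal H} v_{t,H}\bigl(\mathcal Q_t^{(H)}-\mathcal Q_{t-1}^{(H)}\bigr)
+\sum_{H\in\mathcal H}\bigl(v_{t,H}-v_{t-1,H}\bigr)\mathcal Q_{t-1}^{(H)} .
\]
The total-variation norm is half the $L_1$ norm of the density difference, so it is a seminorm on signed measures. The triangle inequality and the nonnegativity of $v_{t,H}$ bound the first sum by $\sum_H v_{t,H}\,c_t^{(H)}$.

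\textbf{Step 2: the weight-movement term.} For the second sum I would use
\[
\tfrac12\int\Bigl|\sum_H (v_{t,H}-v_{t-1,H})\,\mathcal Q_{t-1}^{(H)}\Bigr|
\;\le\;\tfrac12\sum_H |v_{t,H}-v_{t-1,H}|\int \mathcal Q_{t-1}^{(H)} .
\]
Each $\mathcal Q_{t-1}^{(H)}$ is a probability density and integrates to $1$, so this term equals $\tfrac12\|\vec v_t-\vec v_{t-1}\|_1$. Adding the two bounds gives \eqref{eq:mixture-tv-dyadic}.

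One subtlety must be addressed. At a restart round for scale $H$, the law $\mathcal Q_t^{(H)}$ is that of a fresh instance. The quantity $c_t^{(H)}$ is still defined as the TV distance between the consecutive marginals, so the decomposition is unaffected; I only need consistent conventions. At $t=1$ the convention $c_1^{(H)}=0$ applies, and no switching cost is charged there.

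\textbf{Step 3: the cost bound \eqref{eq:one-step-master-dyadic}.} Since $\vec x_t\sim\mathcal P_t$ marginally under the lazy coupling, linearity of the mixture gives
\[
\E[f_t(\vec x_t)]=\sum_H v_{t,H}\,\ell_t^{(H)} .
\]
The subtlety is that $\vec v_t$ depends only on $f_1,\dots,f_{t-1}$. Under oblivious losses it is deterministic given the past, so the expectation is conditional on the history and the outer expectation is taken afterwards. By \cref{lem:lazy-sampling}, $\Pr(\vec x_t\neq\vec x_{t-1})=\|\mathcal P_t-\mathcal P_{t-1}\|_{\mathrm{TV}}$. Multiplying \eqref{eq:mixture-tv-dyadic} by $\lambda$ and adding $\sum_H v_{t,H}\ell_t^{(H)}$ then yields
\[
\sum_H v_{t,H}\bigl(\ell_t^{(H)}+\lambda c_t^{(H)}\bigr)+\tfrac{\lambda}{2}\|\vec v_t-\vec v_{t-1}\|_1
=\langle \vec v_t,\vec g_t\rangle+\tfrac{\lambda}{2}\|\vec v_t-\vec v_{t-1}\|_1 .
\]
The left-hand side of \eqref{eq:one-step-master-dyadic} carries $\lambda_t\le\lambda$ when the actual cost $\lambda_t$ is used, so the bound with $\lambda$ still holds.

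\textbf{Main obstacle.} I expect the only delicate point to be measurability and conditioning. The coupling must be applied to the mixtures $\mathcal P_t$, not to the experts' individual samples. The base experts are treated as density oracles, so their $\mathcal Q_t^{(H)}$ are deterministic functions of past losses (with $\vec v_t$ as well, under an oblivious adversary). With that settled, everything reduces to the triangle inequality.
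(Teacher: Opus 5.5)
Your proof is correct and follows the paper's argument. It uses the same add-and-subtract decomposition of $\mathcal P_t-\mathcal P_{t-1}$ and the triangle inequality for the expert-movement term, then combines the mixture identity $\E[f_t(\vec x_t)]=\langle \vec v_t,\vec \ell_t\rangle$ with the lazy-coupling identity $\Pr(\vec x_t\neq\vec x_{t-1})=\|\mathcal P_t-\mathcal P_{t-1}\|_{\mathrm{TV}}$. The only local difference is the weight-movement term: the paper uses $\sum_H (v_{t,H}-v_{t-1,H})=0$ to write it as $\alpha(\mu_+-\mu_-)$ with $\alpha=\tfrac12\|\vec v_t-\vec v_{t-1}\|_1$, whereas your direct triangle inequality gets the same constant without the zero-sum property. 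That shortcut relies on the half-$L_1$ definition of total variation, which agrees with the usual one for the probability measures in this lemma.
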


The interpretation of \cref{eq:one-step-master-dyadic} is immediate: the
meta-learner sees expert losses $\vec g_t$ that already price in each
expert's own TV movement, and pays only for its own weight movement in
$\ell_1$. We have therefore reduced the master problem to an
\emph{experts-with-switching-costs} problem over the $K=\mathcal{O}(\log T)$ dyadic
scales, with switching-cost parameter $\lambda/2$.

\textbf{Strongly adaptive guarantee for the meta-learner.}
To realize the reduction, we feed the surrogate losses $\{\vec g_t\}$ into
the strongly adaptive algorithm of \citep{daniely2019competitive}, which competes with the best expert on \emph{every} interval while controlling switches among experts. Its guarantees require a bounded surrogate range, which is ensured by \cref{assump:bounded-function} and \cref{assump:switching}. Hence, $g_t^{(H)} \le M_f +\lambda \doteq M$.

\begin{lemma}[Meta-regret]

\label{lem:meta-dyadic}
There exists an online algorithm producing weights $\vec v_t\!\in\!\Delta_K$ such that for every interval $[s,e]\subseteq[T]$ of length $L\!=\!e\!-s\!+1$, there is a universal constant $C_{\mathrm{dm}}>0$ for which:
	\begin{equation}
		\label{eq:meta-dyadic}
		\sum_{t=s}^{e}\langle \vec v_t,\vec g_t\rangle
		+
		\tfrac{\lambda}{2}\sum_{t=s+1}^{e}\|\vec v_t-\vec v_{t-1}\|_1
		\;\le\;
		\min_{H\in\mathcal H}\sum_{t=s}^{e} g_t^{(H)}
		+ C_{\mathrm{dm}}\sqrt{M(M+\lambda)\,L\log(KT)},
	\end{equation}
\end{lemma}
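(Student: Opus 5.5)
The plan is to import the strongly adaptive experts-with-switching-costs guarantee of \citet{daniely2019competitive} (the result referred to as \cref{lem:dm-imported} in \cref{alg:master}) and apply it to the rescaled surrogate losses. Throughout, $M$ denotes the loss bound $M_f+\lambda$; \cref{alg:master}'s switching-cost parameter, also written $D$ there, is unrelated to the diameter.

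\textbf{Step 1: normalize the losses.} By \cref{assump:bounded-function} and \cref{assump:switching}, each surrogate satisfies $0\le g_t^{(H)}\le M_f+\lambda = M$, since $c_t^{(H)}\le 1$ as a TV distance. Hence $\vec\ell_t=\vec g_t/M\in[0,1]^K$ is a legitimate input for the discounted-normal meta-learner. I would recall its guarantee in the following form: for $N$ experts with losses in $[0,1]$ and a per-unit-$\ell_1$-movement switching cost $D'$, for every interval $[s,e]$ of length $L$ and every expert $i$,
\[
\sum_{t=s}^e\langle \vec v_t,\vec\ell_t\rangle + D'\sum_{t=s+1}^e\|\vec v_t-\vec v_{t-1}\|_1 \le \sum_{t=s}^e \ell_{t,i} + c\sqrt{(1+D')\,L\log(NT)}
\]
for a universal constant $c$. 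Their statement is phrased for a switching penalty counted via $\tfrac12\|\cdot\|_1$, or via expected switches under maximal coupling; these differ from the form above only by a factor of $2$, which I absorb into $c$.

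\textbf{Step 2: rescale.} Instantiate the guarantee with $N=K$ and $D'=\lambda/(2M)$. Multiplying the inequality by $M$ gives
\[
\sum_{t=s}^e\langle\vec v_t,\vec g_t\rangle+\tfrac{\lambda}{2}\sum_{t=s+1}^e\|\vec v_t-\vec v_{t-1}\|_1\le \sum_{t=s}^e g^{(H)}_t + cM\sqrt{\bigl(1+\tfrac{\lambda}{2M}\bigr)L\log(KT)}.
\]
Taking $H$ to be the minimizer yields the $\min_{H\in\mathcal H}$ on the right. The remainder term simplifies as
\[
M\sqrt{1+\lambda/(2M)} = \sqrt{M^2+M\lambda/2} \le \sqrt{M(M+\lambda)},
\]
so \eqref{eq:meta-dyadic} holds with $C_{\mathrm{dm}}=c$.

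\textbf{Step 3: check that the weights are admissible.} The weights are produced online and depend only on past surrogate vectors. The surrogates are deterministic functions of the oblivious loss sequence, since $\mathcal Q^{(H)}_t$ are the marginal laws. So the adversarial-experts guarantee applies pathwise, and no expectation is needed.

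\textbf{Main obstacle.} The delicate step is matching the exact form of the imported theorem. One issue is whether the switching cost scales as $\sqrt{D'}$ or as $(1+D')$ inside the root, and whether it is stated for $\ell_1$ movement of weights or for the switching probability of a sampled expert. If it is stated for sampled switches, I would convert it using the maximal-coupling identity $\Pr(\text{switch})=\tfrac12\|\vec v_t-\vec v_{t-1}\|_1$ cited in the introduction. This conversion is exactly why the parameter $\lambda/2$, rather than $\lambda$, appears. A second issue is the range of $D'$: the theorem is typically stated for $D'\le 1$ or for general $D'$ with an extra factor. Here $D'=\lambda/(2M)\le 1/2$ automatically because $M\ge\lambda$, so either version suffices.
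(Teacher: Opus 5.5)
Your proposal is correct and follows essentially the same route as the paper: normalize the surrogates by $M$, invoke the Daniely--Mansour strongly adaptive guarantee with $N=K$, and rescale by $M$ (the paper takes $D=\lambda/M$ per indicator switch, which is equivalent to your $D'=\lambda/(2M)$ per unit of $\ell_1$ movement). The only refinement concerns the conversion from sampled switches to weight movement: the paper sets $v_t(i)=\Pr(I_t=i)$ and uses the inequality $\Pr(I_t\neq I_{t-1})\ge \tfrac12\|\vec v_t-\vec v_{t-1}\|_1$, which holds for any coupling, so you need not assume the imported algorithm samples via a maximal coupling; with $D=\lambda/M$ the remainder also comes out as exactly $\sqrt{M(M+\lambda)}$ rather than as an upper bound.
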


\section{Regret guarantees}
\label{sec:regret}

We now combine the base-learner analysis of \cref{sec:FPRLL} with the meta-learner of \cref{sec:meta} to derive our main regret bounds. The argument exploits the meta-learner's strongly adaptive property in two distinct ways. First, pairing it with the \emph{static}-regret guarantee of the base learner on each restart block yields a strongly adaptive regret (SAR) bound, which in turn implies a dynamic-regret bound against piecewise-constant comparators. Second, pairing it with the \emph{dynamic}-regret guarantee of the base learner at the best dyadic restart scale yields a path-length-based bound against arbitrary comparators. 

\subsection{Piecewise-constant comparators via strongly adaptive regret}
\label{subsec:pwc}

The interval regret with indicator switching penalty on $I=[s,e]\subseteq[T]$ is
\begin{align}
    \label{eq:sar-def}
    \mathrm{SAR}_I(\vec u) \doteq \E\left[\sum_{t=s}^{e} f_t(\vec x_t) + \lambda\sum_{t=s+1}^{e}\one\{\vec x_t \neq \vec x_{t-1}\}\right] - \sum_{t=s}^{e} f_t(\vec u).
\end{align}

\cref{alg:master} attains $\mathrm{SAR}_I(\vec u) = \tilde {\mathcal{O}}(\sqrt L)$  \emph{simultaneously} on all intervals of length $L$.
\begin{theorem}[Strongly adaptive regret]
\label{thm:reduction-geometric}
For every interval $I=[s,e]\subseteq[T]$ of length $L=e-s+1$,
\begin{equation}
\label{eq:sar-bound}
\mathrm{SAR}_I(\vec u)
\le
C_{\mathrm{base}}C_{\mathrm{gc}}\sqrt{L}
+
C_{\mathrm{dm}}\,C_{\mathrm{gc}}\sqrt{M(M+\lambda)L\log(KT)}
+
\lambda\bigl(2\lceil\log_2 L\rceil+1\bigr)
= \tilde {\mathcal{O}}(\sqrt{L}).
\end{equation}
\end{theorem}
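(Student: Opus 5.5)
We fix an interval $I=[s,e]$ and a comparator $\vec u\in\X$, and take the dyadic partition $J_1,\dots,J_m$ of $I$ from \cref{lem:dyadic-cover}, with $J_r=[s_r,e_r]$ of length $|J_r|=2^{j_r}\in\mathcal H$. Each $J_r$ is a dyadic interval of level $j_r$. It is therefore exactly one restart block of the base expert $\mathcal E_{|J_r|}$: that expert restarts at $s_r$ and runs a fresh FPRLL$(|J_r|)$ throughout $J_r$.

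The argument then chains three bounds on each piece. First, sum \cref{lem:one-step-master-dyadic} over $t\in J_r$. For $t=s_r$, the switching term $\lambda\Pr(\vec x_{s_r}\neq\vec x_{s_r-1})$ is not covered, since it straddles two pieces. We pay it crudely by $\lambda$. There are $m-1$ such internal boundaries, and the left endpoint $s$ is excluded by the definition of $\mathrm{SAR}_I$. This gives at most $\lambda(2\lceil\log_2L\rceil+1)$, which is exactly the last term in \eqref{eq:sar-bound}. On each piece we obtain
\[
\E\Bigl[\sum_{t\in J_r} f_t(\vec x_t)+\lambda\sum_{t=s_r+1}^{e_r}\one\{\vec x_t\neq\vec x_{t-1}\}\Bigr]\le \sum_{t\in J_r}\langle\vec v_t,\vec g_t\rangle+\tfrac\lambda2\sum_{t=s_r+1}^{e_r}\|\vec v_t-\vec v_{t-1}\|_1 .
\]
Second, apply \cref{lem:meta-dyadic} on the interval $J_r$ and choose $H=|J_r|$ in the minimum. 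This bounds the right side by $\sum_{t\in J_r}g_t^{(|J_r|)}+C_{\mathrm{dm}}\sqrt{M(M+\lambda)|J_r|\log(KT)}$.

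Third, bound $\sum_{t\in J_r}\bigl(g_t^{(|J_r|)}-f_t(\vec u)\bigr)$ by \cref{cor:static-block} via the identity \eqref{eq:expert-regret-as-surrogate} restricted to the block. This needs care at the block start. The surrogate at $t=s_r$ contains $\lambda c_{s_r}^{(H)}$, the TV jump caused by the restart. \cref{cor:static-block} charges $\lambda\sum_{t=(j-1)k+1}^{jk}\one\{\cdot\}$, which includes this first-round switch, and its constant already absorbs it through the extra $+\lambda$. So the piece is bounded by $C_{\mathrm{base}}\sqrt{|J_r|}$. I expect this bookkeeping to be the main obstacle: checking that $c_t^{(H)}$ at block starts matches the switching term in the corollary, using $\lambda_t\le\lambda$, and ensuring the interval boundary switches are not double counted against the $\lambda(2\lceil\log_2 L\rceil+1)$ term.

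Finally, sum over $r=1,\dots,m$. The bound $\sum_r\sqrt{|J_r|}\le C_{\mathrm{gc}}\sqrt L$ from \cref{lem:dyadic-cover} converts both the base term and the meta term, the latter because $\sqrt{|J_r|}$ factors out of the square root. This yields \eqref{eq:sar-bound}. The $\tilde{\mathcal O}(\sqrt L)$ conclusion follows since $K=\mathcal O(\log T)$ and the boundary term is only logarithmic in $L$.
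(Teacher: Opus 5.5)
Your proposal is correct and follows essentially the same route as the paper. You use the dyadic cover, apply \cref{lem:meta-dyadic} on each block $J_r$ with $H=|J_r|$, invoke \cref{cor:static-block} on that expert's fresh restart block (its $+\lambda$ absorbs the restart switch, as you note), and finally use $\sum_r\sqrt{|J_r|}\le C_{\mathrm{gc}}\sqrt L$; the only cosmetic difference is at the $m-1$ internal boundaries, where you bound the master's switch $\lambda\Pr(\vec x_{s_r}\neq\vec x_{s_r-1})$ directly by $\lambda$, whereas the paper keeps the one-step bound there and pays $\tfrac{\lambda}{2}\|\vec v_{s_r}-\vec v_{s_r-1}\|_1\le\lambda$, so both yield the same $\lambda(m-1)$ term.
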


\begin{proof}[Proof sketch]
The bound combines three ingredients already established: \textbf{(i)} \emph{Dyadic cover}: \cref{lem:dyadic-cover} partitions $I$ into $m = \mathcal{O}(\log L)$ consecutive dyadic blocks $J_1,\dots,J_m\in\cG$ with $\sum_{r=1}^m \sqrt{|J_r|}\le C_{\mathrm{gc}}\sqrt{L}$. \textbf{(ii)} \emph{Meta-level interval regret}: applied blockwise, \cref{lem:meta-dyadic} ensures that on each $J_r$ the meta-learner competes with the unique dyadic expert whose restart period equals $|J_r|$, at an $\tilde {\mathcal{O}}(\sqrt{|J_r|})$ cost. \textbf{(iii)} \emph{Base-level static regret}: on its matched restart block, that expert is by definition a fresh run of FPRLL on a horizon of length $|J_r|$, which by \cref{cor:static-block} has static regret at most $C_{\mathrm{base}}\sqrt{|J_r|}$.

The one-step bound of \cref{lem:one-step-master-dyadic} lifts the meta inner-product bound to the master cost, and boundary movement of $\vec v_t$ across blocks is controlled by the simplex bound $\|\vec v_t-\vec v_{t-1}\|_1\le 2$. Summing (i)--(iii) yields \cref{eq:sar-bound}; the full argument is given in \cref{app:sar-proof}.
\end{proof}

A SAR-to-dynamic-regret reduction based on segmenting the comparator at its switch points (paying a non-dominant $\lambda S_T$ term) yields the piecewise-constant branch of our guarantee (see \cref{app:sar-to-dr-pwc}):
\begin{corollary}[Dynamic regret, $S_T$ branch]
\label{cor:dr-pwc}
For every comparator sequence, \cref{alg:master} attains
\[
\E\bigl[\Rind_T(\comseq)\bigr]
\;\le\;
\tilde {\mathcal{O}}\left(\sqrt{(S_T+1)\,T}\right) + \lambda\,S_T.
\]
\end{corollary}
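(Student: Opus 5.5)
We segment the horizon at the comparator's switch points and apply \cref{thm:reduction-geometric} on each segment. Let $\comseq$ have $S\doteq S_T(\comseq)$ switches, occurring at rounds $t_1<\dots<t_S$ (so $\vec u_{t_i}\neq\vec u_{t_i-1}$). Set $t_0\doteq1$ and $t_{S+1}\doteq T+1$. This partitions $[T]$ into $S+1$ consecutive intervals $I_i=[t_{i-1},t_i-1]$ of lengths $L_i$, with $\sum_i L_i=T$. On $I_i$ the comparator is a constant $\vec w_i\in\X$.

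\textbf{Step 1 (decomposition).} We write the switching sum $\sum_{t=2}^T\lambda_t\one\{\vec x_t\neq\vec x_{t-1}\}$, using $\lambda_t\le\lambda$, as the within-segment switches $t\in[t_{i-1}+1,t_i-1]$ plus the $S$ boundary rounds $t=t_i$, $i=1,\dots,S$. Each boundary switch costs at most $\lambda$, for a total of at most $\lambda S_T$. This is exactly the additive term in the statement. Taking expectations,
\[
\E\bigl[\Rind_T(\comseq)\bigr]\le\sum_{i=1}^{S+1}\mathrm{SAR}_{I_i}(\vec w_i)+\lambda S_T,
\]
where $\mathrm{SAR}_{I_i}$ is the interval quantity \eqref{eq:sar-def}. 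Its switching sum ranges over $t=s+1,\dots,e$, so it matches the within-segment switches exactly.

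\textbf{Step 2 (per-segment bound).} Because \cref{thm:reduction-geometric} holds uniformly over intervals and fixed comparators, and the algorithm never uses the segmentation, we can apply it with $\vec u=\vec w_i$ on each $I_i$. This gives $\mathrm{SAR}_{I_i}(\vec w_i)\le c_1\sqrt{L_i}+\lambda(2\lceil\log_2 L_i\rceil+1)$, where $c_1\doteq C_{\mathrm{base}}C_{\mathrm{gc}}+C_{\mathrm{dm}}C_{\mathrm{gc}}\sqrt{M(M+\lambda)\log(KT)}$ is $\bigo(\sqrt{\log\log T})$ up to constants.

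\textbf{Step 3 (summation).} By Cauchy--Schwarz, $\sum_i\sqrt{L_i}\le\sqrt{(S+1)T}$. The logarithmic terms sum to at most $\lambda(S+1)(2\log_2T+3)$. Since $S+1\le T$, we have $S+1\le\sqrt{(S+1)T}$, so this term is $\bigo(\lambda\log T\sqrt{(S+1)T})$ and is absorbed into $\tilde{\bigo}(\sqrt{(S_T+1)T})$. Adding $\lambda S_T$ yields the corollary.

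\textbf{Expected obstacle.} The only delicate point is the bookkeeping at segment boundaries in Step 1. One must check that the SAR definition excludes the switch at an interval's first round, so that boundary switches are neither double-counted nor missed. One must also check that applying the bound on every segment simultaneously is legitimate: \cref{thm:reduction-geometric} is a deterministic statement about expectations for a single run, valid for all $(I,\vec u)$, so no union bound is needed.
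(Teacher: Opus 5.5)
Your proposal is correct and follows the paper's proof essentially step for step: segment at the comparator's switch points, charge the $S_T$ boundary switches at most $\lambda S_T$, apply \cref{thm:reduction-geometric} with the fixed comparator on each segment, and sum via Cauchy--Schwarz. Your explicit absorption of the $\lambda(2\lceil\log_2 L_i\rceil+1)$ terms fills in a detail the paper hides inside the $\tilde{\mathcal O}$; one small slip is that $\log(KT)\ge\log T$, so $c_1$ is $\mathcal O(\sqrt{\log T})$ rather than $\mathcal O(\sqrt{\log\log T})$, which does not affect the conclusion.
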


\subsection{Path-varying comparators}
\label{subsec:pt}

For comparators with bounded path length, segmenting at switch points is no longer informative since $\vec u_t$ may drift by arbitrarily small amounts at every round. Instead, we invoke the base learner's dynamic-regret characterization (\cref{thm:dyadic-restart}), which identifies the optimal dyadic restart scale $k^\dagger\in\cH$ matching the comparator's drift. For the meta-learner notation, this is the expert with scale \(H=k^\dagger\).

\begin{theorem}[Dynamic regret, $P_T$ branch]
\label{thm:dynamic-dyadic-master}
For every comparator sequence, \cref{alg:master} attains
\begin{equation}
\label{eq:dr-meta}
\E\bigl[\Rind_T(\comseq)\bigr]
\;\le\;
\min_{H\in\cH}\left(\sum_{t=1}^T g_t^{(H)} - \sum_{t=1}^T f_t(\vec u_t)\right)
\;+\; C_{\mathrm{dm}}\sqrt{M(M+\lambda)\,T\log(KT)}.
\end{equation}
In particular, for $H = k^\dagger$ from  \cref{thm:dyadic-restart}, we obtain 
\begin{align}
\label{eq:dr-final}
&\E\bigl[\Rind_T(\comseq)\bigr]\;=\; \tilde {\mathcal{O}}
\left(\max\left\{T^{2/3}P_T^{1/3} ,\, \sqrt T\right\}\right).
\end{align}
\end{theorem}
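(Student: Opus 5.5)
The plan is to sum the one-step master reduction over all rounds, invoke the meta-regret bound on the full horizon, and then specialize to the dyadic scale supplied by \cref{thm:dyadic-restart}.

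\textbf{Step 1 (first display).} For each $t\ge2$, \cref{lem:one-step-master-dyadic} gives
$\E[f_t(\vec x_t)]+\lambda_t\Pr(\vec x_t\neq\vec x_{t-1})\le\langle\vec v_t,\vec g_t\rangle+\tfrac{\lambda}{2}\|\vec v_t-\vec v_{t-1}\|_1$.
It applies with $\lambda_t$ in place of $\lambda$ because $\lambda_t\le\lambda$ by \cref{assump:switching}. At $t=1$ there is no switching term and $c_1^{(H)}=0$, so $\E[f_1(\vec x_1)]=\langle\vec v_1,\vec g_1\rangle$ holds directly since $\vec x_1\sim\mathcal P_1$.

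Summing over $t=1,\dots,T$ bounds $\E[\sum_t f_t(\vec x_t)+\sum_{t\ge2}\lambda_t\mathbf 1\{\vec x_t\neq\vec x_{t-1}\}]$ by the left side of \eqref{eq:meta-dyadic} with $[s,e]=[T]$. Applying \cref{lem:meta-dyadic} with $L=T$ gives $\min_{H}\sum_t g_t^{(H)}+C_{\mathrm{dm}}\sqrt{M(M+\lambda)T\log(KT)}$. Subtracting $\sum_t f_t(\vec u_t)$, which is deterministic and independent of $H$, then yields \eqref{eq:dr-meta}.

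\textbf{Step 2 (specialization).} Take $H=k^\dagger$. By \eqref{eq:expert-regret-as-surrogate}, $\sum_t g_t^{(k^\dagger)}-\sum_t f_t(\vec u_t)=\RindH{k^\dagger}_T$. \cref{thm:dyadic-restart} bounds this by $\sqrt2\min_{1\le k\le T}(AT/\sqrt k+(B+G)P_Tk)$.

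The subtle point is that the expert's surrogate must charge restart switches at block boundaries. The marginals $\mathcal Q_t^{(H)}$ are glued across blocks with TV at most $1$, and this is exactly what the extra $\lambda$ in $A=C_{\mathrm{base}}+\lambda$ accounts for. I would therefore check that the $\RindH{k}$ of \cref{thm:dyadic-restart} is indeed defined through the TV-coupled marginals, so that the identity \eqref{eq:expert-regret-as-surrogate} applies.

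\textbf{Step 3 (optimize $k$).} This is the main obstacle, because of the constraint $k\le T$.
\begin{itemize}
\item If $P_T\ge T^{-1/2}$, take $k=\min\{T,(T/P_T)^{2/3}\}$. When $k=(T/P_T)^{2/3}$, the two terms are $AT^{2/3}P_T^{1/3}$ and $(B+G)T^{2/3}P_T^{1/3}$. When $k=T$, which happens iff $P_T\le T^{-1/2}$ (only at the boundary here), both terms are $\bigo(\sqrt T)$.
\item If $P_T<T^{-1/2}$, take $k=T$, which gives $A\sqrt T+(B+G)P_TT\le(A+B+G)\sqrt T$.
\end{itemize}
In all cases the minimum is $\bigo(\max\{T^{2/3}P_T^{1/3},\sqrt T\})$. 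The meta term is $\tilde{\mathcal O}(\sqrt T)$ since $K=\bigo(\log T)$. Together these give \eqref{eq:dr-final}.

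If $T$ is not a power of two, I would pad the horizon to the next power of two, which changes only constants.
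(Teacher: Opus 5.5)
Your argument is essentially the paper's: sum \cref{lem:one-step-master-dyadic} (with the exact $t=1$ identity), apply \cref{lem:meta-dyadic} on $[1,T]$, then plug in $H=k^\dagger$ with the bound of \cref{thm:dyadic-restart}; your explicit use of $\lambda_t\le\lambda$ is, if anything, slightly more careful than the paper's. There is one small slip in Step 3: $k=(T/P_T)^{2/3}$ is admissible only when $P_T\le T$, so for $P_T>T$ (possible when $D>1$) you must take $k=1$, which gives $AT+(B+G)P_T\le (A+B+G)D^{2/3}T^{2/3}P_T^{1/3}$ because $T<P_T\le DT$ --- this is the $k^\star=1$ branch the paper handles in its proof of \cref{thm:dyadic-restart}.
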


\begin{proof}[Proof sketch]
The one-step bound (\cref{lem:one-step-master-dyadic}), summed over $t=1,\dots,T$, reduces the master cost to the meta inner-product loss plus a TV movement term. Applying \cref{lem:meta-dyadic} on the full interval $[1,T]$ yields \cref{eq:dr-meta}. Substituting $H=k^\dagger$ and invoking the base-level dynamic regret bound of \cref{thm:dyadic-restart} gives \cref{eq:dr-final}. See \cref{app:dr-proof} for details.
\end{proof}

\section{Related work}
Our work connects two largely separate lines of research: algorithms with few learner decision \emph{changes}, and algorithms with guarantees against \emph{moving} comparators. Existing results typically provide one of these guarantees, but not both, for general OCO.

\textbf{Lazy OCO.}
A line of work studies online learners constrained to switch $o(T)$ times against a \emph{static} comparator. \citep{anava2015online} obtained $\sqrt{dT}+dT/S$ regret with $S$ expected switches. 
\citep{jaghargh2019consistent} studied this setting under the name ``consistent'' online optimization and used Poisson Process  schedules, but obtained a weaker tradeoff in terms of $T$.
\citep{sherman2021lazy} and \citep{agarwal2023differentially} improved the \citep{anava2015online} guarantee to $\sqrt{T}+dT/S$, and \citep{agarwal2024improved} sharpened this to $\sqrt{T}+\sqrt{d}T/S$ while removing smoothness assumptions via log-concave sampling. \citep{wang2021online} obtained the same $T/S$-type dependence for $S$ switches deterministically, but under a continuous norm-based switching constraint. These works typically present regret and switching count separately; we report their sum, which is equivalent via standard reductions \citep[Lem.~17]{sherman2021lazy} and directly captures the optimal joint tradeoff. For the \emph{strongly} convex case, \citep{sherman2021lazy} obtains better bounds of order $\widetilde{\bigo}(T/S^2)$. Against \emph{adaptive} adversaries, the problem is provably harder: \citep{chen2020minimax} established a $\Theta(T/\sqrt{S})$ minimax rate for an expected $S$ switches. All of these results target a static comparator across the time horizon.

\textbf{Moving comparators and switching costs.}
A second line studies dynamic regret under \emph{norm-based} movement penalties $\|\vec x_t-\vec x_{t-1}\|$ or other Lipschitz switching costs \citep{chen2018smoothed, zhao2020understand, zhang2021revisiting, mhaisen2026partially}. \citep{zhang2021revisiting} achieved the optimal $\bigo(\sqrt{T(1+P_T)})$ dynamic regret by incorporating switching cost into a meta-aggregation loss. Our approach is similar in spirit, but aggregates over total-variation distance and uses different meta- and base-learners. Most recently, \citep{mhaisen2026partially} showed that FTRL itself can attain dynamic-regret guarantees with additional laziness properties such as staleness. Relatedly, \citep{zhang2022optimal} studies unbounded comparators, and \citep{esposito2026parameter} extends this setting to moving unbounded comparators with bounds adaptive to individual switching-cost parameters $\lambda_t$. However, these costs remain norm-based, and the resulting algorithms may still change decisions every round. Strongly adaptive methods provide another route to dynamic or tracking regret: \citep{daniely2015strongly} introduced geometric-covering reductions from static regret to strongly adaptive regret, and \citep{zhang2018dynamic} showed that strongly adaptive regret implies dynamic-regret bounds in OCO, typically with an additional function-variation term. \citep{zhang2022smoothed} extends this perspective to OCO with switching costs, but again under norm-based costs. 

Beyond the two lines above, our framework draws on adjacent threads.
Differentially private online learning naturally incentivizes few
decision changes, since each update consumes privacy budget, and hence
algorithms developed in one setting often transfer to the other
\citep{agarwal2023differentially}. On the algorithmic side, our
master/base design follows the meta-learning-over-restarting-experts
line of Follow-the-Leading-History and the sleeping-experts perspective
\citep{adamskiy2012closer}. Restarting a randomized (perturbed) leader
to obtain dynamic and strongly adaptive regret was recently considered
by \citep{xu2024online}, but without switching costs of any kind
(deterministic, indicator, or normed) and without accompanying lower
bounds.

\bibliography{References.bib}

@article{orabona2021modern,
	title        = {{A Modern Introduction to Online Learning}},
	author       = {Orabona, Francesco},
	year         = 2022,
	publisher    = {arXiv},
	journal      = {arXiv.1912.13213},
}

@article{mcmahan-survey17,
	title        = {{A Survey of Algorithms and Analysis for Adaptive Online Learning}},
	author       = {McMahan, H. Brendan},
	year         = 2017,
	journal      = {J. Mach. Learn. Res.},
	volume       = 18,
	number       = 1,
	pages        = {3117--3166}
}

@inproceedings{zhang2018adaptive,
  title={Adaptive online learning in dynamic environments},
  author={Zhang, Lijun and Lu, Shiyin and Zhou, Zhi-Hua},
  booktitle={Proc. of NeurIPS},
  year={2018}
}

@inproceedings{zhang2018dynamic,
  title={Dynamic regret of strongly adaptive methods},
  author={Zhang, Lijun and Yang, Tianbao and Zhou, Zhi-Hua and others},
  booktitle={International conference on machine learning},
  pages={5882--5891},
  year={2018},
  organization={PMLR}
}

@inproceedings{jacobsen2022parameter,
	title        = {Parameter-free mirror descent},
	author       = {Jacobsen, Andrew and Cutkosky, Ashok},
	year         = 2022,
	booktitle    = {Proc. of COLT}
}

@inproceedings{zhang2021revisiting,
  title={Revisiting smoothed online learning},
  author={Zhang, Lijun and Jiang, Wei and Lu, Shiyin and Yang, Tianbao},
  booktitle={Proc. of NeurIPS},
  year={2021}
}

@article{zhao2020understand,
  title={Understand dynamic regret with switching cost for online decision making},
  author={Zhao, Yawei and Zhao, Qian and Zhang, Xingxing and Zhu, En and Liu, Xinwang and Yin, Jianping},
  journal={ACM Trans. on Intelligent Systems and Technology},
  volume={11},
  number={3},
  year={2020},
}

@inproceedings{cesa2013online,
  title={Online learning with switching costs and other adaptive adversaries},
  author={Cesa-Bianchi, Nicolo and Dekel, Ofer and Shamir, Ohad},
  booktitle={Proc. of NeurIPS},
  year={2013}
}

@inproceedings{chen2018smoothed,
  title={Smoothed online convex optimization in high dimensions via online balanced descent},
  author={Chen, Niangjun and Goel, Gautam and Wierman, Adam},
  booktitle={Proc. of COLT},
  year={2018},
}

@inproceedings{pruning-icml,
  title={On the Dynamic Regret of Following the Regularized Leader: Optimism with History Pruning},
  author={Mhaisen, Naram and Iosifidis, George},
  booktitle={Proc. of ICML},
  year={2025}
}

@inproceedings{zhang2022optimal,
  title={Optimal comparator adaptive online learning with switching cost},
  author={Zhang, Zhiyu and Cutkosky, Ashok and Paschalidis, Yannis},
  booktitle={Proc. of NeurIPS},
  year={2022}
}

@inproceedings{sherman2021lazy,
  title={Lazy oco: Online convex optimization on a switching budget},
  author={Sherman, Uri and Koren, Tomer},
  booktitle={Proc. of COLT},
  year={2021},
}

@article{shermanArxiv,
  title={Lazy OCO: Online Convex Optimization on a Switching Budget},
  author={Sherman, Uri and Koren, Tomer},
  journal={arXiv 2102.03803},
  year={2023}
}

@inproceedings{agarwal2023differentially,
  title={Differentially private and lazy online convex optimization},
  author={Agarwal, Naman and Kale, Satyen and Singh, Karan and Thakurta, Abhradeep},
  booktitle={Proc. of COLT},
  year={2023},
}

@inproceedings{zhang2022smoothed,
  title={Smoothed online convex optimization based on discounted-normal-predictor},
  author={Zhang, Lijun and Jiang, Wei and Yi, Jinfeng and Yang, Tianbao},
  booktitle={Proc. of NeurIPS},
  year={2022}
}

@inproceedings{anava2015online,
  title={Online learning for adversaries with memory: price of past mistakes},
  author={Anava, Oren and Hazan, Elad and Mannor, Shie},
  booktitle={Proc. of NeurIPS},
  year={2015}
}

@inproceedings{chen2020minimax,
  title={Minimax regret of switching-constrained online convex optimization: No phase transition},
  author={Chen, Lin and Yu, Qian and Lawrence, Hannah and Karbasi, Amin},
  booktitle={Proc. of NeurIPS},
  year={2020}
}

@inproceedings{mhaisen2026partially,
  title={Partially Lazy Gradient Descent for Smoothed Online Learning},
  author={Mhaisen, Naram and Iosifidis, George},
  booktitle={Proc. of AISTATS},
  year={2026}
}

@inproceedings{daniely2019competitive,
  title={Competitive ratio vs regret minimization: achieving the best of both worlds},
  author={Daniely, Amit and Mansour, Yishay},
  booktitle={Proc. of ALT},
  year={2019},
}

@inproceedings{agarwal2024improved,
  title={Improved differentially private and lazy online convex optimization: lower regret without smoothness requirements},
  author={Agarwal, Naman and Kale, Satyen and Singh, Karan and Thakurta, Abhradeep Guha},
  booktitle={Proc. of ICML},
  year={2024}
}

@inproceedings{daniely2015strongly,
  title={Strongly adaptive online learning},
  author={Daniely, Amit and Gonen, Alon and Shalev-Shwartz, Shai},
  booktitle={Proc. of ICML},
  year={2015},
}

@inproceedings{wang2021online,
  title={Online convex optimization with continuous switching constraint},
  author={Wang, Guanghui and Wan, Yuanyu and Yang, Tianbao and Zhang, Lijun},
  booktitle={Proc. of NeurIPS},
  year={2021}
}

@inproceedings{jaghargh2019consistent,
  title={Consistent online optimization: Convex and submodular},
  author={Jaghargh, Mohammad Reza Karimi and Krause, Andreas and Lattanzi, Silvio and Vassilvtiskii, Sergei},
  booktitle={Proc. of AISTATS},
  year={2019},
}

@inproceedings{esposito2026parameter,
  title={Parameter-free Dynamic Regret: Time-varying Movement Costs, Delayed Feedback, and Memory},
  author={Esposito, Emmanuel and Jacobsen, Andrew and Qiu, Hao and Zhang, Mengxiao},
  booktitle={Proc. of ICML},
  year={2026}
}

@inproceedings{altschuler2018online,
  title={Online learning over a finite action set with limited switching},
  author={Altschuler, Jason and Talwar, Kunal},
  booktitle={Proc. of COLT},
  year={2018},
}

@inproceedings{adamskiy2012closer,
  title={A closer look at adaptive regret},
  author={Adamskiy, Dmitry and Koolen, Wouter M and Chernov, Alexey and Vovk, Vladimir},
  booktitle={Proc. of ALT},
  year={2012},
}

@inproceedings{xu2024online,
  title={Online non-convex learning in dynamic environments},
  author={Xu, Zhipan and Zhang, Lijun},
  booktitle={Proc. of NeurIPS},
  pages={51930--51962},
  year={2024}
}
\bibliographystyle{abbrvnat}

\clearpage
\appendix

\section*{\Large Appendix}
\addcontentsline{toc}{section}{Appendix}

This appendix contains the technical details omitted from the main text. We first derive the change-of-variables formula underlying the fresh perturbed minimizer distribution and recall the lazy-sampling coupling used to preserve the correct marginals while controlling switches. We then prove the lower bounds, the base-learner guarantees, the dyadic restart and interval-covering results, and finally the meta-learner reductions leading to the strongly adaptive and dynamic-regret guarantees.

\paragraph{Contents.}
\begin{itemize}
    \item \Cref{sec:lazy-sampling-sov}: change of variables and lazy sampling.
    \item \Cref{sec:lower-bounds}: lower bounds for restart-based FPRLL.
    \item \Cref{sec:base-learner-proofs}: proofs for the base learner.
    \item \Cref{sec:dyadic-intervals}: dyadic interval covering.
    \item \Cref{sec:meta-learner-proofs}: proofs for the meta-learner and reductions.
\end{itemize}

\section{Change of variable and lazy sampling}
\label{sec:lazy-sampling-sov}
\subsection{Change of variable formula}
\begin{lemma}[Density of the fresh minimizer]
\label{lem:fresh-minimizer-density}
Let
\[
\tilde{\vec x}_{t}
~\doteq~
\arg\min_{\vec x\in\mathcal X}
\Bigl\{F_{t-1}(\vec x)+\langle \vec p_t,\vec x\rangle\Bigr\},
\qquad
\vec p_t\sim\Lap(\mu),
\]
where
\[
\nu(\vec p)=\frac{1}{(2\mu)^d}\exp\left(-\frac{\|\vec p\|_1}{\mu}\right).
\]
Then $\tilde{\vec x}_t$ has density
\begin{align}
\cQ_t(\vec x)
&=
\nu\left(-\nabla F_{t-1}(\vec x)\right)\,
\left|\det\left(-\nabla^2 F_{t-1}(\vec x)\right)\right|,
\qquad \vec x\in \mathrm{int}(\mathcal X).
\end{align}
\end{lemma}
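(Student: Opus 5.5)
The plan is a change-of-variables argument built on the first-order optimality characterization of the perturbed minimizer.

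\textbf{Step 1: the minimizer is interior.} The barrier $b_\gamma$ blows up at $\partial\mathcal X$, because $s_c(\vec x)\to 0$ there. Hence $F_{t-1}+\langle \vec p_t,\cdot\rangle$ is strictly (indeed $\sigma$-strongly) convex and tends to $+\infty$ at the boundary. Its minimizer $\tilde{\vec x}_t$ therefore exists, is unique, and lies in $\mathrm{int}(\mathcal X)$. It is then characterized by the first-order condition
\[
\nabla F_{t-1}(\tilde{\vec x}_t)+\vec p_t=\vec 0,
\qquad\text{i.e.}\qquad
\vec p_t=\Phi(\tilde{\vec x}_t),\quad \Phi(\vec x)\doteq-\nabla F_{t-1}(\vec x).
\]

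\textbf{Step 2: $\Phi$ is a $C^1$ diffeomorphism from $\mathrm{int}(\mathcal X)$ onto $\mathbb R^d$.} Each $f_\tau$ is twice differentiable by \cref{assump:losses}, and $r$ and $b_\gamma$ are smooth on the interior, so $\Phi$ is $C^1$.
\begin{itemize}
\item \emph{Injectivity.} Strong monotonicity gives $\langle \Phi(\vec x)-\Phi(\vec y),\vec y-\vec x\rangle\ge\sigma\|\vec x-\vec y\|^2$, so $\Phi$ is injective.
\item \emph{Surjectivity.} For every $\vec p\in\mathbb R^d$, Step 1 shows that $\argmin_{\vec x}\{F_{t-1}(\vec x)+\langle\vec p,\vec x\rangle\}$ exists in the interior, and at that point $\Phi(\vec x)=\vec p$.
\item \emph{Invertible Jacobian.} $D\Phi=-\nabla^2F_{t-1}\preceq-\sigma I$, which is nonsingular. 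The inverse function theorem then makes $\Phi^{-1}$ a $C^1$ map.
\end{itemize}
Thus $\tilde{\vec x}_t=\Phi^{-1}(\vec p_t)$ is a deterministic $C^1$ bijective transform of the Laplace vector $\vec p_t$, whose density is $\nu$.

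\textbf{Step 3: change of variables.} For any Borel set $A\subseteq\mathrm{int}(\mathcal X)$,
\[
\Pr(\tilde{\vec x}_t\in A)=\Pr\bigl(\vec p_t\in\Phi(A)\bigr)=\int_{\Phi(A)}\nu(\vec p)\,d\vec p .
\]
Substituting $\vec p=\Phi(\vec x)$ gives
\[
\Pr(\tilde{\vec x}_t\in A)=\int_A \nu\bigl(\Phi(\vec x)\bigr)\,\bigl|\det D\Phi(\vec x)\bigr|\,d\vec x ,
\]
which is exactly the claimed density. Since $\tilde{\vec x}_t$ lies almost surely in the interior, $\partial\mathcal X$ carries no mass.

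\textbf{Main obstacle.} The delicate step is surjectivity together with interiority, i.e.\ that every perturbation $\vec p$ yields an interior minimizer. This rests on the barrier diverging at the boundary, which dominates any linear term on the bounded set $\mathcal X$. I would argue it by coercivity of $F_{t-1}+\langle\vec p,\cdot\rangle$ relative to $\partial\mathcal X$. One detail needs care: the $f_\tau$ are only defined on $\mathcal X$, so their differentiability up to the boundary must be taken from \cref{assump:losses}. Because each $f_\tau$ is bounded and Lipschitz there, these terms cannot offset the blow-up of the barrier.
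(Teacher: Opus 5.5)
Your proposal is correct and follows the paper's route: the first-order condition gives $\vec p_t=-\nabla F_{t-1}(\tilde{\vec x}_t)$, you invert the map $-\nabla F_{t-1}$, and change of variables yields the density with Jacobian $-\nabla^2 F_{t-1}$. The only difference is how invertibility is justified: the paper invokes the conjugate identity $\nabla F_{t-1}^*=(\nabla F_{t-1})^{-1}$, whereas you prove bijectivity directly from strong monotonicity and the barrier forcing an interior minimizer, which makes explicit the interiority that the paper's first-order equality tacitly assumes.
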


\begin{proof}
By optimality of $\tilde{\vec x}_t$,
\[
\nabla F_{t-1}(\tilde{\vec x}_t)+\vec p_t=0,
\]
hence
\[
\vec p_t=-\nabla F_{t-1}(\tilde{\vec x}_t).
\]
Define
\[
T_t(\vec x)\doteq -\nabla F_{t-1}(\vec x).
\]

Since $F_{t-1}$ is strongly convex, its conjugate $F_{t-1}^*$ is differentiable and
\[
\nabla F_{t-1}^*=(\nabla F_{t-1})^{-1}.
\]
Therefore
\[
T_t^{-1}(\vec p)=\nabla F_{t-1}^*(-\vec p).
\]
So $\tilde{\vec x}_t=T_t^{-1}(\vec p_t)$.

Applying the change-of-variables formula to the map $T_t$, the density of
$\tilde{\vec x}_t$ is
\[
\cQ_t(\vec x)
=
\nu(T_t(\vec x))
\left|\det DT_t(\vec x)\right|.
\]
Using
\[
T_t(\vec x)=-\nabla F_{t-1}(\vec x),
\qquad
DT_t(\vec x)=-\nabla^2 F_{t-1}(\vec x),
\]
we obtain
\[
\cQ_t(\vec x)
=
\nu\left(-\nabla F_{t-1}(\vec x)\right)\,
\left|\det\left(-\nabla^2 F_{t-1}(\vec x)\right)\right|,
\]
as claimed.
\end{proof}

\subsection{Lazy sampling}
\label{sec:lazy-sampling}

We recall a standard lazy-sampling routine for realizing a maximal coupling between two distributions. We use it only through the marginal-preservation and switching identities in \cref{lem:lazy-sampling}; the results (and their proof) are standard and can be found in \cite{sherman2021lazy}.

\begin{algorithm}[h]
\caption{\textsc{LazySample}}
\label{alg:lazy-sample}
\begin{algorithmic}[1]
\State \textbf{Input:} current action $\vec x$, current distribution $\cQ$, next distribution $\cP$
\State \textbf{Output:} next action $\vec y$
\State Sample $z \sim \mathrm{Unif}[0,\,\cQ(\vec x)]$
\If{$z < \cP(\vec x)$} \Comment{Keep the same action}
    \State Set $\vec y = \vec x$
\Else \Comment{Sample a new action}
    \Repeat
        \State Sample $\vec y \sim \cP$
        \State Sample $z' \sim \mathrm{Unif}[0,\,\cP(\vec y)]$
    \Until{$z' > \cQ(\vec y)$}
\EndIf
\State \textbf{return} $\vec y$
\end{algorithmic}
\end{algorithm}

\begin{lemma}[Standard lazy-sampling identities]
\label{lem:lazy-sampling}
Let $\vec X \sim \cQ$, and let
\[
\vec Y \sim \textsc{LazySample}(\vec X,\cQ,\cP).
\]
Then:
\begin{align}
\vec Y &\sim \cP, \label{eq:lazy-marginal}\\
\Pr(\vec Y \neq \vec X) &= \|\cP-\cQ\|_{\mathrm{TV}}. \label{eq:lazy-tv}
\end{align}
\end{lemma}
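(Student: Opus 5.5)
We prove the two identities of \cref{lem:lazy-sampling} by computing the joint law of $(\vec X,\vec Y)$ produced by \textsc{LazySample}, and showing that it is the standard maximal coupling of $\cQ$ and $\cP$. Write $m(\vec y)\doteq\min\{\cQ(\vec y),\cP(\vec y)\}$ and $\delta\doteq\|\cP-\cQ\|_{\mathrm{TV}}=1-\int m$. Assume densities with respect to Lebesgue measure on $\mathcal X$, so that distinct continuous draws coincide with probability zero.

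\textbf{Step 1 (keep branch).} Condition on $\vec X=\vec x$. The draw $z\sim\mathrm{Unif}[0,\cQ(\vec x)]$ satisfies $z<\cP(\vec x)$ with probability $\min\{1,\cP(\vec x)/\cQ(\vec x)\}=m(\vec x)/\cQ(\vec x)$. Integrating against $\cQ(\vec x)$, the event ``keep, with $\vec Y=\vec X\in A$'' has probability $\int_A m$. The total keep probability is therefore $1-\delta$.

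\textbf{Step 2 (resample branch).} Each repeat iteration draws $\vec y\sim\cP$ and accepts it when $z'>\cQ(\vec y)$. This happens with probability $(\cP(\vec y)-\cQ(\vec y))^+/\cP(\vec y)$. The accepted point therefore has subdensity $\cP(\vec y)\cdot(\cP-\cQ)^+(\vec y)/\cP(\vec y)=(\cP-\cQ)^+(\vec y)$ per trial. The per-trial acceptance probability is $\int(\cP-\cQ)^+=\delta$. By the standard rejection-sampling argument, the loop terminates almost surely when $\delta>0$, and the output has density $(\cP-\cQ)^+/\delta$.

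The resample branch is entered with probability $\delta$. The case $\delta=0$ is trivial: then $\cP=\cQ$ almost everywhere and the algorithm always keeps.

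\textbf{Step 3 (marginal of $\vec Y$).} Adding the two branches, $\vec Y$ has density
\[
m+\delta\cdot(\cP-\cQ)^+/\delta=\min\{\cQ,\cP\}+(\cP-\cQ)^+=\cP,
\]
which gives \eqref{eq:lazy-marginal}.

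\textbf{Step 4 (switching probability).} In the keep branch $\vec Y=\vec X$. In the resample branch, $\vec Y$ is a fresh draw independent of $\vec X$ given the branch, and it has a density. Hence $\vec Y=\vec X$ in that branch only with probability zero. Therefore $\Pr(\vec Y\neq\vec X)=\delta$, which is \eqref{eq:lazy-tv}.

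\textbf{Main obstacle.} The only subtle point is the event $\vec Y=\vec X$ inside the resample branch. If $\cP$ had atoms, this event could have positive probability and the identity would become only the inequality $\le$. Two observations settle it. First, the accepted $\vec y$ satisfies $\cP(\vec y)>\cQ(\vec y)$. Second, the branch was entered because $z\ge\cP(\vec x)$, which forces $\cQ(\vec x)>\cP(\vec x)$ up to a null event. So the two points lie in disjoint regions, and $\vec Y\neq\vec X$ always holds in the resample branch. This argument works even without absolute continuity.
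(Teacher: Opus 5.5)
Your proof is correct and is essentially the standard maximal-coupling argument the paper relies on; the paper gives no proof of its own and simply cites Sherman and Koren. In that argument the keep branch contributes subdensity $\min\{\cQ,\cP\}$, rejection sampling makes the resample branch output $(\cP-\cQ)^+/\|\cP-\cQ\|_{\mathrm{TV}}$, and the two sum to $\cP$. Your disjoint-support observation ($\vec X\in\{\cQ>\cP\}$ and $\vec Y\in\{\cP>\cQ\}$ on the resample branch) correctly settles the only delicate point in proving $\Pr(\vec Y\neq\vec X)=\|\cP-\cQ\|_{\mathrm{TV}}$.
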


\paragraph{Correspondence with \cref{main-alg}.}
\Cref{main-alg} is precisely \textsc{LazySample} instantiated with the consecutive
FPRLL marginals $(\cQ_t,\cQ_{t+1})$. In this correspondence, the current
state is $\vec x_t$, the current and next distributions are $\cQ_t$ and
$\cQ_{t+1}$, and the output is $\vec x_{t+1}$. The ``keep'' test in
\cref{main-alg} matches the first branch of \cref{alg:lazy-sample}. If that test
fails, the algorithm must draw a fresh sample from $\cQ_{t+1}$; in our setting
this is implemented by drawing $\vec p\sim\Lap(\mu)$ and solving the perturbed
optimization problem, whose minimizer has law $\cQ_{t+1}$ by definition. Finally,
\cref{lem:fresh-minimizer-density} supplies a closed-form expression for these
densities, so the comparisons involving $\cQ_t(\cdot)$ and $\cQ_{t+1}(\cdot)$
required by lazy sampling can be evaluated pointwise.

\section{Lower bounds}
\label{sec:lower-bounds}

First, we need the following tool, which lower bounds FPRLL dynamic regret without restarts: 

\begin{theorem}
\label{thm:lb-FPRLL-dyn-robust-sigmat}
Consider the one-dimensional problem on $\X=[-1,1]$ with linear losses
\[f_t(x)=g_t x, \qquad g_t\in\{-1,+1\}. \]
Let the learner predict at each round
\begin{align}
x_t
\doteq \argmin_{x\in(-1,1)} \Big\{ F_t (x) + \dtp{G_0+p_t}{x} \Big\} = \argmin_{x\in(-1,1)} \Big\{ \dtp{G_{t-1}+p_t}{x} + r_t(x) + b(x) \Big\},
\end{align}
where \(G_{t-1}=G_0+\sum_{i=1}^{t-1}g_i\), for an arbitrary \(G_0\in\mathbb R\),
\[
b(x)=-\frac{1}{M_\gamma}\log(1-x^2)
\]
is the logarithmic barrier for \(\X\), \(r_t(x)=\frac{\sigma_t}{2}x^2\) with \(\sigma_t\ge 0\), and \(p_t\) are arbitrary symmetric perturbations.

Then, there exists an oblivious loss sequence and a one-switch comparator sequence such that
\[
\mathbb{E}\left[\sum_{t=1}^T \bigl(f_t(x_t)-f_t(u_t)\bigr)\right]
\;\ge\;
\frac{T}{2}.
\]
\end{theorem}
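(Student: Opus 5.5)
The plan is to build an explicit two-phase adversary in which the comparator sits at one endpoint for the first half and at the other endpoint for the second half. The idea is to show that, throughout the second half, the learner's cumulative gradient still points toward the \emph{old} endpoint, so it pays at least $1$ per round.

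\textbf{Step 1 (structure of the iterate).} For any fixed $t$, the map $\theta\mapsto h_t(\theta)\doteq\argmin_{x\in(-1,1)}\{\theta x+\tfrac{\sigma_t}{2}x^2+b(x)\}$ is well defined because the objective is strictly convex and barrier-coercive on $(-1,1)$. It is characterized by $\theta+\sigma_t x+b'(x)=0$. Since $\sigma_t x+b'(x)$ is strictly increasing and odd, $h_t$ is odd and nonincreasing in $\theta$. The learner's iterate is then $x_t=h_t(G_{t-1}+p_t)$.

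\textbf{Step 2 (sign lemma).} The key claim is that if $G\le 0$ and $p$ is symmetric, then $\mathbb E[h_t(G+p)]\ge 0$. To see this, pair $p$ with $-p$. Since $G\le0$ we have $G+p\le p-G=-(G-p)$, so monotonicity and oddness give $h_t(G+p)\ge h_t(-(G-p))=-h_t(G-p)$. Averaging over the symmetric pair yields the claim. By oddness, the mirror statement holds as well: if $G\ge0$ then $\mathbb E[h_t(G+p)]\le0$.

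\textbf{Step 3 (adversary).} Take $T$ even; odd $T$ changes only constants. First suppose $G_0\le 1$.
\begin{itemize}
\item For $t\le T/2$, set $g_t=-1$ and $u_t=1$. Each round contributes $-\mathbb E[x_t]+1\ge 0$, since $x_t\le 1$.
\item For $t>T/2$, set $g_t=+1$ and $u_t=-1$. Here $G_{t-1}=G_0-T/2+(t-1-T/2)\le G_0-1\le 0$, because $t-1\le T-1$. Step 2 then gives $\mathbb E[x_t]\ge 0$, so the round contributes $\mathbb E[x_t]+1\ge1$.
\end{itemize}
Summing over the $T/2$ rounds of the second half gives expected regret at least $T/2$, with exactly one comparator switch.

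If instead $G_0>1$, flip every sign: set $g_t=+1$ and $u_t=-1$ in the first half, then $g_t=-1$ and $u_t=1$ in the second half. Now $G_{t-1}\ge G_0+1\ge0$ throughout the second half, and the mirror statement of Step 2 gives the same bound. The sequence is oblivious because it depends only on the fixed $G_0$ and $T$, not on the realized perturbations.

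The main obstacle is Step 2. I must verify that monotonicity and oddness of $h_t$ survive the time-varying $\sigma_t\ge0$ and hold for arbitrary symmetric perturbations; the latter may be non-Laplace and may be correlated across time. Only the marginal symmetry of each $p_t$ is used, so the correlation should not matter. I also need $h_t$ to remain valid when $\sigma_t=0$, which follows from the strict convexity of the barrier alone. The remaining steps are bookkeeping on the sign of $G_{t-1}$.
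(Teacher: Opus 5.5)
Your proof is correct, but it takes a different route from the paper in the adversary step. Your sign lemma is essentially the paper's Step~2: the paper argues that $m_t(c)=\mathbb{E}[H_t^{-1}(c+p_t)]$ is odd and increasing, while you pair $p$ with $-p$.

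The routes diverge in how the sequence is built. You choose the orientation of the two-phase sequence from the sign of $G_0$, so that the cumulative gradient has the wrong sign on the entire second half, and you charge only that half. The paper instead fixes the orientation ($g_t=+1$ for $t\le n$, then $-1$) and pairs round $k$ with round $2n-k+1$. Since $G_{2n-k}=G_{k-1}+1$, either $G_{k-1}\le 0$, so round $k$ costs at least $1$ in expectation, or $G_{2n-k}>0$, so round $2n-k+1$ does. Each pair therefore costs at least $1$ for every $G_0$. The Corollary gives the same conclusion for the reversed orientation.

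For the statement as literally posed, your argument is simpler and fully valid. $G_0$ is a fixed parameter of the learner, so a sequence chosen using it is still oblivious.

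What the pairing buys is uniformity in $G_0$, and that is what the paper needs downstream. In the proof of \cref{thm:dyadic-restart-lb}, the effective $G_0$ of an uncut gadget is the gradient accumulated since the last restart. That is a partial earlier gadget, of either sign and magnitude up to $h$, and it depends on $k$ and on the restart phase. Meanwhile the gadget orientations are fixed in advance (alternating), so that a single distribution is hard for all $k$ at once. An orientation adapted to $G_0$ would not plug into that argument; the fixed-orientation pairing does.

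A minor improvement: split on $G_0\le 0$ versus $G_0>0$ and, for odd $T$, put the extra round in the second half. Then $G_{t-1}\le G_0\le 0$ (respectively $G_{t-1}\ge G_0>0$) holds on the whole second half. This gives the exact bound $\lceil T/2\rceil$ for every $T$, rather than a bound valid only up to constants for odd $T$.
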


\begin{proof}
Fix $T=2n$ and consider the oblivious loss sequence
\[
g_t =
\begin{cases}
+1,& t\le n,\\
-1,& t>n,
\end{cases}
\qquad
u_t = -g_t.
\]

Define the cumulative gradients
\[
G_t \doteq G_0 + \sum_{i=1}^t g_i.
\]

\paragraph{Step 1: Structure of the prediction.}
For each round $t$, define
\[
H_t(x) \doteq r_t'(x) + b'(x)
= \sigma_t x + \frac{2x}{M_\gamma(1-x^2)}
= x\left(\sigma_t + \frac{2}{M_\gamma(1-x^2)}\right),
\qquad x\in(-1,1).
\]
Since $\sigma_t \ge 0$,
\[
H_t'(x)
=
\sigma_t + \frac{2(1+x^2)}{M_\gamma(1-x^2)^2}
> 0,
\]
so $H_t$ is strictly increasing. Moreover, $H_t$ is odd and satisfies
\[
\lim_{x\to -1^+} H_t(x) = -\infty,
\qquad
\lim_{x\to 1^-} H_t(x) = +\infty.
\]
Thus $H_t$ is a bijection from $(-1,1)$ onto $\mathbb{R}$, and its inverse $H_t^{-1}$ is odd and strictly increasing.

By first-order optimality of the FTRL objective,
\[
H_t(x_t) = -(G_{t-1}+p_t),
\]
and therefore
\[
x_t = -\,H_t^{-1}(G_{t-1}+p_t).
\]

\paragraph{Step 2: Sign of the expectation.}
Define
\[
m_t(c) \doteq \mathbb{E}\bigl[H_t^{-1}(c+p_t)\bigr].
\]
Since $H_t^{-1}$ is increasing, $m_t$ is increasing. Since $H_t^{-1}$ is odd and $p_t$ is symmetric, $m_t$ is odd:
\[
m_t(-c) = -m_t(c).
\]
Hence
\[
c \ge 0 \;\Rightarrow\; m_t(c)\ge 0,
\qquad
c \le 0 \;\Rightarrow\; m_t(c)\le 0.
\]
Using $x_t = -H_t^{-1}(G_{t-1}+p_t)$,
\[
\mathbb{E}[x_t] = -\,m_t(G_{t-1}),
\]
which implies
\[
G_{t-1}\ge 0 \;\Rightarrow\; \mathbb{E}[x_t]\le 0,
\qquad
G_{t-1}\le 0 \;\Rightarrow\; \mathbb{E}[x_t]\ge 0.
\]

\paragraph{Step 3: Pairing argument.}
For each $k\in\{1,\dots,n\}$, pair round $k$ with round $2n-k+1$, and define
\[
R_k \doteq (f_k(x_k)-f_k(u_k))
+ (f_{2n-k+1}(x_{2n-k+1})-f_{2n-k+1}(u_{2n-k+1})).
\]
Since $g_k=+1$ and $g_{2n-k+1}=-1$,
\[
R_k = (1+x_k) + (1-x_{2n-k+1}).
\]
Both terms are nonnegative since $x_t\in[-1,1]$.

The cumulative gradients before the two rounds are
\[
A_k \doteq G_{k-1} = G_0 + k - 1,
\qquad
B_k \doteq G_{2n-k} = G_0 + k = A_k + 1.
\]

\paragraph{Step 4: Lower bounding each pair.}
We distinguish two cases.

\medskip
\noindent\textbf{Case 1: $A_k \le 0$.}
Then $\mathbb{E}[x_k]\ge 0$, so
\[
\mathbb{E}[1+x_k] \ge 1,
\]
and since $1-x_{2n-k+1}\ge 0$,
\[
\mathbb{E}[R_k] \ge 1.
\]

\medskip
\noindent\textbf{Case 2: $A_k > 0$.}
Then $B_k > 0$, so $\mathbb{E}[x_{2n-k+1}] \le 0$, and thus
\[
\mathbb{E}[1-x_{2n-k+1}] \ge 1.
\]
Since $1+x_k \ge 0$,
\[
\mathbb{E}[R_k] \ge 1.
\]

In all cases, $\mathbb{E}[R_k]\ge 1$ for every $k$. Summing over $k=1,\dots,n$,
\[
\mathbb{E}\left[\sum_{t=1}^T \bigl(f_t(x_t)-f_t(u_t)\bigr)\right]
= \sum_{k=1}^n \mathbb{E}[R_k]
\ge n
= \frac{T}{2}.
\]
\end{proof}
Note that starting with positive wave is not necessary, since what matters is the switch, as shown next.
\begin{corollary}
\label{cor:lb-FPRLL-dyn-robust-sigmat-reversed}
Under the assumptions of Theorem~\ref{thm:lb-FPRLL-dyn-robust-sigmat}, let $T=2n$ and consider the oblivious loss sequence
\[
g_t =
\begin{cases}
-1,& t\le n,\\
+1,& t>n,
\end{cases}
\qquad
u_t=-g_t.
\]
Then
\[
\mathbb{E}\left[\sum_{t=1}^T \bigl(f_t(x_t)-f_t(u_t)\bigr)\right]
\;\ge\;
\frac{T}{2}.
\]
\end{corollary}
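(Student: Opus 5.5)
The plan is to reduce the corollary to \cref{thm:lb-FPRLL-dyn-robust-sigmat} by the reflection symmetry $x\mapsto -x$ of the problem, rather than redoing the pairing argument. Let $g'_t \doteq -g_t$ denote the sequence of \cref{thm:lb-FPRLL-dyn-robust-sigmat}, i.e.\ $+1$ for $t\le n$ and $-1$ afterwards, and let $u'_t=-g'_t=-u_t$. Run the learner of the corollary on $g_t$ with initial offset $G_0$ and perturbations $p_t$, and consider the reflected iterates $x'_t\doteq -x_t$.

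\textbf{Step 1 (reflected iterates are a valid learner).} The regularizer $r_t(x)=\frac{\sigma_t}{2}x^2$ and the barrier $b(x)=-\frac{1}{M_\gamma}\log(1-x^2)$ are both even. Hence, substituting $x=-y$ in the objective,
\[
x_t=\argmin_{x\in(-1,1)}\bigl\{(G_{t-1}+p_t)x+r_t(x)+b(x)\bigr\}
\]
is equivalent to $-x_t=\argmin_{y}\{(-G_{t-1}-p_t)y+r_t(y)+b(y)\}$. Now $-G_{t-1}=-G_0+\sum_{i<t}g'_i$. So $x'_t$ is exactly the iterate of the same learner run on $g'_t$, with initial offset $G'_0=-G_0$ (still an arbitrary real) and perturbations $p'_t=-p_t$. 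Because each $p_t$ is symmetric, $p'_t$ is again a symmetric perturbation with the same law, so all hypotheses of \cref{thm:lb-FPRLL-dyn-robust-sigmat} hold for the reflected run.

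\textbf{Step 2 (regret is invariant).} Each term satisfies
\[
f_t(x_t)-f_t(u_t)=g_t(x_t-u_t)=(-g'_t)(-x'_t+u'_t)=g'_t(x'_t-u'_t),
\]
so the realized dynamic regrets coincide pathwise, and therefore in expectation.

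\textbf{Step 3 (conclude).} The proof of \cref{thm:lb-FPRLL-dyn-robust-sigmat} establishes $T/2$ for the specific sequence $g'$ with $u'=-g'$, for every $G_0$ and every symmetric choice of perturbations, because the pairing argument was carried out for arbitrary $G_0$. Applying it to the reflected run gives the claim.

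The only delicate point is this last one: the lower bound must hold for this particular sequence uniformly over $G'_0$ and symmetric perturbations, not merely for some sequence the theorem chooses. I would verify this from Steps 3–4 of the earlier proof, which never fix $G_0$. As a fallback, one can redo the pairing directly. With $A_k=G_0-(k-1)$ and $B_k=A_k-1$, we have $R_k=(1-x_k)+(1+x_{2n-k+1})$. If $A_k\ge 0$, then $\E[x_k]\le 0$, giving $\E[R_k]\ge1$. Otherwise $B_k<0$, so $\E[x_{2n-k+1}]\ge 0$, again giving $\E[R_k]\ge1$. Summing over $k$ yields $T/2$.
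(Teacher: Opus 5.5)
Your proof is correct, but your main route differs from the paper's. The paper does not reduce to the theorem. It reruns the pairing argument with the signs flipped: $R_k=(1-x_k)+(1+x_{2n-k+1})$, $A_k=G_0-(k-1)$, $B_k=A_k-1$, with the cases $A_k\ge 0$ and $A_k<0$. That is exactly your fallback.

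Your primary argument instead uses the evenness of $r_t$ and $b$. It shows that $x\mapsto -x$ maps a run on $g$ (offset $G_0$, perturbations $p_t$) to a run of the same learner on $-g$ (offset $-G_0$, perturbations $-p_t$), and that the regret is pathwise identical, $g_t(x_t-u_t)=g'_t(x'_t-u'_t)$. This explains the paper's remark that the orientation of the wave does not matter, and it avoids repeating the case analysis.

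Its cost, which you correctly identify, is that it relies on the \emph{proof} of \cref{thm:lb-FPRLL-dyn-robust-sigmat} rather than its existential statement. What you need is the bound for the specific sequence $(+1,\dots,+1,-1,\dots,-1)$, holding uniformly in $G_0\in\mathbb R$, $\sigma_t\ge 0$, and symmetric $p_t$. That uniformity does hold, since Steps 2--4 there use only the sign of $G_{t-1}$ and the marginal symmetry of each $p_t$. The paper's direct rerun is self-contained but duplicates the argument.
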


\begin{proof}
The proof is identical to that of
Theorem~\ref{thm:lb-FPRLL-dyn-robust-sigmat}, except that in the pairing step $3$, we have
\[
R_k
\doteq
(f_k(x_k)-f_k(u_k))
+
(f_{2n-k+1}(x_{2n-k+1})-f_{2n-k+1}(u_{2n-k+1})).
\]
Since now $g_k=-1$ and $g_{2n-k+1}=+1$,
\[
R_k=(1-x_k)+(1+x_{2n-k+1}).
\]
The cumulative gradients before the two rounds are
\[
A_k \doteq G_{k-1}=G_0-(k-1),
\qquad
B_k \doteq G_{2n-k}=G_0-k=A_k-1.
\]
If $A_k\ge 0$, then $\expec{}{x_k}\le 0$, so $\expec{}{1-x_k}\ge 1$.
If $A_k<0$, then $B_k<0$, so $\expec{}{x_{2n-k+1}}\ge 0$, and hence
$\expec{}{1+x_{2n-k+1}}\ge 1$.
Since both terms are always nonnegative, in either case $\expec{}{R_k}\ge 1$.
Summing over $k=1,\dots,n$ yields the claim.
\end{proof}

Now we are ready to prove \cref{thm:dyadic-restart-lb}:
\begin{proof}
The adversary samples a half-gadget length $h$ uniformly from $\mathcal H$.
It then builds the loss sequence by concatenating $\tau$ gadgets of length
$2h$, with alternating orientation:
for odd $m\in\{1,\dots,\tau\}$, the $m$-th gadget is
\[
(\underbrace{+1,\dots,+1}_{h\text{ rounds}},
 \underbrace{-1,\dots,-1}_{h\text{ rounds}}),
\]
and for even $m\in\{1,\dots,\tau\}$, the $m$-th gadget is
\[
(\underbrace{-1,\dots,-1}_{h\text{ rounds}},
 \underbrace{+1,\dots,+1}_{h\text{ rounds}}).
\]
Since $h\le \eta T/\tau$ and $\eta<1/2$, the total length of these $\tau$
gadgets is at most
\[
2h\tau \le 2\eta T \le T,
\]
so the construction fits in the horizon. For the remaining rounds
$t>2h\tau$, define
\[
g_t \doteq g_{2h\tau}.
\]
Finally, set the comparator to oppose the losses (for each realized randomization):
\[
u_t \doteq -g_t \qquad \text{for all } t=1,\dots,T.
\]

\paragraph{Comparator path length.}
Inside each gadget, the comparator switches exactly once at the midpoint. The padded rounds are constant and create no further switches. Therefore
\[
P_T(u)=2\tau.
\]

Note that for every round $t$, since $u_t=-g_t$ and $x_t\in[-1,1]$,
\[
f_t(x_t)-f_t(u_t)
=
g_t x_t - g_t u_t
=
g_t x_t + 1
\ge 0.
\]
Hence
$\expec{}{\R_T}\ge 0$ for every realization of the randomness.

\paragraph{Choose the dyadic scale.}
Now fix any restart period $k$ satisfying $1/\eta \le k \le T/\tau$, and define
\[
h^\star \doteq 2^{\lfloor \log_2(\eta k)\rfloor}.
\]
Since $\eta k\ge 1$, we have $h^\star\ge 1$, and since
$\eta k \le \eta T/\tau$, we have $h^\star\in\mathcal H$. By construction,
\[
\frac{\eta k}{2} < h^\star \le \eta k.
\]
This is key: the gadget half-length is within a constant
factor of (a fraction of) the restart period: 
\[
2h^\star \le 2\eta k < k.
\]
Thus each gadget is strictly shorter than one restart block.

Let $A \doteq \{h=h^\star\}.$
Because $h$ is drawn uniformly from $\mathcal H$, 
\[
\Pr(A)=\frac{1}{J+1}, \quad J \doteq \left\lfloor \log_2\left(\eta T/\tau\right)\right\rfloor .
\]

\paragraph{Number of uncut gadgets}
Condition on the event $A$.
The active gadget region has total length
\[
2h^\star\tau \le 2\eta k\tau.
\]
Since restart boundaries occur every $k$ rounds, the number of restart
boundaries falling inside this active region is at most
\[
\left\lfloor \frac{2h^\star\tau}{k}\right\rfloor
\le
\frac{2h^\star\tau}{k}
\le
2\eta\tau.
\]

Because every gadget has length $2h^\star<k$, a single restart boundary can
intersect the interior of at most one gadget, and a single gadget can contain
at most one restart boundary. Therefore at most $2\eta\tau$ gadgets are cut by
a restart, and at least
\[
\tau-2\eta\tau=(1-2\eta)\tau
\]
gadgets are fully contained in a single restart block.

\paragraph{Regret from each uncut gadget.}
Consider any gadget that is not cut by a restart. Over that gadget, the learner
runs continuously with no reset. Reindex the gadget as a fresh $2h^\star$-round
instance. The state at the beginning of the gadget simply becomes the initial
gradient $G_0$ for that local instance.

For an odd gadget, the loss sequence is precisely the two-phase sequence used in the proof of
Thm.~\ref{thm:lb-FPRLL-dyn-robust-sigmat}. Hence that gadget contributes
expected dynamic regret at least
$\frac{2h^\star}{2}=h^\star.$
For an even gadget, the same lower bound follows from Corollary~\ref{cor:lb-FPRLL-dyn-robust-sigmat-reversed}.

Summing over the at least $(1-2\eta)\tau$ uncut gadgets yields
\[
\expec{}{\R_T\mid A}
\;\ge\;
(1-2\eta)\tau\, h^\star
\geq
\frac{\eta(1-2\eta)}{2}\,k\tau.
\]

We expand the unconditional expectation:
\[
\expec{}{\R_T}
=
\Pr(A)\,\expec{}{\R_T\mid A}
+
\Pr(A^c)\, \expec{}{\R_T\mid A^c}.
\]
Since $\expec{}{\R_T}\ge 0$, both conditional expectations
are nonnegative, and in particular
\[
\Pr(A^c)\, \expec{}{\R_T\mid A^c}\ge 0.
\]
Therefore we may drop this second term and obtain the lower bound
\[
\expec{}{\R_T}
\;\ge\;
\Pr(A)\,\expec{}{\R_T\mid A}.
\]
Using $\Pr (A)=1/(J+1)$ and the conditional lower bound above gives
\[
\expec{}{\R_T}
\;\ge\;
\frac{1}{J+1}\cdot \frac{\eta(1-2\eta)}{2}\,k\tau.
\]

Finally, since $\eta\in(0,1/2)$ is a fixed constant,
\[
J+1
=
1+\left\lfloor \log_2\left(\eta \frac{T}{\tau}\right)\right\rfloor
=
\Theta(\log(T/\tau)).
\]
Hence
$
\expec{}{\R_T}
=
\Omega\left(\frac{k\tau}{\log(T/\tau)}\right).
$
This completes the proof.
\end{proof}

\begin{lemma}[Sherman branch]
\label{lem:sherman-branch}
Suppose that a fresh run of FPRLL\((k)\) makes at most
\(C\sqrt{k}\) expected switches on each length-\(k\) block,
for a constant \(C>0\). Then there exists an oblivious distribution over
\(T\)-round loss sequences and a fixed comparator \(u^\star\in[-1,1]\) such
that
\[
\expec{}{\Rind_T(u^\star,\ldots,u^\star)}
=
\Omega\left(\frac{T}{\sqrt{k}}\right).
\]
\end{lemma}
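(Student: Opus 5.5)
Blockwise replay of the single-block hard instance of \citep[Thm.~4]{shermanArxiv}. Fix a restart period $k$ and let the horizon split into $T/k$ consecutive blocks $I_1,\dots,I_{T/k}$ of length $k$; a final partial block can be discarded, since it contributes nonnegative regret once the comparator is chosen to be optimal. The adapted single-block statement we invoke reads as follows. On $\X=[-1,1]$ with linear losses, for any algorithm whose expected number of switches on a length-$k$ horizon is at most $C\sqrt{k}$, there is an oblivious distribution $\mathcal D_k$ over length-$k$ loss sequences and a point $u^\star$ with expected static regret at least $c\sqrt{k}$, where $c$ depends only on $C$. Sherman--Koren's lower bound has the form $\Omega(k/S)$ regret for $S$ switches. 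With $S=C\sqrt{k}$ this gives $\Omega(\sqrt{k}/C)$. We will state this adaptation explicitly, including that the hard comparator can be taken the same in every block.

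\textbf{Step 1: i.i.d.\ replay.} Draw independent copies of the $\mathcal D_k$ sequence for each block and concatenate them. Because FPRLL$(k)$ restarts at each block boundary, its iterate law on block $I_j$ is that of a fresh run facing the $j$-th copy. Here we use that the adversary is oblivious and the learner's state is reset; the only cross-block coupling is through the lazy sampler, which affects switches but not the per-round marginals. By hypothesis each block therefore satisfies the switch budget, and the single-block bound applies to each block separately.

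\textbf{Step 2: common comparator.} The obstacle is that the regret guarantee is per block against a block-dependent best point, whereas the statement requires a single $u^\star$ for the whole horizon. We would first choose Sherman's construction so that it is symmetric, for instance drift toward $+1$ with hidden timing, so that the hard comparator is the same endpoint, say $u^\star=1$ or $u^\star=-1$, in every block. If the construction instead randomizes the sign, we fix the sign across blocks using a single shared coin. This keeps the blocks identically distributed while the per-block timings stay independent.

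\textbf{Step 3: summing.} Since $\lambda_t>0$, dropping the switching term only lowers $\Rind_T$. Summing the per-block bound over the blocks gives
\[
\E\bigl[\Rind_T(u^\star,\ldots,u^\star)\bigr]\ \ge\ \sum_{j=1}^{T/k} c\sqrt k\ =\ c\,T/\sqrt{k}=\Omega\!\left(T/\sqrt{k}\right).
\]
The step that needs the most care is Step 2, namely checking that the per-block lower bound holds against one fixed comparator common to all blocks rather than against each block's hindsight minimizer.
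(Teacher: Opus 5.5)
Your proposal is correct and follows the paper's proof. Both replay Sherman--Koren's single-block hard distribution $\mathcal D_k$ independently on each of the $\lfloor T/k\rfloor$ restart blocks, collect $\Omega(\sqrt k)$ expected static regret per block against one stationary $u^\star$, and drop the nonnegative switching term. The paper settles your Step~2 in one line by taking $u^\star$ to be the minimizer of the expected loss under $\mathcal D_k$, so $u^\star$ depends on the distribution rather than on the realized block and is automatically shared by all i.i.d.\ blocks. Your symmetry or shared-coin device is only needed if the hard instance randomizes its sign, and in that case it still works by linearity of expectation.
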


\begin{proof}
We use the following consequence of \citet[Thm.~4]{shermanArxiv}: for a
length-\(k\) online linear problem on \([-1,1]\), there exist a stochastic
loss distribution \(\mathcal D_k\) and a comparator \(u^\star\in[-1,1]\),
fixed as a minimizer of the expected loss under \(\mathcal D_k\), such that
any algorithm making \(\bigo(\sqrt{k})\) expected switches incurs
\(\Omega(\sqrt{k})\) expected static regret against \(u^\star\).

Construct the \(T\)-round adversary by drawing, independently on each full
restart block, a fresh length-\(k\) sample from this same distribution
\(\mathcal D_k\). Although the realized losses differ from block to block,
the comparator remains the same \(u^\star\), since \(u^\star\) is defined by \(\mathcal D_k\), not by the realized sample.

Thus every full block contributes \(\Omega(\sqrt{k})\) expected static
regret. Since \(\Rind_T\) only adds the learner's nonnegative indicator
switching cost, the same lower bound holds for \(\Rind_T\). Therefore
\[
\expec{}{\Rind_T(u^\star,\ldots,u^\star)}
\ge
\left\lfloor \frac{T}{k}\right\rfloor \Omega(\sqrt{k})
=
\Omega\left(\frac{T}{\sqrt{k}}\right),
\]
where \(k\le T\). The comparator is stationary throughout, so \(S_T=P_T=0\).
\end{proof}

\begin{proposition}[Combined lower bound]
\label{lem:combined-lb}
Consider the following two full-horizon adversaries:
\begin{enumerate}
    \item the adversary from \cref{thm:dyadic-restart-lb}, together with
    comparator sequence \(u^{\mathrm{gad}}_{1:T}\), for which
    \[
    S_T(u^{\mathrm{gad}}_{1:T})=\tau,
    \qquad
    P_T(u^{\mathrm{gad}}_{1:T})=2\tau,
    \]
    and
    \[
    \expec{}{\Rind_T(u^{\mathrm{gad}}_{1:T})}
    =
    \Omega\left(\frac{k\tau}{\log(T/\tau)}\right);
    \]

    \item the adversary from \cref{lem:sherman-branch}, together with the
    stationary comparator sequence
    \(u^{\mathrm{sh}}_{1:T}\equiv u^\star\), for which
    \[
    S_T(u^{\mathrm{sh}}_{1:T})=0,
    \qquad
    P_T(u^{\mathrm{sh}}_{1:T})=0,
    \]
    and
    \[
    \expec{}{\Rind_T(u^{\mathrm{sh}}_{1:T})}
    =
    \Omega\left(\frac{T}{\sqrt{k}}\right).
    \]
\end{enumerate}

For the fixed restart period \(k\), define a mixed adversary that, at time
\(0\), chooses one of these two branches uniformly at random and then plays
the chosen branch for the entire horizon. The resulting mixed adversary may
depend on \(k\) through the Sherman branch.
Let \(u_{1:T}\) denote the comparator sequence associated with the
realized branch. Then
\[
\expec{}{\Rind_T(u_{1:T})}
=
\Omega\left(
\frac{k\tau}{\log(T/\tau)}
+
\frac{T}{\sqrt{k}}
\right).
\]
\end{proposition}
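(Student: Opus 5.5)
The argument is a direct averaging over the uniform branch choice, followed by the elementary inequality $\max\{a,b\}\ge \tfrac12(a+b)$. Let $Z\in\{\mathrm{gad},\mathrm{sh}\}$ denote the branch chosen at time $0$, with $\Pr(Z=\mathrm{gad})=\Pr(Z=\mathrm{sh})=\tfrac12$. The adversary's internal randomness, including $Z$, is drawn before the learner acts and independently of FPRLL$(k)$'s perturbations. So, conditionally on $Z$, the interaction is exactly the interaction of FPRLL$(k)$ with that branch's oblivious distribution. The realized comparator $u_{1:T}$ is determined by the branch, and within the gadget branch by its internal randomness, as in \cref{thm:dyadic-restart-lb}. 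The tower rule then gives
\[
\E\bigl[\Rind_T(u_{1:T})\bigr]
=\tfrac12\,\E\bigl[\Rind_T(u^{\mathrm{gad}}_{1:T})\,\big|\,Z=\mathrm{gad}\bigr]
+\tfrac12\,\E\bigl[\Rind_T(u^{\mathrm{sh}}_{1:T})\,\big|\,Z=\mathrm{sh}\bigr].
\]

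Next, I lower bound each conditional term by the guarantee from item 1 or item 2. For the gadget branch, \cref{thm:dyadic-restart-lb} gives $\Omega(k\tau/\log(T/\tau))$ for every $k\in[4,T/\tau]$. For the Sherman branch, \cref{lem:sherman-branch} gives $\Omega(T/\sqrt{k})$. Its hypothesis is that a fresh run makes $O(\sqrt{k})$ expected switches per block. I would justify this from the switching term of \cref{cor:static-block} under the stated tuning. If a run switches more often than that, its indicator cost alone already exceeds the order $\sqrt{k}$ per block, so the bound $\Omega(T/\sqrt k)$ still holds for $\Rind_T$. I plan to state this dichotomy explicitly so that the Sherman branch lower bound applies unconditionally.

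Both conditional quantities are lower bounds on expected regret plus a nonnegative switching cost, so they are valid as they stand. Averaging them gives
\[
\E\bigl[\Rind_T(u_{1:T})\bigr]\ \ge\ \tfrac12\,c_1\frac{k\tau}{\log(T/\tau)}+\tfrac12\,c_2\frac{T}{\sqrt k}
\ \ge\ \tfrac12\min\{c_1,c_2\}\Bigl(\frac{k\tau}{\log(T/\tau)}+\frac{T}{\sqrt k}\Bigr),
\]
which is the claimed $\Omega(\cdot)$ bound.

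The main obstacle is bookkeeping rather than analysis. The statement is for a fixed $k$, and the mixed adversary is allowed to depend on $k$ through $\mathcal D_k$. I need to check that the constants hidden in both branches do not depend on $k$. I also need to check that the comparator is chosen per branch, so that it is a legitimate single comparator sequence for each realization. Its complexity is then at most $S_T\le\tau$, $P_T\le 2\tau$ on every realization, so the bound holds for the class $\mathcal U_S(\tau)\subseteq\mathcal U_P(2\tau)$.
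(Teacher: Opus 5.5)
Your proposal is correct and follows the paper's proof: condition on the uniformly chosen branch, write $\E[\Rind_T(u_{1:T})]$ as the average of the two conditional expectations, and lower-bound the two terms by \cref{thm:dyadic-restart-lb} and \cref{lem:sherman-branch}, with constants that do not depend on $k$. Your check of the switching hypothesis of \cref{lem:sherman-branch} is not needed here, since both branch bounds are premises of the statement; if you keep it, use only the direct bound from \cref{lem:switching-part-3} under the stated tuning and drop the ``switch-more'' dichotomy, which would also need the loss part of the regret against $u^\star$ to be nonnegative.
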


\begin{proof}
Conditioning on the branch chosen by the adversary gives
\[
\expec{}{\Rind_T(u_{1:T})}
=
\frac12
\expec{}{\Rind_T(u^{\mathrm{gad}}_{1:T})}
+
\frac12
\expec{}{\Rind_T(u^{\mathrm{sh}}_{1:T})}.
\]
Applying \cref{thm:dyadic-restart-lb} to the first term and
\cref{lem:sherman-branch} to the second term yields the claim.
\end{proof}

\begin{proof}
Condition on $C$. The $C{=}0$ branch is $k_0$-independent, and
\cref{thm:dyadic-restart-lb} gives $\Omega(k\tau/\log(T/\tau))$. For
$C{=}1$, condition further on $k_0$: on $\{k_0=k\}$,
\cref{lem:sherman-branch} yields $\Omega(T/\sqrt{k})$; on $\{k_0\neq k\}$,
$\E[\Rind_T]\ge 0$ since $u^\star_{k_0}$ is the per-round expected-loss
minimizer of $\cD^{\mathrm{sh}}_{k_0}$. As $\Pr(k_0=k)=1/|\mathcal H|=
\Theta(1/\log T)$, the $C{=}1$ branch contributes $\Omega(T/(\sqrt{k}\log T))$.
\end{proof}

\section{Proofs for the base learner}
\label{sec:base-learner-proofs}

To prove \cref{thm:agnostic-dynamic}, we establish four intermediate lemmas and then combine them.

\begin{itemize}
    \item \Cref{lem:strong-ftrl-perturbed-regret} gives the main regret decomposition, splitting the expected dynamic regret into a \emph{static} term $\mathbf{(I)}$ and a \emph{dynamic} term $\mathbf{(II)}$. This extends the decomposition in \cite{pruning-icml} to our perturbed, barrier-regularized setting.
    
    \item \Cref{lem:static-part} bounds the static term $\mathbf{(I)}$, explicitly accounting for the effect of the Laplace perturbations.
    
    \item \Cref{lem:dynamic-part-2} bounds the dynamic term $\mathbf{(II)}$, capturing the dependence on comparator drift through the barrier geometry.
    
    \item \Cref{lem:switching-part-3} controls the total variation distance between consecutive sampling distributions, which we use to bound the switching-cost contribution.
\end{itemize}

Among these ingredients, only the total-variation bound in \cref{lem:switching-part-3} follows the general approach of \cite{sherman2021lazy}; the other three lemmas are specific to our setting.

\begin{lemma}
  \label{lem:strong-ftrl-perturbed-regret}
Let $\{f_{t}(\cdot)\}_{t=1}^T$ be an arbitrary set of functions. Let $\vec p_t \sim \text{Lap}(\mu)$. Assume that for all $t$, the fresh perturbed minimizer 
\[
\tilde{\vec{x}}_{t+1} \doteq \argmin_{\vec x} F_t(\vec{x}) + \dtp{\vec p_{t}}{\vec x}
\] is well-defined. 
Define the loss part of $\Rind_T(\comseq)$ as 
\[
\R_T(\comseq) = \sumT \left(f_t(\vec x_t) - f_t(\vec u_t)\right)
\], where $\vec x_t$ are the output of \cref{main-alg}.
Then, the algorithm that selects the actions $\tilde{\vec{x}}_{t+1}, \forall t $ achieves the following dynamic regret bound:
\begin{equation}
    \expec{}{\mathcal{R}_T} \leq \overbrace{\sum_{t=1}^T F_t(\vec {\ti x}_t) - F_t(\vec {\ti{x}}_{t+1})}^{\mathbf{(I)}}\ 
    +\ \underbrace{\sum_{t=1}^{T-1} F_t(\vec u^\gamma_{t+1}) - F_t(\vec u^\gamma_{t})}_{\mathbf{(II)}} \ +\ r(\vec{u}^\gamma_1) +1+ G\gamma R\, T.
\end{equation}
\end{lemma}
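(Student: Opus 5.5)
The plan is to prove the bound by a be-the-leader telescoping argument with a time-varying comparator. Three preparatory reductions make this possible: matching the played actions to the fresh minimizers, coupling all rounds through one perturbation, and shrinking the comparator into $\mathcal X^\gamma$.

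\textbf{Step 1 (reduce to fresh minimizers).} By \cref{lem:lazy-sampling}, each played $\vec x_t$ has marginal law $\mathcal Q_t$, which is the law of $\tilde{\vec x}_t$. Hence $\E[f_t(\vec x_t)]=\E[f_t(\tilde{\vec x}_t)]$, and $\E[\R_T]$ depends only on the marginals $\mathcal Q_t$. I will therefore compute with any joint law having these marginals. The convenient choice is a \emph{single} perturbation $\vec p\sim\Lap(\mu)$ shared by all rounds, with $\tilde{\vec x}_{t+1}=\argmin_{\vec x}\Phi_t(\vec x)$ and $\Phi_t\doteq F_t+\langle\vec p,\cdot\rangle$. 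This uses that the losses, and hence the comparators, are oblivious (deterministic).

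\textbf{Step 2 (shrink the comparator).} Set $\vec u_t^\gamma\doteq(1-\gamma)\vec u_t\in\mathcal X^\gamma$. By $G$-Lipschitzness, $f_t(\vec u_t^\gamma)-f_t(\vec u_t)\le G\gamma\|\vec u_t\|\le G\gamma R$. Summing over $t$ gives the additive term $G\gamma RT$, so it remains to bound $\sum_t\bigl(f_t(\tilde{\vec x}_t)-f_t(\vec u_t^\gamma)\bigr)$. The shrinking is needed because $b_\gamma$ may be infinite on the boundary of $\mathcal X$ but satisfies $0\le b_\gamma\le 1$ on $\mathcal X^\gamma$.

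\textbf{Step 3 (telescoping).} Since $f_t=\Phi_t-\Phi_{t-1}$, we have
\[
\sum_{t=1}^T f_t(\tilde{\vec x}_t)=\sum_{t=1}^T\bigl(\Phi_t(\tilde{\vec x}_t)-\Phi_t(\tilde{\vec x}_{t+1})\bigr)+\Phi_T(\tilde{\vec x}_{T+1})-\Phi_0(\tilde{\vec x}_1).
\]
Similarly, for the comparator,
\[
\sum_{t=1}^T f_t(\vec u^\gamma_t)=\Phi_T(\vec u^\gamma_T)-\Phi_0(\vec u^\gamma_1)-\sum_{t=1}^{T-1}\bigl(\Phi_t(\vec u^\gamma_{t+1})-\Phi_t(\vec u^\gamma_t)\bigr).
\]
Subtracting the two identities, I then simplify as follows.
\begin{itemize}
\item Optimality of $\tilde{\vec x}_{T+1}$ gives $\Phi_T(\tilde{\vec x}_{T+1})\le\Phi_T(\vec u^\gamma_T)$, so those two boundary terms cancel with a favourable sign.
\item Splitting $\Phi_t=F_t+\langle\vec p,\cdot\rangle$, the linear parts of the first sum telescope to $\langle\vec p,\tilde{\vec x}_1-\tilde{\vec x}_{T+1}\rangle$. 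Before the optimality step these cancel exactly against $\langle\vec p,\tilde{\vec x}_{T+1}\rangle$ from $\Phi_T(\tilde{\vec x}_{T+1})$ and $-\langle\vec p,\tilde{\vec x}_1\rangle$ from $-\Phi_0(\tilde{\vec x}_1)$. I will do the bookkeeping with the $\vec p$-parts kept explicit to make this cancellation visible.
\item The linear parts attached to the comparator telescope to inner products $\langle\vec p,\vec u\rangle$ with deterministic vectors. These have zero mean because $\vec p$ is symmetric.
\end{itemize}
What survives is term $\mathbf{(I)}$ in $F_t$, term $\mathbf{(II)}$ in $F_t$, and $f_0(\vec u^\gamma_1)-f_0(\tilde{\vec x}_1)$. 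The last is at most $r(\vec u_1^\gamma)+b_\gamma(\vec u_1^\gamma)\le r(\vec u_1^\gamma)+1$, since $r,b_\gamma\ge0$ and $b_\gamma\le1$ on $\mathcal X^\gamma$. Taking expectations yields the claimed bound.

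\textbf{Main obstacle.} The delicate point is Step 1's coupling: replacing the fresh, independent $\vec p_t$ by one shared $\vec p$. This is legitimate only because every quantity in the expectation is a sum of per-round terms depending on a single $\tilde{\vec x}_t$'s marginal, and because the comparators are fixed in advance. Otherwise the correlated terms $\langle\vec p,\tilde{\vec x}\rangle$ could not be cancelled. A secondary point is to keep the $\vec p$-bookkeeping exact when invoking optimality, so that no nonzero-mean term $\E\langle\vec p,\tilde{\vec x}_{T+1}\rangle$ is left over.
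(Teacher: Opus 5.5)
Your Step 3 double-counts the terminal perturbation term. This is exactly the ``secondary point'' you flagged but did not resolve. The quantity $\langle\vec p,\tilde{\vec x}_{T+1}\rangle$ inside $\Phi_T(\tilde{\vec x}_{T+1})$ is spent twice. First, it cancels the telescoped $\langle\vec p,\tilde{\vec x}_1-\tilde{\vec x}_{T+1}\rangle$. Second, it sits inside the optimality inequality $\Phi_T(\tilde{\vec x}_{T+1})\le\Phi_T(\vec u^\gamma_T)$, which is a statement about the full value of $\Phi_T$. Done honestly, all linear parts cancel identically, because the identity only uses $f_t=F_t-F_{t-1}$ and is $\vec p$-free. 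What remains is the boundary term $F_T(\tilde{\vec x}_{T+1})-F_T(\vec u^\gamma_T)$. Your $\tilde{\vec x}_{T+1}$ minimizes $\Phi_T$ rather than $F_T$, so the best available bound is $F_T(\tilde{\vec x}_{T+1})-F_T(\vec u^\gamma_T)\le\langle\vec p,\vec u^\gamma_T-\tilde{\vec x}_{T+1}\rangle$; invoking optimality first leaves the same residual. Its mean is $-\E\langle\vec p,\tilde{\vec x}_{T+1}\rangle\ge 0$. To see this, note that $\vec p\mapsto\tilde{\vec x}_{T+1}(\vec p)$ is the gradient of the concave map $\vec p\mapsto\min_{\vec x}\{F_T(\vec x)+\langle\vec p,\vec x\rangle\}$. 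Hence $\langle\vec p,\tilde{\vec x}_{T+1}(\vec p)-\tilde{\vec x}_{T+1}(\vec 0)\rangle\le 0$, while $\E\langle\vec p,\tilde{\vec x}_{T+1}(\vec 0)\rangle=0$. The residual is generically strictly positive, e.g.\ $2\mu^2/\sigma$ for an unconstrained one-dimensional $F_T=\frac{\sigma}{2}x^2$, and is of order up to $R\,\E\|\vec p\|$. No term of the claimed bound absorbs it. So your conclusion that only $\mathbf{(I)}$, $\mathbf{(II)}$ and $f_0(\vec u^\gamma_1)-f_0(\tilde{\vec x}_1)$ survive does not follow.

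The missing idea is that the terminal point is never played, so you may choose it freely. The identity $\sum_{t=1}^T f_t(\tilde{\vec x}_t)=\sum_{t=1}^T\bigl(F_t(\tilde{\vec x}_t)-F_t(\tilde{\vec x}_{t+1})\bigr)+F_T(\tilde{\vec x}_{T+1})-F_0(\tilde{\vec x}_1)$ holds for any $\tilde{\vec x}_{T+1}$. The paper sets $\vec p_T=\vec 0$, i.e.\ $\tilde{\vec x}_{T+1}=\argmin_{\vec x\in\mathcal X}F_T(\vec x)$, and $\mathbf{(I)}$ is understood with this endpoint. Unperturbed optimality then gives $F_T(\tilde{\vec x}_{T+1})\le F_T(\vec u^\gamma_T)$ with no perturbation term at all. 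The resulting bound holds pathwise for every realization of the perturbations. The only probabilistic input is $\E[f_t(\vec x_t)]=\E[f_t(\tilde{\vec x}_t)]$ from lazy sampling. In particular, the shared-perturbation coupling you single out as the main obstacle plays no role in this lemma. The paper uses it only afterwards, as the law $\mathcal B$ in \cref{lem:static-part}, to bound $\mathbf{(I)}$. With the endpoint fixed this way, your Steps 1--2 and your treatment of $f_0(\vec u^\gamma_1)-f_0(\tilde{\vec x}_1)$ go through.
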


\begin{proof}[Proof of \cref{lem:strong-ftrl-perturbed-regret}]
    \begin{align*}
  &\sumT f_t(\ti{\vec x}_t) - \sumT f_t(\vec u_t) - \dtp{\vec p_{T}}{\vec u^\gamma_T} 
  \\
  & = \sumT f_t(\ti{\vec x}_t) - \sumT f_t(\vec u^\gamma_t) - \dtp{\vec p_{T}}{\vec u^\gamma_T} \;+\;\sumT \big(\overbrace{f_t(\vec u^\gamma_t) - f_t(\vec u_t)}^{\text{$a_t$}}\big) 
  \\
  &= \sumT \left(F_{t}(\ti{\vec x}_t) - F_{t-1}(\ti{\vec x}_t)\right) - \left( \sumT \left(F_t(\vec u^\gamma_t) - F_{t-1} (\vec u^\gamma_t)\right)+ \dtp{\vec p_{T}}{\vec u^\gamma_T}\right) \,+\,a_{1:T}
  \\
  &= \sumT F_t(\ti{\vec x}_t) - \sumT F_{t-1}(\ti{\vec x}_t) - \left( \sumT F_t(\vec u^\gamma_t)+ \dtp{\vec p_{T}}{\vec u^\gamma_T} - \sumT F_{t-1}(\vec u^\gamma_t)\right) + a_{1:T}
  \\
  &= \sumT F_t(\ti{\vec x}_t) - \sum_{t=0}^{T-1} F_t(\ti{\vec x}_{t+1}) - \left( F_{T}(\vec u^\gamma_T)+ \dtp{\vec p_{T}}{\vec u^\gamma_T} + \sum_{t=1}^{T-1} F_t(\vec u^\gamma_t) - \sum_{t=0}^{T-1} F_t(\vec u^\gamma_{t+1})\right) 
  \\
  &\qquad + a_{1:T}
  \\
  &\leq \sumT F_t(\ti{\vec x}_t) - \sum_{t=1}^{T-1} F_t(\ti{\vec x}_{t+1}) - \left( F_{T}(\vec x_{T+1})+ \dtp{\vec p_{T}}{\vec x_{T+1}}+\sum_{t=1}^{T-1} F_t(\vec u^\gamma_t) - \sum_{t=1}^{T-1} F_t(\vec u^\gamma_{t+1})\right) 
  \\
  &\qquad + a_{1:T} + f_0(\vec u^\gamma_1)
  \\
  &\leq \sumT F_t(\ti{\vec x}_t) - \sum_{t=1}^{T} F_t(\ti{\vec x}_{t+1}) - \left(\sum_{t=1}^{T-1} F_t(\vec u^\gamma_t) - \sum_{t=1}^{T-1} F_t(\vec u^\gamma_{t+1})\right) - \dtp{\vec p_{T}}{\vec x_{T+1}} + a_{1:T} + r(\vec u^\gamma_1) + 1.
\end{align*}
The first inequality uses the update-rule optimality
\[
F_T(\vec x_{T+1})+ \dtp{\vec p_{T}}{\vec x_{T+1}}\leq F_T(\vec u^\gamma_{T}) + \dtp{\vec p_{T}}{\vec u^\gamma_T},
\]
and $-F_0(\tilde{\vec x}_1) \le 0$ (since $r \ge 0$ and $b_\gamma \ge 0$ on $\mathcal{X}$).
The second inequality uses $b_\gamma \le 1$ on $\mathcal{X}^\gamma$, giving $f_0(\vec u^\gamma_1) \le r(\vec u^\gamma_1) + 1$.
Overall
\begin{align*}
    \sumT &\left( f_t(\ti{\vec x}_t) - f_t(\vec u_t)\right)  - \dtp{\vec p_{T}}{\vec u^\gamma_T}
    \\
    &\leq \sumT \left( F_{t}(\ti{\vec x}_t) - F_{t}(\ti{\vec x}_{t+1}) \right)+ \sum_{t=1}^{T-1} \left( F_{t}(\vec u^\gamma_{t+1}) -  F_{t}(\vec u^\gamma_t) \right) +r(\vec u_1) +1 + a_{1:T} - \dtp{\vec{p}_{T}}{\vec x_{T+1}}.
\end{align*}
Since the terminal perturbation and terminal point are used only for
analysis and are never played by the algorithm, we set
$\vec p_T=\vec 0$ and choose
$\vec x_{T+1}=\argmin_{\vec x\in\mathcal X} F_T(\vec x)$.
Then the preceding optimality inequality remains valid, and both terminal inner-product terms vanish.
In addition, by Lipschitzness:
\begin{align}
    a_{1:T} = \sumT f_t(\vec u^\gamma_t) - f_t(\vec u_t) \leq \sumT G\|\vec u^\gamma_t - \vec u_t\| \leq G\gamma R\, T.
\end{align}

Finally: Note since $\vec x_t$ and $\tilde{\vec x}_t$ have the same density for all $t$, we have 
\begin{align}
  \expec{}{f_t(\vec x_t)} = \expec{}{f_t(\ti {\vec x_t})}
\end{align}
The above property is guaranteed by the lazy sampling procedure: the fresh minimizer and the (potentially) kept one share the same density.

Hence \begin{align*}
    \expec{}{\R_T} &= \expec{}{\sumT f_t(\vec x_t) - f_t(\vec u_t)} = \expec{}{\sumT f_t(\ti{\vec x}_t) - f_t(\vec u_t)} 
    \\
    &\leq  \sumT \expec{}{\left(F_{t}(\ti{\vec x}_t) - F_{t}(\ti{\vec x}_{t+1})  \right)} \;+\; \sum_{t=1}^{T-1} \left( F_{t}(\vec u^\gamma_{t+1}) -  F_{t}(\vec u^\gamma_t) \right) \;+\;r(\vec u_1) +1+ G\gamma R\, T.
\end{align*} 
\end{proof}

\begin{lemma} Under the same conditions as \cref{lem:strong-ftrl-perturbed-regret}, if we further assume that $F_t$ is $\sigma$-strongly convex for all $t$, then
\label{lem:static-part}
\[
\mathbf {(I)} \le \tfrac{G^2}{2\sigma}\,T \;+\; R \sqrt{2d}\,\mu.
\], where $\mu$ is the scale parameter of the Laplace distribution.
\end{lemma}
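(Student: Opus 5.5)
We bound each summand of $\mathbf{(I)}$, namely $\E[F_t(\tilde{\vec x}_t)-F_t(\tilde{\vec x}_{t+1})]$, through the perturbed objectives. The plan is to introduce $\Phi_t(\vec x)\doteq F_t(\vec x)+\langle \vec p_t,\vec x\rangle$ and $\Phi_{t-1}(\vec x)\doteq F_{t-1}(\vec x)+\langle \vec p_{t-1},\vec x\rangle$, so that $\tilde{\vec x}_{t+1}$ minimizes $\Phi_t$ and $\tilde{\vec x}_t$ minimizes $\Phi_{t-1}$. We then write
\[
F_t(\tilde{\vec x}_t)-F_t(\tilde{\vec x}_{t+1})
=\bigl[\Phi_t(\tilde{\vec x}_t)-\Phi_t(\tilde{\vec x}_{t+1})\bigr]
-\langle \vec p_t,\tilde{\vec x}_t-\tilde{\vec x}_{t+1}\rangle .
\]

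\textbf{First bracket.} We use the standard one-step FTRL stability argument. Since $\Phi_t=\Phi_{t-1}+f_t+\langle\vec p_t-\vec p_{t-1},\cdot\rangle$, the cleanest route is to take the perturbation \emph{fixed} across rounds in the analysis. This is legitimate: only the marginals $\tilde{\vec x}_t\sim\mathcal Q_t$ enter $\E[f_t(\tilde{\vec x}_t)]$, and Lemma~\ref{lem:strong-ftrl-perturbed-regret} only uses expectations of per-round quantities, so one may couple all fresh minimizers through a common $\vec p\sim\mathrm{Lap}(\mu)$ without changing any expectation. With a common $\vec p$, $\Phi_t=\Phi_{t-1}+f_t$, and both $\Phi_{t-1}$ and $\Phi_t$ are $\sigma$-strongly convex. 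Optimality of $\tilde{\vec x}_t$ for $\Phi_{t-1}$ and strong convexity give
\[
\Phi_t(\tilde{\vec x}_t)-\Phi_t(\tilde{\vec x}_{t+1})\le \langle\nabla f_t(\tilde{\vec x}_t),\tilde{\vec x}_t-\tilde{\vec x}_{t+1}\rangle-\tfrac{\sigma}{2}\|\tilde{\vec x}_t-\tilde{\vec x}_{t+1}\|^2\le \tfrac{G^2}{2\sigma},
\]
using $\|\nabla f_t\|\le G$ and maximizing the quadratic. Summing over $t$ gives the $\frac{G^2}{2\sigma}T$ term.

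\textbf{Perturbation terms.} With a common $\vec p$, the linear terms telescope:
\[
-\sum_{t=1}^T\langle\vec p,\tilde{\vec x}_t-\tilde{\vec x}_{t+1}\rangle=-\langle\vec p,\tilde{\vec x}_1-\tilde{\vec x}_{T+1}\rangle\le \|\vec p\|\,\|\tilde{\vec x}_1-\tilde{\vec x}_{T+1}\|.
\]
We bound this by $2R\,\E\|\vec p\|_2$, or more sharply by $R\,\E\|\vec p\|_2$ if we center the comparison at the origin. The terminal convention of Lemma~\ref{lem:strong-ftrl-perturbed-regret} sets the last perturbation to $\vec 0$, so only one boundary term survives, giving $R\|\vec p\|$. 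Jensen then yields $\E\|\vec p\|_2\le\sqrt{\E\|\vec p\|_2^2}=\sqrt{d\cdot 2\mu^2}=\sqrt{2d}\,\mu$, since each Laplace coordinate has variance $2\mu^2$. Together these give $R\sqrt{2d}\,\mu$, matching the claimed bound.

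\textbf{Main obstacle.} The delicate step is justifying the common-perturbation coupling, given the terminal-round convention $\vec p_T=\vec 0$ and the fact that $\mathbf{(I)}$ is written with fresh minimizers. We must check that $\mathbf{(I)}$ depends only on pairs $(\tilde{\vec x}_t,\tilde{\vec x}_{t+1})$ through expectations. It does not depend only on marginals, since it involves the joint law of the pair, but Lemma~\ref{lem:strong-ftrl-perturbed-regret}'s inequality holds pathwise for any coupling. Hence we are free to instantiate it with a shared $\vec p$ and with the last round unperturbed. The resulting boundary term is $\langle\vec p,\tilde{\vec x}_{T}-\tilde{\vec x}_{T+1}\rangle$-type and is bounded by $R\|\vec p\|_2$ after accounting for the norm of a single point.
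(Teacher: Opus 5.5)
Your strategy is the paper's: rewrite each summand through a perturbed objective, get $\frac{G^2}{2\sigma}$ per round from strong convexity plus optimality of $\tilde{\vec x}_t$, replace the i.i.d.\ perturbations by one shared $\vec p_0$, telescope, and use $\E\|\vec p_0\|\le\sqrt{2d}\,\mu$. \textbf{The execution breaks at the last round.} You attach $\vec p_t$ (the perturbation of $\tilde{\vec x}_{t+1}$) to $F_t$, so your stability step needs $\vec p_t=\vec p_{t-1}$. But the $\tilde{\vec x}_{T+1}$ in $\mathbf{(I)}$ is, by the convention of Lemma~\ref{lem:strong-ftrl-perturbed-regret}, the unperturbed point $\argmin F_T$, so $\vec p_T=\vec 0\neq\vec p_{T-1}$. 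Then $\Phi_T=F_T$ and $\nabla\Phi_T(\tilde{\vec x}_T)=\nabla f_T(\tilde{\vec x}_T)-\vec p_0$. The round-$T$ bracket is therefore only bounded by $\frac{G^2}{2\sigma}+\langle\vec p_0,\tilde{\vec x}_{T+1}-\tilde{\vec x}_T\rangle$, and your accounting drops this extra term. Your telescoped sum is then $\langle\vec p_0,\tilde{\vec x}_T-\tilde{\vec x}_1\rangle$, whose two endpoints both depend on $\vec p_0$, so ``only one boundary term survives'' is false for it. If instead you keep $\vec p_T=\vec p_0$, the stability step works, but you bound a different quantity. Replacing $\argmin F_T$ by any other point can only decrease $-F_T(\tilde{\vec x}_{T+1})$, so that bound does not control the actual $\mathbf{(I)}$. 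Moreover, the boundary term $\langle\vec p_0,\tilde{\vec x}_{T+1}-\tilde{\vec x}_1\rangle$ then has two $\vec p_0$-dependent endpoints and only gives $D\,\E\|\vec p_0\|\le 2R\sqrt{2d}\,\mu$. ``Centering at the origin'' does not help, since $\|\tilde{\vec x}_{T+1}-\tilde{\vec x}_1\|$ can equal $D=2R$.

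\textbf{The repair is the paper's choice} $\tilde F_t\doteq F_t+\langle\vec p_{t-1},\cdot\rangle$. Optimality of $\tilde{\vec x}_t$ alone gives $\nabla\tilde F_t(\tilde{\vec x}_t)=\nabla f_t(\tilde{\vec x}_t)$, and the strong-convexity inequality holds against any second point. Hence $F_t(\tilde{\vec x}_t)-F_t(\tilde{\vec x}_{t+1})\le\frac{G^2}{2\sigma}+\langle\vec p_{t-1},\tilde{\vec x}_{t+1}-\tilde{\vec x}_t\rangle$ holds pathwise for every $t\le T$, under any joint law, including with the unperturbed terminal point. Equivalently, restore the dropped round-$T$ term in your version. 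Under the shared law the linear terms telescope to $\langle\vec p_0,\tilde{\vec x}_{T+1}-\tilde{\vec x}_1\rangle$. Since $\tilde{\vec x}_{T+1}=\argmin F_T$ is deterministic and $\E\vec p_0=\vec 0$, its expectation is $-\E\langle\vec p_0,\tilde{\vec x}_1\rangle\le R\,\E\|\vec p_0\|$. This zero-mean observation, not a norm bound on $\tilde{\vec x}_{T+1}-\tilde{\vec x}_1$, is what yields the constant $R$. (The paper's intermediate bound $\|\tilde{\vec x}_{T+1}-\tilde{\vec x}_1\|\le R$ is not available in general; this argument is what justifies its final $R\,\E\|\vec p_0\|$.) Finally, your ``main obstacle'' paragraph is wrong that $\E[\mathbf{(I)}]$ depends on joint laws. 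Each summand is $\E F_t(\tilde{\vec x}_t)-\E F_t(\tilde{\vec x}_{t+1})$, a function of single marginals, so $\E[\mathbf{(I)}]$ is identical under the i.i.d.\ and shared laws. That linearity is what licenses the coupling for the lemma as stated. Routing through the pathwise inequality of Lemma~\ref{lem:strong-ftrl-perturbed-regret} would only bound the downstream regret, not $\mathbf{(I)}$.
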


\begin{proof}[Proof of \cref{lem:static-part}]
For each $t$, define
\begin{align}
    \tilde F_t(\cdot) \doteq F_t(\cdot) + \dtp{\vec p_{t-1}}{\cdot}.
    \label{eq:aux-ti-F}
\end{align}
Then,
\begin{align}
    &F_t(\ti{\vec x}_t)-F_t(\ti{\vec x}_{t+1})
    \\
    &= F_t(\ti{\vec x}_t)+\dtp{\vec p_{t-1}}{\ti{\vec x}_t}
    -\Big(F_t(\ti{\vec x}_{t+1})+\dtp{\vec p_{t-1}}{\ti{\vec x}_{t+1}}\Big)
    +\dtp{\vec p_{t-1}}{\ti{\vec x}_{t+1}-\ti{\vec x}_t}
    \\
    &= \tilde F_t(\ti{\vec x}_t)-\tilde F_t(\ti{\vec x}_{t+1})
    +\dtp{\vec p_{t-1}}{\ti{\vec x}_{t+1}-\ti{\vec x}_t}.
    \label{eq:static-decomp}
\end{align}

By strong convexity of $\tilde F_t$, for
\[
\vec s_t \doteq \nabla F_t(\ti{\vec x}_t)+\vec p_{t-1},
\]
we have
\begin{align}
    \tilde F_t(\ti{\vec x}_t)-\tilde F_t(\ti{\vec x}_{t+1})
    \;\le\;
    \dtp{\vec s_t}{\ti{\vec x}_t-\ti{\vec x}_{t+1}}
    -\frac{\sigma}{2}\delta_t^2,
    \label{eq:sc-actions}
\end{align}
where
\[
\delta_t \doteq \|\ti{\vec x}_{t+1}-\ti{\vec x}_t\|.
\]

The optimality condition of the update step at $\ti{\vec x}_t$ gives
\[
0=\nabla F_{t-1}(\ti{\vec x}_t)+\vec p_{t-1},
\]
and therefore
\[
\nabla f_t(\ti{\vec x}_t)
=
\nabla F_t(\ti{\vec x}_t)+\vec p_{t-1}
=
\vec s_t.
\]
Thus, substituting into \eqref{eq:sc-actions} yields
\begin{align}
    \tilde F_t(\ti{\vec x}_t)-\tilde F_t(\ti{\vec x}_{t+1})
    &\le
    \dtp{\vec g_t}{\ti{\vec x}_t-\ti{\vec x}_{t+1}}
    -\frac{\sigma}{2}\delta_t^2
    \\
    &\le
    G\delta_t-\frac{\sigma}{2}\delta_t^2\le
    \frac{G^2}{2\sigma},
    \label{eq:static1}
\end{align}
where $\vec g_t \doteq \nabla f_t(\ti{\vec x}_t)$. The second inequality follows from Cauchy--Schwarz and Lipschitzness, and the third from the elementary inequality
\[
ax-\frac{b}{2}x^2 \le \frac{a^2}{2b},
\qquad a,b>0.
\]

Combining \eqref{eq:static-decomp} and \eqref{eq:static1}, and summing over $t=1,\dots,T$, we obtain the following pathwise bound:
\begin{align}
    \sum_{t=1}^T \Big(F_t(\ti{\vec x}_t)-F_t(\ti{\vec x}_{t+1})\Big)
    \le
    \frac{G^2}{2\sigma}T
    +
    \sum_{t=1}^T \dtp{\vec p_{t-1}}{\ti{\vec x}_{t+1}-\ti{\vec x}_t}.
    \label{eq:pathwise-static}
\end{align}

\vspace{2mm}

Now let $\mathcal A$ denote the original perturbation law, under which the vectors $\{\vec p_t\}_{t=0}^{T-1}$ are i.i.d.\ Laplace. Let $\mathcal B$ be any other law on the perturbations such that each $\vec p_t$ has the same marginal distribution as under $\mathcal A$.

Recall that $\ti{\vec x}_t$, by definition, depends only on the randomness of $\vec p_{t-1}$. Therefore, as long as the perturbation marginals coincide, the induced marginal distribution of each $\ti{\vec x}_t$ is identical under $\mathcal A$ and $\mathcal B$. Hence,
\begin{align}
    &\sum_{t=1}^T \expec{\mathcal A}{F_t(\ti{\vec x}_t)-F_t(\ti{\vec x}_{t+1})} =
    \sum_{t=1}^T \expec{\mathcal B}{F_t(\ti{\vec x}_t)-F_t(\ti{\vec x}_{t+1})}.
    \label{eq:AB-equality}
\end{align}

We now choose $\mathcal B$ to simplify the perturbation term in \eqref{eq:pathwise-static}. Namely, let $\mathcal B$ be the law defined by
\begin{align}
    \label{eq:alternative-B}
    \vec p_0 \sim \mathrm{Lap}(\mu),
    \qquad
    \vec p_t=\vec p_0
    \quad \forall t\in[T{-}1].
\end{align}
This law matches the single-time marginal distribution of each $\vec p_t$ under $\mathcal A$, since under $\mathcal A$ the perturbations $\vec p_t$ are i.i.d.\ Laplace.

Taking expectation of \eqref{eq:pathwise-static} under $\mathcal B$, and using \eqref{eq:AB-equality}, gives
\begin{align}
    \mathbf{(I)}
    &=
    \sum_{t=1}^T \expec{\mathcal A}{F_t(\ti{\vec x}_t)-F_t(\ti{\vec x}_{t+1})}
    \\
    &=
    \sum_{t=1}^T \expec{\mathcal B}{F_t(\ti{\vec x}_t)-F_t(\ti{\vec x}_{t+1})} \le
    \frac{G^2}{2\sigma}T
    +
    \sum_{t=1}^T \expec{\mathcal B}{\dtp{\vec p_{t-1}}{\ti{\vec x}_{t+1}-\ti{\vec x}_t}}.
    \label{eq:static-after-B}
\end{align}

To bound the remaining sum, we rewrite it as
\begin{align}
    \label{eq:sum-rewrite}
    \sum_{t=1}^T \dtp{\vec p_{t-1}}{\ti{\vec x}_{t+1}-\ti{\vec x}_t}
    =
    \dtp{\vec p_{T-1}}{\ti{\vec x}_{T+1}}
    -
    \dtp{\vec p_0}{\ti{\vec x}_1}
    -
    \sum_{t=2}^{T}\dtp{\vec p_{t-1}-\vec p_{t-2}}{\ti{\vec x}_{t}}.
\end{align}
Under $\mathcal B$, we have $\vec p_{t-1}-\vec p_{t-2}=0$ for all $t\ge 2$, hence
\begin{align}
    \sum_{t=1}^T \expec{\mathcal B}{\dtp{\vec p_{t-1}}{\ti{\vec x}_{t+1}-\ti{\vec x}_t}}
    &=
    \expec{\mathcal B}{\dtp{\vec p_0}{\ti{\vec x}_{T+1}-\ti{\vec x}_1}}.
\end{align}
Therefore,
\begin{align}
    \left|
    \sum_{t=1}^T \expec{\mathcal B}{\dtp{\vec p_{t-1}}{\ti{\vec x}_{t+1}-\ti{\vec x}_t}}
    \right|
    &\le
    \expec{\mathcal B}{\|\vec p_0\|\,\|\ti{\vec x}_{T+1}-\ti{\vec x}_1\|}
    \\
    &\le
    R\,\expec{}{\|\vec p_0\|}
    \\
    &\le
    R\sqrt{2d}\,\mu,
\end{align}
where the last inequality follows from the bound on the first absolute moment,
\[
\mathbb E\|\vec p_0\| \le \sqrt{2d}\,\mu,
\]
of a $d$-dimensional Laplace distribution with scale parameter $\mu$.

Substituting this into \eqref{eq:static-after-B} completes the proof.
\end{proof}

\begin{lemma}
  Under the same conditions as \cref{lem:static-part}, we have that
\label{lem:dynamic-part-2}
\begin{align}
  \mathbf{(II)}  \leq (T G + \sigma R + C G_c \sqrt{T})\;P_T
\end{align}
\end{lemma}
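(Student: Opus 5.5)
We bound each summand of $\mathbf{(II)}$ by a Lipschitz estimate for $F_t$ along the segment from $\vec u^\gamma_t$ to $\vec u^\gamma_{t+1}$. Here $\vec u^\gamma_t \doteq (1-\gamma)\vec u_t\in\mathcal X^\gamma$, so $\|\vec u^\gamma_{t+1}-\vec u^\gamma_t\| = (1-\gamma)\|\vec u_{t+1}-\vec u_t\|\le \|\vec u_{t+1}-\vec u_t\|$. Since $\mathcal X^\gamma$ is convex, the segment between the two points lies in $\mathcal X^\gamma$, where every term of $F_t$ is differentiable. By the mean value theorem,
\[
F_t(\vec u^\gamma_{t+1})-F_t(\vec u^\gamma_t)\le \sup_{\vec z\in\mathcal X^\gamma}\|\nabla F_t(\vec z)\|\,\|\vec u_{t+1}-\vec u_t\|.
\]
It then suffices to show $\sup_{\mathcal X^\gamma}\|\nabla F_t\|\le TG+\sigma R+CG_c\sqrt T$ uniformly in $t\le T-1$ and sum over $t$, which gives the factor $P_T$.

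We split $\nabla F_t=\sum_{\tau=1}^t\nabla f_\tau+\nabla r+\nabla b_\gamma$ and bound the three pieces separately.
\begin{itemize}
\item \emph{Loss part.} By Lipschitzness (\cref{assump:losses}) it is at most $tG\le TG$.
\item \emph{Regularizer part.} $\|\nabla r(\vec z)\|=\sigma\|\vec z\|\le\sigma R$.
\item \emph{Barrier part.} $\nabla b_\gamma(\vec z)=-\frac{1}{M_\gamma}\sum_{c}\nabla s_c(\vec z)/s_c(\vec z)$, so its norm is at most $\frac{1}{M_\gamma}\sum_c G_c/s_c(\vec z)$ by the definition of $G_c$ on $\mathcal X^\gamma$.
\end{itemize}

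The main obstacle is lower bounding $s_c$ on the shrunk domain. Take $\vec z=(1-\gamma)\vec y$ with $\vec y\in\mathcal X$. Concavity of $s_c$ together with $s_c(\vec 0)=1$ and $s_c(\vec y)\ge 0$ gives
\[
s_c(\vec z)\ge (1-\gamma)s_c(\vec y)+\gamma s_c(\vec 0)\ge\gamma .
\]
Hence $\|\nabla b_\gamma(\vec z)\|\le CG_c/(\gamma M_\gamma)\le CG_c\sqrt T$, using $M_\gamma\ge1$ and $\gamma=1/\sqrt T$. If the constants $G_c$ differ across $c$, $CG_c$ should be read as $\sum_c G_c$, or $G_c$ as their maximum.

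One subtlety remains: a segment endpoint $\vec u^\gamma_t$ could lie on the boundary of $\mathcal X^\gamma$. The bound $s_c\ge\gamma>0$ still holds there, so $F_t$ stays finite and differentiable along the whole segment and the mean value step is valid. Combining the three pieces and summing over $t=1,\dots,T-1$ yields $\mathbf{(II)}\le(TG+\sigma R+CG_c\sqrt T)P_T$.
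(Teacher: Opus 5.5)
Your proposal is correct and follows essentially the same route as the paper: the same three-way split of $\nabla F_t$ with bounds $TG$, $\sigma R$, and $CG_c/\gamma = CG_c\sqrt{T}$ (using concavity to get $s_c\ge\gamma$ on $\mathcal X^\gamma$), followed by summation against the path length. The only difference is cosmetic: the paper bounds each increment via the strong-convexity inequality using the gradient at $\vec u^\gamma_{t+1}$ and drops the negative quadratic term, whereas you use the mean value theorem with a gradient bound that holds uniformly on the convex set $\mathcal X^\gamma$, which works equally well.
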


\begin{proof}[Proof of \cref{lem:dynamic-part-2}]
   From strong convexity
\begin{align}
    F_t(\vec u^\gamma_{t+1}) -  F_t(\vec u^\gamma_t) \leq \|\vec q_t\| \| \vec u^\gamma_{t+1} - \vec u^\gamma_{t}\| 
    - \frac{\sigma}{2} \| \vec u^\gamma_{t+1} - \vec u^\gamma_{t}\|^2, \label{eq:h-convex-bound}
\end{align}
where $\vec q_t = \grd F_t(\vec u^\gamma_{t+1})$.

To bound $\|\vec q_t\|$:
\begin{align}
    \grd F_t(\vec x) &= \grd f_{1:t}(\vec x) + \sigma \vec x + \grd b_\gamma(\vec x).
    \\
    &=\grd f_{1:t}(\vec x) + \sigma \vec x - \sum_{c=1}^C \frac{1}{M_\gamma s_c(\vec x)} \grd s_c(\vec x).
\end{align}

$\bullet$ For $\|\grd b_\gamma(\vec u_{t+1}^\gamma)\|$:

By definition, any $\vec {y}\in\mathcal{X}^\gamma$ can be written as $\vec {y}=(1-\gamma)\vec {x}$ for some $\vec {x}\in\mathcal{X}$, hence
$\vec {y}=(1-\gamma)\vec {x}+\gamma\vec {0}$.
Since $s_c$ is concave,
\[
s_c(\vec {y})
\ge (1-\gamma)s_c(\vec {x})+\gamma s_c(\vec {0})
\ge \gamma,
\]
where we used $s_c(\vec {x})\ge 0$ for $\vec {x}\in\mathcal{X}$ and the normalization $s_c(\vec {0})=1$.

The gradient of the scaled barrier is
\[
\nabla b_\gamma(\vec {x})
=
-\frac{1}{M_\gamma}\sum_{c=1}^C \frac{1}{s_c(\vec {x})}\,\nabla s_c(\vec {x}).
\]
Since $\|\nabla s_c(\vec {x})\|\le G_c$ for all vectors in the shrunk set $\vec {x}\in\mathcal{X}^\gamma$, then using
$s_c(\vec {x})\ge \gamma$ on $\mathcal{X}^\gamma$ yields
\[
\sup_{\vec {x}\in\mathcal{X}^\gamma}\|\nabla b_\gamma(\vec {x})\|
\le
\frac{1}{M_\gamma}\sum_{c=1}^C \frac{G_c}{\gamma}
=
\frac{1}{\gamma M_\gamma}\sum_{c=1}^C G_c \leq \frac{C G_c}{\gamma}.
\]

Finally, setting $\gamma = 1/\sqrt{T}$, we get $\|\grd b_\gamma(\vec u_{t+1}^\gamma)\| \leq C G_c \sqrt{T}$.

$\bullet$ For $\sigma \|\vec u_{t+1}^\gamma \| \leq (1-\gamma) \sigma R\leq \sigma R$

$\bullet$ For $\|\grd f_{1:t}(\vec u_{t+1}^\gamma)\|$, we can only do $\|\grd f_{1:t}(\vec u_{t+1}^\gamma)\| \leq t G \leq T G$.

Combining the three bounds gives
\[
\|\vec q_t\|
\le
TG+\sigma R+C G_c\sqrt T.
\]
Substituting this into \eqref{eq:h-convex-bound}, dropping the negative quadratic term, and summing over \(t=1,\dots,T-1\), we obtain
\[
\mathbf{(II)}
\le
(TG+\sigma R+C G_c\sqrt T)
\sum_{t=1}^{T-1}\|\vec u_{t+1}^\gamma-\vec u_t^\gamma\|.
\]
Since \(\sum_{t=1}^{T-1}\|\vec u_{t+1}^\gamma-\vec u_t^\gamma\| \leq P_T\), the result follows.
\end{proof}

\begin{lemma}
  \label{lem:switching-part-3}
Suppose the loss sequence $\{f_t\}$ consists of convex, $G$-Lipschitz, and $\beta$-smooth functions. Then, the total variation distance between $\cQ_t$ and $\cQ_{t+1}$ is bounded by
\[
\|\cQ_{t+1} - \cQ_t\|_{TV} \leq \frac{\beta d}{\sigma} + \frac{\sqrt{d}\,G}{\mu}.
\]
\end{lemma}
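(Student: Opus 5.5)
We use the closed-form density of \cref{lem:fresh-minimizer-density}. Write $\cQ_{t+1}(\vec x)=\nu(-\nabla F_t(\vec x))\,|\det\nabla^2F_t(\vec x)|$ and $\cQ_t(\vec x)=\nu(-\nabla F_{t-1}(\vec x))\,|\det\nabla^2F_{t-1}(\vec x)|$, with $F_t=F_{t-1}+f_t$. We bound the TV distance through the pointwise likelihood ratio. Since $\|\cQ_{t+1}-\cQ_t\|_{\mathrm{TV}}=\E_{\vec x\sim\cQ_{t+1}}[(1-\cQ_t(\vec x)/\cQ_{t+1}(\vec x))_+]$ and $1-e^{-a}\le a$, it suffices to show
\[
\log\frac{\cQ_{t+1}(\vec x)}{\cQ_t(\vec x)}\le \frac{\beta d}{\sigma}+\frac{\sqrt d\,G}{\mu}
\]
for every $\vec x\in\mathrm{int}(\mathcal X)$. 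If needed, we also average over the law of $\vec x$; this only helps. The log-ratio splits into a Laplace factor and a determinant factor, and we bound each separately.

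\textbf{Laplace factor.} We have $\log\nu(-\nabla F_t(\vec x))-\log\nu(-\nabla F_{t-1}(\vec x))=\tfrac1\mu\bigl(\|\nabla F_{t-1}(\vec x)\|_1-\|\nabla F_t(\vec x)\|_1\bigr)$. By the triangle inequality this is at most $\tfrac1\mu\|\nabla f_t(\vec x)\|_1\le \tfrac{\sqrt d}{\mu}\|\nabla f_t(\vec x)\|_2\le \tfrac{\sqrt d\,G}{\mu}$, using Lipschitzness from \cref{assump:losses}.

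\textbf{Determinant factor.} Let $A\doteq\nabla^2F_{t-1}(\vec x)\succeq\sigma I$, which holds because $r$ contributes $\sigma I$ and the barrier and past losses are convex. Let $B\doteq\nabla^2 f_t(\vec x)$, so that $0\preceq B\preceq\beta I$. Then
\[
\log\frac{\det(A+B)}{\det A}=\log\det(I+A^{-1/2}BA^{-1/2}).
\]
The matrix $A^{-1/2}BA^{-1/2}$ is PSD with eigenvalues at most $\beta/\sigma$. Hence this quantity is at most $\mathrm{tr}(A^{-1/2}BA^{-1/2})\le d\beta/\sigma$, using $\log(1+a)\le a$.

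Adding the two factors gives the pointwise log-ratio bound, and hence the TV bound. The main subtlety is the direction of the ratio. The argument above bounds $\cQ_{t+1}/\cQ_t$ from above. The TV formula we use needs a lower bound on $\cQ_t/\cQ_{t+1}$, which is exactly $\exp(-\log(\cQ_{t+1}/\cQ_t))\ge 1-\epsilon$ with $\epsilon$ the sum of the two factor bounds, so the directions match. We also check the support and boundary issues. Both densities live on $\mathrm{int}(\mathcal X)$, since the barrier keeps minimizers interior. Each map $-\nabla F$ is a bijection onto $\mathbb R^d$, so both are genuine probability densities there and no mass sits on the boundary. We then conclude with $\|\cQ_{t+1}-\cQ_t\|_{\mathrm{TV}}\le \int\cQ_{t+1}(1-e^{-\epsilon})\le\epsilon$.
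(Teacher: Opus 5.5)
Your proposal is correct and follows essentially the same route as the paper. Both arguments bound the pointwise ratio $\cQ_{t+1}/\cQ_t$ by splitting it into a Laplace factor (triangle inequality plus $\|\cdot\|_1\le\sqrt d\|\cdot\|_2$) and a Jacobian factor, then convert this to TV via $1-e^{-\epsilon}\le\epsilon$. The differences are minor: you bound the determinant with $\log\det(I+A^{-1/2}BA^{-1/2})\le\mathrm{tr}(A^{-1/2}BA^{-1/2})\le d\beta/\sigma$ instead of Weyl's eigenvalue comparison, and your one-sided formula $\int(\cQ_{t+1}-\cQ_t)_+$ makes the paper's second pass with the reciprocal ratio unnecessary.
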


\begin{proof}[Proof of \cref{lem:switching-part-3}]
    By the change of variables formula, the density ratio satisfies
\[
\frac{\cQ_{t+1}(\vec x)}{\cQ_t(\vec x)}
=
\frac{\nu(-\nabla F_{t}(\vec x))}{\nu(-\nabla F_{t-1}(\vec x))}
\cdot
\Big|\frac{\det(-\nabla^2 F_{t}(\vec x))}{\det(-\nabla^2 F_{t-1}(\vec x))}\Big|\]

We bound each component separately.

\textit{Determinant ratio.}
Let $\lambda_{i,t}$ denote the $i$-th largest eigenvalue of the symmetric matrix $\nabla^2 F_t(\vec x)$. Because the newly revealed loss function $f_t$ is $\beta$-smooth, the spectral norm (operator norm) of its Hessian is bounded by $\beta$. By applying Weyl's inequality, which bounds the change in eigenvalues by the spectral norm of the difference, we obtain
\[
    |\lambda_{i,t} - \lambda_{i,t-1}| \leq \|\nabla^2 F_t(\vec x) - \nabla^2 F_{t-1}(\vec x)\|_{\mathrm{op}} = \|\nabla^2 f_t(\vec x)\|_{\mathrm{op}} \leq \beta
\]

Furthermore, since $F_t$ is also $\sigma$-strongly convex, all eigenvalues satisfy $\lambda_{i,t} \geq \sigma >0, \forall t$. Therefore,
\[
\frac{\det(\nabla^2 F_{t}(\vec x))}{\det(\nabla^2 F_{t-1}(\vec x))}
\leq
\prod_{i=1}^d
\left(\frac{\lambda_{i,t-1} + \beta}{\lambda_{i,t-1}}\right) \leq \prod_{i=1}^d \left(1+\frac{\beta}{\sigma}\right)
=
\left(1 + \frac{\beta}{\sigma}\right)^d
\leq
\exp\left(\frac{\beta d}{\sigma}\right).
\]

\textit{Density ratio.}
Using the $L_1$ structure of the Laplace distribution and the reverse triangle inequality,
\[
\frac{\nu(-\nabla F_{t}(\vec x))}{\nu(-\nabla F_{t-1}(\vec x))}
= \exp\left( \frac{\|\nabla F_{t-1}(\vec x)\|_1 - \|\nabla F_{t}(\vec x)\|_1}{\mu} \right)
\leq
\exp\left(
\frac{\|\nabla F_{t-1}(\vec x) - \nabla F_{t}(\vec x)\|_1}{\mu}
\right)
=
\exp\left(
\frac{\|\nabla f_t(\vec x)\|_1}{\mu}
\right).
\]

Since $f_t$ is $G$-Lipschitz,
\[
\|\nabla f_t(\vec x)\|_1
\leq
\sqrt{d}\,\|\nabla f_t(\vec x)\|_2
\leq
\sqrt{d}\,G.
\]
Thus,
\[
\frac{\nu(-\nabla F_{t}(\vec x))}{\nu(-\nabla F_{t-1}(\vec x))}\leq
\exp\left(
\frac{\sqrt{d}\,G}{\mu}
\right).
\]

Combining both bounds,
\[
\frac{\cQ_{t+1}(\vec x)}{\cQ_t(\vec x)}
\leq
\exp\left(
\frac{\beta d}{\sigma}
+
\frac{\sqrt{d}\,G}{\mu}
\right)
=
\exp(\epsilon),
\]
where
\[
\epsilon
\doteq \frac{\beta d}{\sigma}
+ \frac{\sqrt{d}\,G}{\mu}.
\]

Taking reciprocals gives
\[
\frac{\cQ_t(\vec x)}{\cQ_{t+1}(\vec x)}
\ge
\exp(-\epsilon),
\]
which implies
\[
\cQ_{t+1}(\vec x)-\cQ_t(\vec x)
\le
(1-e^{-\epsilon})\,\cQ_{t+1}(\vec x).
\]
Integrating over any measurable set $E$ yields
\[
\cQ_{t+1}(E)-\cQ_t(E)
\le
1-e^{-\epsilon}
\le
\epsilon.
\]

To also get $\cQ_{t}(E) - \cQ_{t+1}(E) \leq \epsilon$
we start the proof again but with the reciprocal bound $\cQ_t(\vec x)/\cQ_{t+1}(\vec x)$, and using $|\lambda_{i,t} - \lambda_{i,t-1}| \leq \beta$, we get
\[
\frac{\det(\nabla^2 F_{t-1}(\vec x))}{\det(\nabla^2 F_{t}(\vec x))}
\leq
\prod_{i=1}^d
\left(\frac{\lambda_{i,t} + \beta}{\lambda_{i,t}}\right)
\leq
\exp\left(\frac{\beta d}{\sigma}\right).
\]
The remaining proof steps are the same thereafter, and we get the same bound $\epsilon$. Hence, we have
\[
\|\cQ_{t+1} - \cQ_t\|_{TV}
\leq
\epsilon.
\]
\end{proof}

Now we are ready to prove the main theorem.
\begin{proof}[Proof of \cref{thm:agnostic-dynamic}]
By linearity of expectation,
\[
\expec{}{\Rind_T}
=
\expec{}{\R_T}
+
\lambda\sum_{t=2}^T \expec{}{\ind}.
\]
The bound on $\expec{}{\R_T}$ follows by combining \cref{lem:static-part} and \cref{lem:dynamic-part-2}. For the second term, the lazy sampling procedure implies
\[
\expec{}{\ind}
=
\Pr(\vec x_t \neq \vec x_{t-1})
=
\|\cQ_t - \cQ_{t-1}\|_{\mathrm{TV}}.
\]
Applying the bound on $\|\cQ_t - \cQ_{t-1}\|_{\mathrm{TV}}$ from \cref{lem:switching-part-3} yields the claimed result.
\end{proof}

\subsection{Proof of \cref{thm:dyadic-restart}}
\begin{proof}
Fix any dyadic scale \(k\in\mathcal H\), and partition the horizon into
\(m=T/k\) consecutive blocks
\[
B_i=\{(i-1)k+1,\dots,ik\},
\qquad i=1,\dots,m.
\]
For each block, define the within-block path budget
\[
P_i \doteq \sum_{t=(i-1)k+2}^{ik}\|\vec u_t- \vec u_{t-1}\|.
\]
Applying Theorem~\ref{thm:agnostic-dynamic} on each block with horizon \(k\) and
the given choices of \((\gamma,\mu,\sigma)\), we obtain
\[
\expec{} {\Rind_T(\comseq)}
\le
\sum_{i=1}^m
\left[
(R+P_i)\sqrt{(G^2+2\lambda\beta d)\,k}
+2^{5/4}\sqrt{\lambda dRG\,k}
+GkP_i
+\bigl(CG_cP_i+GR\bigr)\sqrt{k}
+1
\right]
+\lambda(m-1),
\]
where the $\lambda(m-1)$ term comes from the switching cost at the restart boundaries.
Since \(2^{5/4}<3\), \(\sum_{i=1}^m P_i\le P_T\), and \(m=T/k\), this implies
\[
\expec{} {\Rind_T(\comseq)}
\le
A\,\frac{T}{\sqrt{k}}
+
B\,P_T\sqrt{k}
+
GP_Tk.
\]
Because \(k\ge 1\), we have \(\sqrt{k}\le k\), and therefore
\[
\expec{} {\Rind_T(\comseq)}
\le
\phi(k)
\doteq
A\,\frac{T}{\sqrt{k}}
+
(B+G)P_T\,k.
\]

We now optimize \(\phi\) over \(k>0\). Differentiating gives
\[
\phi'(k)
=
-\frac{1}{2}AT\,k^{-3/2}
+(B+G)P_T,
\]
so the unique stationary point is
\[
\bar k
=
\left(\frac{AT}{2(B+G)P_T}\right)^{2/3}.
\]
After clipping to the admissible range \([1,T]\), define
\[
k^\star \doteq \min\{T,\max\{1,\bar k\}\}.
\]
Since \(\mathcal H\) is dyadic, there exists
\[
k^\dagger \doteq 2^{\lfloor\log_2 k^\star\rfloor}\in\mathcal H
\]
such that
\[
\frac{k^\star}{2}\le k^\dagger\le k^\star.
\]
Hence
\[
\phi(k^\dagger)
\le
\sqrt{2}\,A\,\frac{T}{\sqrt{k^\star}}
+
(B+G)P_T\,k^\star,
\]
which proves the first claim.

It remains to simplify the three regimes. If \(1\le \bar k\le T\), then
\(k^\star=\bar k\), and substituting \(\bar k\) into the display above yields
\[
\expec{} {\Rind_T(\comseq)}=
\bigo\left(A^{2/3}(B+G)^{1/3}T^{2/3}P_T^{1/3}\right).
\]
If \(\bar k<1\), then \(AT<2(B+G)P_T\), so \(k^\star=1\) and
\[
\expec{} {\Rind_T(\comseq)}\le
\sqrt{2}AT+(B+G)P_T
=
\bigo\left((B+G)P_T\right).
\]
If \(\bar k>T\), then \(AT>2(B+G)P_TT^{3/2}\), so \(k^\star=T\) and
\[
(B+G)P_TT
\le
\frac{A}{2}\sqrt{T}.
\]
Therefore,
\[
\mathbb E\left[\mathcal R_T^{\mathbf 1}(u_{1:T})\right]
\le
\sqrt{2}A\sqrt{T}+(B+G)P_TT
=
\bigo\left(A\sqrt{T}\right).
\]

Finally, if \(P_T=T^\alpha\) with \(\alpha\in[0,1]\), then
\[
P_T = T^\alpha \le T^{(2+\alpha)/3}
\qquad\text{and}\qquad
\sqrt{T}\le T^{(2+\alpha)/3},
\]
so both boundary branches are dominated by the interior rate
\[
T^{(2+\alpha)/3}
=
T^{2/3}P_T^{1/3}.
\]
This proves the final claim.
\end{proof}

\section{Dyadic intervals}
\label{sec:dyadic-intervals}

\subsection{Proof of \cref{lem:dyadic-cover}}
To prove the given expression, we need the following geometric covering fact about intervals:
\begin{lemma}\citet[Thm. 15.9]{orabona2021modern}
\label{lem:geo-intv}
Let $J=[q,r]\subseteq\mathbb{N}$ be an arbitrary interval. Then $J$ can be partitioned into two finite sequences of disjoint and consecutive intervals,
\[
(J_{-k},\dots,J_0)\subseteq \cI|_J,
\qquad
(J_1,\dots,J_p)\subseteq \cI|_J,
\]
such that
\begin{enumerate}
\item the intervals $J_{-k},\dots,J_0,J_1,\dots,J_p$ are consecutive and their union is $J$;
\item $|J_{-i}|/|J_{-i+1}|\le 1/2$ for every $i\ge 1$;
\item $|J_i|/|J_{i-1}|\le 1/2$ for every $i\ge 2$.
\end{enumerate}
\end{lemma}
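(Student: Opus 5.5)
The plan is a two-phase greedy construction, run left to right from $q$. I will use the standard geometric covering family: $\cI=\bigcup_{j\ge0}\cI_j$ with $\cI_j=\{[i2^j,(i+1)2^j-1]: i\in\mathbb N\}$, and $\cI|_J$ the members contained in $J$. For an integer $a\ge1$ let $\nu(a)$ be the $2$-adic valuation, i.e.\ the largest $v$ with $2^v\mid a$. The one fact I use repeatedly is that an interval of length $2^j$ starting at $a$ lies in $\cI_j$ exactly when $2^j\mid a$. (If the indexing is shifted so that intervals start at $i2^j+1$, the same argument applies after translating by one.)

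\textbf{Phase 1 (growing blocks).} Set $a\leftarrow q$. While the aligned block $[a,a+2^{\nu(a)}-1]$ fits in $J$, i.e.\ $a+2^{\nu(a)}-1\le r$, output it and set $a\leftarrow a+2^{\nu(a)}$.
\begin{itemize}
\item The output block lies in $\cI_{\nu(a)}$ by the alignment fact.
\item Write $a=2^{\nu(a)}\cdot(\text{odd})$. Then $a+2^{\nu(a)}=2^{\nu(a)}\cdot(\text{even})$, so $\nu$ increases by at least one at each step.
\item Hence each block is at least twice as long as the one before it. Index these blocks right to left as $J_0,J_{-1},\dots,J_{-k}$; then $|J_{-i}|/|J_{-i+1}|\le 1/2$ for every $i\ge1$, which is property~2.
\item Phase 1 terminates because the lengths double while staying at most $|J|$.
\end{itemize}

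\textbf{Phase 2 (shrinking blocks).} Phase 1 stops at a point $a$ with $v\doteq\nu(a)$ and remaining length $\ell\doteq r-a+1<2^v$. If $\ell=0$ we are done with $p=0$.
\begin{itemize}
\item Otherwise expand $\ell=\sum_{s=1}^p 2^{j_s}$ in binary with $v>j_1>j_2>\dots>j_p\ge0$.
\item Output consecutive blocks of lengths $2^{j_1},2^{j_2},\dots$ starting at $a$.
\item The $s$-th block starts at $a+\sum_{s'<s}2^{j_{s'}}$. Here $a$ is divisible by $2^v$ and each earlier term is divisible by $2^{j_s}$ (its exponent exceeds $j_s$), so the start is divisible by $2^{j_s}$. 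The block therefore lies in $\cI_{j_s}$, and it is contained in $J$ because the lengths add up to exactly $\ell$.
\item Call these blocks $J_1,\dots,J_p$. Strictly decreasing binary exponents give $|J_i|/|J_{i-1}|\le1/2$ for $i\ge2$, which is property~3.
\item No relation between $|J_0|$ and $|J_1|$ is required, so the junction between the phases needs no check.
\end{itemize}

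Consecutiveness and the fact that the union is $J$ follow directly from the construction, which gives property~1.

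\textbf{Main obstacle.} The delicate step is the alignment bookkeeping. I must show that every block, especially in Phase~2, is genuinely a member of $\cI$ and not merely an interval of power-of-two length. This reduces to the valuation argument above: in Phase~1, $\nu(a)$ strictly increases; in Phase~2, $\ell<2^{\nu(a)}$ ensures every binary exponent is below $v$, so alignment is inherited. I also need to handle the degenerate start $q=0$, where $\nu$ is undefined, by convention or by the index shift mentioned above.
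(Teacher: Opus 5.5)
Your proof is correct. The paper does not prove this lemma: it imports it from \citet[Thm.~15.9]{orabona2021modern}, and the result goes back to \citep{daniely2015strongly}. Your argument therefore supplies the omitted proof, and it is the standard one. Your two phases together amount to a single greedy rule: from the current left endpoint, take the longest member of $\cI|_J$ that starts there. The $2$-adic valuation makes the alignment explicit, and the binary-expansion step correctly shows that every Phase-2 block is aligned.

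Two bookkeeping points are worth adding.

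\begin{itemize}
\item \textbf{Phase~1 can be empty.} For $q=8$, $r=10$, the block $[8,15]$ does not fit, so you go straight to Phase~2 and produce $[8,9],[10,10]$ with no $J_0$. The statement's indexing $(J_{-k},\dots,J_0)$ presumes a $J_0$, and so does the count $m=k+p+1$ in the proof of \cref{lem:dyadic-cover}. The fix is to relabel the first Phase-2 block as $J_0$ and shift the rest down. This preserves property~3, because the binary exponents strictly decrease, and leaves property~2 vacuous.
\item \textbf{The paper's shifted family and the degenerate start.} The paper actually applies the lemma to $\mathcal G$, whose blocks are $[(i-1)2^j+1,\,i2^j]$. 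Your translation then means taking the valuation of $a-1$, and the start $q=1$ is exactly the degenerate case $a-1=0$ you flagged. This case matters, since segments starting at round $1$ are used later. The convention $\nu(0)=\infty$ handles it with no extra work: Phase~1 is empty, and Phase~2 is the binary expansion of $|J|$. Each offset is divisible by the next block length, so every block is aligned. Finally, the truncation $i2^j\le T$ in $\mathcal G$ is automatic, because every block lies inside $J\subseteq[T]$.
\end{itemize}
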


\begin{proof}
Apply \cref{lem:geo-intv} to $I=[s,e]$. We obtain two sequences
\[
(J_{-k},\dots,J_0)\subseteq \cI|_I,
\qquad
(J_1,\dots,J_p)\subseteq \cI|_I,
\]
whose union is $I$. Relabeling from left to right gives a partition
$J_1,\dots,J_m\in\mathcal G$, where $m=k+p+1$.

Since every interval in $\cI$ has dyadic length, and the ratios in
\cref{lem:geo-intv} are at most $1/2$, the block lengths on each side lie on
distinct dyadic scales. As every block has length at most $L$, each side
contains at most $\lceil \log_2 L\rceil+1$ blocks. Therefore
\[
m\le 2\lceil \log_2 L\rceil+2,
\]

It remains to bound the sum of square roots. For the left side, write
\[
a_i\doteq|J_{-i}|,\qquad i=0,\dots,k,
\]
and define the tail sums
\[
S_i\doteq\sum_{j=i}^k a_j,\qquad i=0,\dots,k,
\qquad S_{k+1}\doteq0.
\]
Because the lengths decrease geometrically,
\[
S_{i+1}\le \sum_{r\ge 1}\frac{a_i}{2^r}\le a_i.
\]
Hence
\[
S_i=a_i+S_{i+1}\le 2a_i
\]
Therefore
\[
\sqrt{S_i}+\sqrt{S_{i+1}}
\le (\sqrt2+1)\sqrt{a_i},
\]
Multiplying both sides by $(\sqrt{S_i}-\sqrt{S_{i+1}})/\sqrt{a_i}$ allows us to bound $\sqrt{a_i}$ as follows:
\[
\sqrt{a_i}
\le (1+\sqrt2)\bigl(\sqrt{S_i}-\sqrt{S_{i+1}}\bigr).
\]
Summing over $i=0,\dots,k$ yields
\[
\sum_{i=0}^k \sqrt{|J_{-i}|}
\le (1+\sqrt2)\sum_{i=0}^k \bigl(\sqrt{S_i}-\sqrt{S_{i+1}}\bigr)
= (1+\sqrt2)\sqrt{S_0}.
\]
If we set
\[
A\doteq\sum_{i=0}^k |J_{-i}|,\qquad
B\doteq\sum_{i=1}^p |J_i|,
\]
then $S_0=A$, and the same argument on the right side gives
\[
\sum_{i=1}^p \sqrt{|J_i|}
\le (1+\sqrt2)\sqrt{B}.
\]
Since $A+B=L$, we conclude
\[
\sum_{r=1}^m \sqrt{|J_r|}
\le (1+\sqrt2)(\sqrt A+\sqrt B)
\le (1+\sqrt2)\sqrt{2(A+B)}
= (2+\sqrt2)\sqrt L.
\]
\end{proof}

\section{Proofs for meta-learner}
\label{sec:meta-learner-proofs}
\subsection{Proof for the one step}
\begin{proof}[Proof of \cref{lem:one-step-master-dyadic}]
Write
\[
\mathcal P_t-\mathcal P_{t-1}
=
\sum_{H\in\mathcal H} v_{t,H}\bigl(\mathcal Q_t^{(H)}-\mathcal Q_{t-1}^{(H)}\bigr)
+
\sum_{H\in\mathcal H} (v_{t,H}-v_{t-1,H})\,\mathcal Q_{t-1}^{(H)}.
\]
Taking total variation and using the triangle inequality yields
\[
\TV{\mathcal P_t-\mathcal P_{t-1}}
\le
\sum_{H\in\mathcal H} v_{t,H}\TV{\mathcal Q_t^{(H)}-\mathcal Q_{t-1}^{(H)}}
+
\TV{\sum_{H\in\mathcal H}(v_{t,H}-v_{t-1,H})\mathcal Q_{t-1}^{(H)}}.
\]
The first term is exactly
\[
\sum_{H\in\mathcal H} v_{t,H}\,c_t^{(H)}.
\]

For the second term, set
\[
a_H \doteq v_{t,H}-v_{t-1,H},
\qquad
A_+ \doteq \{H\in\mathcal H: a_H>0\},
\qquad
A_- \doteq \{H\in\mathcal H: a_H<0\}.
\]
Since both $\vec v_t$ and $\vec v_{t-1}$ lie in the simplex, $\sum_{H\in\mathcal H} a_H=0$.
Hence
\[
\alpha
\doteq
\sum_{H\in A_+} a_H
=
-\sum_{H\in A_-} a_H
=
\frac12\|\vec v_t-\vec v_{t-1}\|_1.
\]
If $\alpha=0$, there is nothing to prove. Otherwise,
\[
\sum_{H\in\mathcal H} a_H \mathcal Q_{t-1}^{(H)}
=
\alpha(\mu_+-\mu_-),
\]
where
\[
\mu_+
\doteq
\sum_{H\in A_+}\frac{a_H}{\alpha}\,\mathcal Q_{t-1}^{(H)},
\qquad
\mu_-
\doteq
\sum_{H\in A_-}\frac{-a_H}{\alpha}\,\mathcal Q_{t-1}^{(H)}
\]
are probability distributions (convex combination of probability distributions). Therefore,
\[
\TV{\sum_{H\in\mathcal H} a_H \mathcal Q_{t-1}^{(H)}}
=
\alpha\,\TV{\mu_+-\mu_-}
\le
\alpha
=
\frac12\|\vec v_t-\vec v_{t-1}\|_1.
\]
This proves \cref{eq:mixture-tv-dyadic}.

To pass from the meta dot product to the master expectation, define
\[
\vec \ell_t \doteq \bigl(\ell_t^{(H)}\bigr)_{H\in\mathcal H}\;,
\qquad
\vec c_t \doteq \bigl(c_t^{(H)}\bigr)_{H\in\mathcal H}\;,
\qquad
\vec g_t = \vec \ell_t + \lambda \vec c_t .
\]
By the lazy sampling properties (\cref{eq:lazy-marginal}), we have $\vec x_t\sim \mathcal P_t$. Also, using linearity of integration gives:

\begin{align}
\ip{\vec v_t}{\vec \ell_t}
&=
\sum_{H\in\mathcal H} v_{t,H}\,\ell_t^{(H)}
\stackrel{(\text{LOTUS})}{=}
\sum_{H\in\mathcal H} v_{t,H}\int_{\mathcal X} f_t(\vec x)\, d\mathcal Q_t^{(H)}(\vec x)
=
\int_{\mathcal X} f_t(\vec x)\,
d\Bigl(\sum_{H\in\mathcal H} v_{t,H}\mathcal Q_t^{(H)}\Bigr)(\vec x)
\\
&\stackrel{\cref{eq:master-mixture}}{=}
\int_{\mathcal X} f_t(\vec x)\, d\mathcal P_t(\vec x)
\stackrel{(\text{LOTUS})}{=}
\expec{\vec x_t\sim\cP_t}{f_t(\vec x_t)}
\label{eq:mixture-lotus}
\end{align}

Also, again by the lazy coupling identity \cref{eq:lazy-tv},
\[
\Pr(\vec x_t\neq \vec x_{t-1})
=
\TV{\mathcal P_t-\mathcal P_{t-1}}.
\]
Combining these two identities with \cref{eq:mixture-tv-dyadic} yields
\[
\E[f_t(\vec x_t)] + \lambda \Pr(\vec x_t\neq \vec x_{t-1})
\le
\ip{\vec v_t}{\vec \ell_t}
+
\lambda \ip{\vec v_t}{\vec c_t}
+
\frac{\lambda}{2}\|\vec v_t-\vec v_{t-1}\|_1
=
\ip{\vec v_t}{\vec g_t}
+
\frac{\lambda}{2}\|\vec v_t-\vec v_{t-1}\|_1.
\]
This proves \cref{eq:one-step-master-dyadic}.
\end{proof}

\subsection{The DM strongly adaptive regret result}
We use the following result from \citep{daniely2019competitive}: 
\begin{lemma}[\citep{daniely2019competitive}, Thm. 1]
\label{lem:dm-imported}
Fix a horizon $T$, a number of experts $N$, a switching cost $D\ge 0$, and an
oblivious loss sequence $l_1,\dots,l_T\in [0,1]^N$. Then there exists a randomized
online algorithm choosing experts $I_t\in[N]$ such that for every interval
$[s,e]\subseteq[T]$ of length $L=e-s+1$,
\begin{equation}
\label{eq:dm-imported}
\E\left[
\sum_{t=s}^{e} l_t(I_t)
+
D\sum_{t=s+1}^{e}\mathbf{1}\{I_t\neq I_{t-1}\}
\right]
\le
\min_{i\in[N]} \sum_{t=s}^{e} l_t(i)
+
C_{\mathrm{dm}}\sqrt{(D+1)L\log(NT)},
\end{equation}
where $C_{\mathrm{dm}}>0$ is universal.
\end{lemma}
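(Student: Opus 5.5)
Since \cref{lem:dm-imported} is imported verbatim from \citep{daniely2019competitive}, the plan is to reconstruct their argument rather than derive it from anything earlier in this paper. It has three ingredients: a two-expert primitive with asymmetric regret and few switches, a geometric-covering combination of such primitives, and a rescaling that absorbs the switching cost $D$.

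\textbf{Step 1 (two-expert primitive).} I would first use the discounted-normal predictor (DNP) of Kapralov--Panigrahy for two experts, a ``default'' and a ``challenger.'' Run over a horizon $n$, it plays a fractional weight $w_t\in[0,1]$ on the challenger. The weight is a slowly varying function of the discounted cumulative loss difference, and the target guarantees are:
\begin{itemize}
\item regret at most $\mathcal O(1)$ (or $\mathcal O(\sqrt{n}\,e^{-\Theta(\log(NT))})$) against the default;
\item regret at most $\mathcal O(\sqrt{n\log(NT)})$ against the challenger;
\item total weight movement $\sum_t|w_t-w_{t-1}| = \mathcal O(\sqrt{n\log(NT)})$.
\end{itemize}
The last property is the one that matters for switching costs. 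It comes from the Lipschitzness of the DNP's potential in the cumulative loss difference. Realizing $w_t$ by a maximally coupled Bernoulli makes the expected number of switches equal to the weight movement, exactly as in \cref{lem:lazy-sampling}.

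\textbf{Step 2 (geometric combination).} For each interval $J$ of the dyadic family $\mathcal G$, I would start at the beginning of $J$ a fresh DNP instance. Its challenger is an $N$-expert low-switching learner restarted on $J$ (e.g.\ a DNP tree over experts, or Shrinking Dartboard), and its default is the output of the combination built so far from intervals at coarser levels. Chaining these instances along the levels gives the following at each time $t$. The played expert is a composition of $\mathcal O(\log T)$ DNP layers, each costing only $\mathcal O(1)$ regret when it defers to its default. The layer corresponding to the matched interval pays $\mathcal O(\sqrt{|J|\log(NT)})$ when it follows its challenger. For an arbitrary $[s,e]$, I would then use the dyadic cover of \cref{lem:dyadic-cover}: the interval splits into $\mathcal O(\log L)$ dyadic pieces with $\sum_r\sqrt{|J_r|}\le C_{\mathrm{gc}}\sqrt L$. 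Summing the per-piece regret and switching bounds, plus one switch per piece boundary, gives $\mathcal O(\sqrt{L\log(NT)})$ for $D=1$.

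\textbf{Step 3 (scaling in $D$).} To obtain the $\sqrt{(D+1)L\log(NT)}$ dependence, I would tune each DNP instance with a learning-rate scale depending on $D$. Equivalently, I would feed it losses divided by $\sqrt{D+1}$, or use a blocking trick with block length $\propto D+1$. This balances the regret term against $D$ times the number of switches. One checks that the regret degrades by a factor $\sqrt{D+1}$ while the switch count drops by $1/\sqrt{D+1}$. The product then gives $\sqrt{(D+1)L\log(NT)}$ uniformly over intervals, since the tuning does not depend on $L$.

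\textbf{Main obstacle.} I expect the delicate step to be Step 2: ensuring that the $\mathcal O(\log T)$ stacked layers contribute only additive constants, in both regret and switches, on every interval where they defer. This requires the DNP's ``near-zero regret to the default'' property, with a $\log(NT)$ confidence parameter making the tail terms summable across layers. It also requires controlling switches induced at inner layers when an outer layer's weight moves, for which I would bound the mixture's TV movement by a decomposition analogous to \cref{eq:mixture-tv-dyadic}. A further concern is that the interval guarantee must be obtained for every $[s,e]$, not only dyadic ones. That extension is what the cover argument of \cref{lem:dyadic-cover} supplies.
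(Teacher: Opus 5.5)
\textbf{Gap in Step 2: restarts break the switching-cost accounting.} The paper does not prove this lemma. It imports it from \citep{daniely2019competitive} and only recalls the mechanism in \cref{app:dm-olo-realization}. That mechanism uses Fixed-Share scale learners $A_u$ at scales $\tau_u=2^{-u}T$, which run over the whole horizon and are never restarted. They are combined coarse-to-fine by $B_u=\mathrm{Combine}_{\tau_u,Z}(B_{u-1},A_u)$. The combiner's state is \emph{discounted}, $x_{t+1}=\Pi_{[-2,U_{\tau,Z}+2]}((1-1/\tau)x_t+b_t)$. Its increments $b_t$ come from surrogate losses $\widetilde L_t$ that already include each input's own TV movement, scaled by $1/\sqrt D$. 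Scales below $\Theta(D\log(NT))$ are frozen. Your ingredients (two-expert no-loss predictor, coarse-to-fine chaining, $\sqrt D$ scaling) match this at a high level. However, your restart-per-dyadic-interval design leaves the switching-cost accounting of Step 2 unproved, for two reasons.

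Fix a dyadic piece $J$ at level $u$. Each finer level $v>u$ consists of $|J|/n_v$ restarted instances, where $n_v$ is the level-$v$ length.
\begin{enumerate}
\item Step 1 bounds each instance's gate movement by $\mathcal O(\sqrt{n_v\log(NT)})$, as a property separate from its near-zero loss against the default. Summed over the level-$v$ instances inside $J$, this gives $|J|\sqrt{\log(NT)/n_v}$, which is linear in $|J|$.
\item At every restart the gate is reset to about $0$. Suppose the previous instance ended with weight near $1$ on a challenger far in TV from the default. Then the played distribution jumps by $\Theta(1)$ in TV, which is an expected $\Theta(1)$ switch costing $\Theta(D)$. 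These restart boundaries are not the ``one switch per piece boundary'' of the cover. There are $\sum_{v>u}|J|/n_v$ of them inside a single piece, again giving a term linear in $|J|$.
\end{enumerate}
So ``each layer costs $\mathcal O(1)$ when it defers'' is not available. A finer layer does not defer throughout $J$; on sub-intervals where it legitimately follows its challenger, it snaps back at the next restart.

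\textbf{What is missing, and how the paper's mechanism avoids it.} You need a movement-inclusive no-loss guarantee for the two-expert primitive. Its cost \emph{including} $D\,|w_t-w_{t-1}|\,\|p_t-q_t\|_{\mathrm{TV}}$ must be at most the default's cost minus $c\sqrt D\,\Phi(x_{\mathrm{end}})$, plus a negligible term, where $\Phi$ is the potential whose derivative is the gate $g$. You would also need to show that this banked potential pays for the snap-back, i.e.\ $D\,g(x)\le c'\sqrt D\,\Phi(x)$ up to a negligible additive term. This holds only for scales $n=\Omega(D\log(1/Z))$. The recalled mechanism avoids restarts altogether. Because of the discounting and the Fixed-Share mixing, every scale is adaptive on arbitrary windows. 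The only price on an interval is one initial potential per finer level, and these sum geometrically. The gate's movement is paid inside the potential through the TV terms in $\widetilde L_t$, and no dyadic cover is needed.

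\textbf{Step 3 has a related hole.} The trade-off ``regret up by $\sqrt{D+1}$, switches down by $1/\sqrt{D+1}$'' is not uniform over scales. Following a challenger needs an $\Omega(1)$ gate move costing $\Omega(D)$, which is affordable only when $\sqrt{Dn\log(NT)}=\Omega(D)$. Hence scales below $\Theta(D\log(NT))$ must be frozen; this is the role of the thresholds $\tau\ge 16D\log(N\tau)$ and $\tau\ge 64D\log(1/Z)$. On intervals with $L=\mathcal O(D)$ the bound then holds only because the algorithm barely moves there. Similarly, charging a full switch per cover boundary adds $\Theta(D\log L)$, which is not $\mathcal O(\sqrt{(D+1)L\log(NT)})$ when $L\ll D$. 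This is harmless for the paper's application, where $D=\lambda/M<1$, but not for the lemma as stated.
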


\subsection{Algorithmic mechanism behind \citep[Thm. 1]{daniely2019competitive}}
\label{app:dm-olo-realization}

We now review the algorithmic mechanism that yields the imported guarantee of \citep[Thm. 1]{daniely2019competitive}. Since our reduction works on the simplex, we state the construction in
its simplex-valued OLO form.

A minor terminology warning is that The Daniely-Mansour mechanism itself is already hierarchical: it has a family of scale-dependent learners, together with an internal combination procedure that merges them across scales. Thus, when we invoke the
Daniely-Mansour learner as our meta learner, that meta learner has its own internal base/meta structure. We use the term \emph{scale learner} below for the internal Daniely-Mansour components to avoid confusion.

For simplicity, this section follows the constructive proof convention of \citet{daniely2019competitive}: we describe the mechanism for $T=2^J$ and $D\ge1$. This restriction is only for the constructive review below; the imported guarantee in \cref{lem:dm-imported} is stated for all $D\ge0$, and
the case $D<1$ is handled in \citet{daniely2019competitive} by reduction to $D=1$. The construction has three pieces:
\begin{enumerate}
    \item a scale learner for each dyadic scale $\tau$;
    \item a two-way combiner that softly interpolates between two simplex-valued
    algorithms;
    \item a multiscale wrapper that combines the dyadic-scale learners recursively.
\end{enumerate}
Intuitively, the scale-$\tau$ learner is tuned to intervals of length about $\tau$,
while the combiner ensures that passing from coarse to fine scales preserves control
of the switching cost.

\paragraph{Gate function.}
For parameters $\tau\ge 1$ and $Z\in(0,1/e]$, let $\tilde g_{\tau,Z}$ solve
\[
8\tilde g'_{\tau,Z}(x)=\frac{x}{\tau}\tilde g_{\tau,Z}(x)+Z,
\qquad
\tilde g_{\tau,Z}(0)=0.
\]
Define
\[
g_{\tau,Z}(x):=\Pi_{[0,1]}(\tilde g_{\tau,Z}(x)),
\qquad
U_{\tau,Z}:=\tilde g_{\tau,Z}^{-1}(1),
\]
where $\Pi_{[a,b]}$ denotes projection onto $[a,b]$.

\begin{algorithm}[t]
\caption{Fixed-Share scale learner at scale $\tau$}
\label{alg:fs-scale}
\begin{algorithmic}[1]
\Require scale $\tau$, switching cost $D\ge 1$
\State $\eta \gets \sqrt{\frac{\log(N\tau)}{D\tau}}$
\State $z_1 \gets (1/N,\dots,1/N)\in\Delta_N$
\For{$t=1,2,\dots,T$}
    \State output $z_t$
    \State observe loss vector $\ell_t\in[0,1]^N$
    \If{$\tau \ge 16D\log(N\tau)$}
        \State update
        \[
        z_{t+1}(i)=
        \frac{z_t(i)e^{-\eta \ell_t(i)}+\frac{1}{N\tau}}
        {\sum_{j=1}^N \left(z_t(j)e^{-\eta \ell_t(j)}+\frac{1}{N\tau}\right)},
        \qquad i=1,\dots,N
        \]
    \Else
        \State $z_{t+1}\gets z_t$
    \EndIf
\EndFor
\end{algorithmic}
\end{algorithm}

Algorithm~\ref{alg:fs-scale} is the scale-dependent learner used in the proof of
Theorem~2.1.

\begin{algorithm}[t]
\caption{Two-way combiner $\mathrm{Combine}_{\tau,Z}(P,Q)$}
\label{alg:two-way-combiner}
\begin{algorithmic}[1]
\Require simplex-valued OLO algorithms $P,Q$, scale $\tau$, parameter $Z\in(0,1/e]$, switching cost $D\ge 1$
\State initialize $x_1\gets 0$
\For{$t=1,2,\dots,T$}
    \State obtain current actions $p_t\in\Delta_N$ from $P$ and $q_t\in\Delta_N$ from $Q$
    \If{$\tau \ge 64D\log(1/Z)$}
        \State $\alpha_t \gets g_{\tau,Z}(x_t)$
        \State output
        $r_t \gets \alpha_t p_t + (1-\alpha_t)q_t$
    \Else
        \State output $r_t \gets p_t$
    \EndIf
    \State observe loss vector $\ell_t$
    \State let $p_{t+1},q_{t+1}$ be the next actions of $P,Q$
    \State define the surrogate losses
    \[
    \widetilde L_t(P):=\frac{\langle \ell_t,p_t\rangle + D\|p_{t+1}-p_t\|_{\mathrm{TV}}}{3},
    \qquad
    \widetilde L_t(Q):=\frac{\langle \ell_t,q_t\rangle + D\|q_{t+1}-q_t\|_{\mathrm{TV}}}{3}
    \]
    \State set
    \[
    b_t:=\frac{\widetilde L_t(P)-\widetilde L_t(Q)}{\sqrt D},
    \qquad
    x_{t+1}:=\Pi_{[-2,U_{\tau,Z}+2]}
    \left(\left(1-\frac1\tau\right)x_t+b_t\right)
    \]
\EndFor
\State \Return algorithm object $\mathcal C$ with
       $\mathcal C.\textsc{Query}(\ell_t) \to r_t$
\end{algorithmic}
\end{algorithm}

Algorithm~\ref{alg:two-way-combiner} is the key internal combination step. It mixes
the two input algorithms using the gate value $g_{\tau,Z}(x_t)$, where the scalar
state $x_t$ tracks the recent difference between their surrogate losses.

\begin{algorithm}[t]
\caption{Multiscale OLO algorithm underlying Daniely--Mansour Thm.~2.1}
\label{alg:dm-multiscale}
\begin{algorithmic}[1]
\Require horizon $T=2^J$, number of experts $N$, switching cost $D\ge 1$
\State $Z \gets \frac{1}{2T\log T}$
\For{$u=0,1,\dots,J-1$}
    \State $\tau_u \gets 2^{-u}T$
    \State $A_u \gets \textsc{Fixed-Share}(\tau_u,D)$
\EndFor
\State $B_0 \gets A_0$
\For{$u=1,2,\dots,J-1$}
    \State $B_u \gets \mathrm{Combine}_{\tau_u,Z}(B_{u-1},A_u)$
\EndFor
\State \Return $B_{J-1}$ \Comment{same interface:
       $B_{J-1}.\textsc{Query}(\ell_t)\to\vec v_t\in\Delta_N$}
\end{algorithmic}
\end{algorithm}

Algorithm~\ref{alg:dm-multiscale} maintains one scale learner for each dyadic scale
\[
T,\ T/2,\ T/4,\ \dots,\ 2,
\]
and combines them recursively from coarse to fine scales. On any interval $I$, one of
these scales is of the correct order of magnitude for $|I|$, and the recursive
combination guarantees that the final algorithm inherits that scale's regret bound up
to logarithmic overhead.
Here $J$ denotes the logarithm of the DM horizon, whereas $N$ denotes the
number of experts in the DM subproblem. In our meta-learning application,
$N=K=|\mathcal H|$.

\paragraph{Remarks.}
\begin{enumerate}
    \item %
All three algorithms share the same interface: when queried with $\ell_t\in[0,1]^N$ at round $t$, each returns a weight vector $\vec v_t\in\Delta_N$. \textsc{Combine} is a constructor that takes two such objects and produces a third. \textsc{DM-Multiscale} returns the root of the resulting recursion tree;
querying it round by round on $(\vec g_t/M)$ is exactly what \cref{alg:master} does.

    \item The denominator $3$ in Algorithm~\ref{alg:two-way-combiner} is the
    specialization of their factor $(M{+}1)$ to the case $M=2$, which is the
    value used in the proof of the multiscale combination theorem.

    \item The smallest dyadic scale is $2$, not $1$, since the construction uses
    $\tau_u=2^{-u}T$ for $u=0,\dots,J-1$.

    \item The output of Algorithm~\ref{alg:dm-multiscale} is a sequence
    $(p_t)_{t=1}^T\subseteq\Delta_N$. By the OLO/EXP equivalence in
    \citet[Sec 5.2]{daniely2019competitive}, this simplex-valued algorithm can be converted
    into a randomized experts algorithm with the same expected  loss 
     switching cost.
\end{enumerate}

Now, we can prove the exact statement of \cref{lem:meta-dyadic} based on \cref{lem:dm-imported}.
We use \cref{lem:dm-imported} in its stated form for all $D\ge0$; the
algorithmic description in \cref{app:dm-olo-realization} is included only
to recall the constructive mechanism and follows the $D\ge1$ proof
convention of \citet{daniely2019competitive}.
\begin{proof}[Proof of \cref{lem:meta-dyadic}]
We reduce our setting to the discrete expert problem by defining the number of experts $N \doteq |\mathcal{H}|$. To accommodate the bounded surrogate range $M$, we define the normalized oblivious loss sequence $l_t \doteq g_t/M \in [0,1]^N$ for all $t\in[T]$, and set the switching cost parameter to $D \doteq \lambda/M$.

Let $I_t \in [N]$ be the randomized expert chosen by the algorithm from \cref{lem:dm-imported}. We define our meta-learner's distribution over the simplex as $v_t(i) \doteq \Pr(I_t=i)$ for $i\in[N]$. Because the losses are oblivious, $v_t$ is determined purely online from the past loss history and the algorithm's internal randomness.

For each step $t$, the expected loss is exactly the inner product:
\[
\mathbb{E}[l_t(I_t)] = \sum_{i=1}^N v_t(i)l_t(i) = \langle v_t, l_t \rangle.
\]

Furthermore, the joint law of $(I_{t-1},I_t)$ is a coupling of the marginal distributions $v_{t-1}$ and $v_t$. For any coupling of two discrete distributions, the mismatch probability is lower bounded by their total variation distance. Therefore,
\[
\Pr(I_t\neq I_{t-1}) \ge \mathrm{TV}(v_t, v_{t-1}) = \frac{1}{2}\|v_t-v_{t-1}\|_1.
\]

Substituting these expected loss and mismatch probability bounds into the expectation on the left-hand side of \cref{eq:dm-imported} yields a lower bound on the discrete algorithm's cost:
\[
\sum_{t=s}^{e}\langle v_t,l_t\rangle
+
\frac{D}{2}\sum_{t=s+1}^{e}\|v_t-v_{t-1}\|_1
\le
\mathbb{E}\left[ \sum_{t=s}^{e} l_t(I_t) + D\sum_{t=s+1}^{e}\mathbf{1}\{I_t\neq I_{t-1}\} \right].
\]

By chaining this with the upper bound directly from \cref{lem:dm-imported}, we obtain the simplex formulation of the regret:
\[
\sum_{t=s}^{e}\langle v_t,l_t\rangle
+
\frac{D}{2}\sum_{t=s+1}^{e}\|v_t-v_{t-1}\|_1
\le
\min_{i\in[N]}\sum_{t=s}^{e} l_t(i)
+
C_{\mathrm{dm}}\sqrt{(D+1)L\log(NT)}.
\]

Finally, we substitute $l_t = g_t/M$ and $D = \lambda/M$ into the above inequality. Multiplying both sides by $M$ scales the square root term as follows:
\[
M \cdot C_{\mathrm{dm}}\sqrt{\left(\frac{\lambda}{M}+1\right)L\log(NT)} = C_{\mathrm{dm}}\sqrt{M^2\left(\frac{\lambda+M}{M}\right)L\log(NT)} = C_{\mathrm{dm}}\sqrt{M(M+\lambda)L\log(NT)}.
\]
This exactly recovers \cref{eq:meta-dyadic} and completes the proof.
\end{proof}

\subsection{Proof of the SAR guarantee}
\label{app:sar-proof}

\begin{proof}[Proof of \cref{thm:reduction-geometric}]
Write $J_r=[a_r,b_r]$, so $a_1=s$, $b_m=e$, and $b_r+1=a_{r+1}$ for $r<m$.

By \cref{eq:mixture-lotus}:
\[
\E[f_s(\vec x_s)] = \ip{\vec v_s}{\vec \ell_s}\le \ip{\vec v_s}{\vec g_s}.
\]
Summing the one step bound in \cref{lem:one-step-master-dyadic} over $t=s+1,\dots,e$, and using the above inequality yields
\begin{equation}
\label{eq:master-to-meta-geometric}
\E\left[\sum_{t=s}^{e} f_t(\vec x_t)
+\lambda\sum_{t=s+1}^{e}\one\{\vec x_t\neq \vec x_{t-1}\}\right]
\le
\sum_{t=s}^{e}\ip{\vec v_t}{\vec g_t}
+
\frac{\lambda}{2}\sum_{t=s+1}^{e}\|\vec v_t-\vec v_{t-1}\|_1.
\end{equation}

Since \cref{lem:meta-dyadic} holds simultaneously for every interval under the single run that generates $(\vec v_t)$, we may apply it to each block $J_r$ using the same sequence $(\vec v_t)$. 

Since each \(J_r\) is a dyadic block, define \(H_r\doteq |J_r|\in\mathcal H\).
The expert with scale \(H_r\) restarts at the beginning of \(J_r\), so on
\(J_r\) it is exactly a fresh FPRLL\((H_r)\) run.

Thus, for every $r$,
\begin{equation}
\label{eq:blockwise-meta}
\sum_{t=a_r}^{b_r}\ip{\vec v_t}{\vec g_t}
+
\frac{\lambda}{2}\sum_{t=a_r+1}^{b_r}\|\vec v_t-\vec v_{t-1}\|_1
\le
\sum_{t=a_r}^{b_r}  g_t^{(H_r)}
+
C_{\mathrm{dm}}\sqrt{M(M+\lambda)|J_r|\log(KT)}.
\end{equation}
Summing \cref{eq:blockwise-meta} over $r=1,\dots,m$ gives
\begin{align}
&\sum_{r=1}^{m}\sum_{t=a_r}^{b_r}\ip{v_t}{g_t}
+
\frac{\lambda}{2}\sum_{r=1}^{m}\sum_{t=a_r+1}^{b_r}\|v_t-v_{t-1}\|_1 \\
&\qquad\le
\sum_{r=1}^{m}\sum_{t\in J_r} g_t^{(H_r)}
+
C_{\mathrm{dm}}\sqrt{M(M+\lambda)\log(KT)}\sum_{r=1}^{m}\sqrt{|J_r|}.
\label{eq:blockwise-meta-summed}
\end{align}
Because the blocks are consecutive,
\[
\sum_{t=s+1}^{e}\|v_t-v_{t-1}\|_1
=
\sum_{r=1}^{m}\sum_{t=a_r+1}^{b_r}\|v_t-v_{t-1}\|_1
+
\sum_{r=1}^{m-1}\|v_{a_{r+1}}-v_{a_{r+1}-1}\|_1.
\]
Therefore,
\begin{align}
&\sum_{t=s}^{e}\ip{v_t}{g_t}
+
\frac{\lambda}{2}\sum_{t=s+1}^{e}\|v_t-v_{t-1}\|_1 \\
&\qquad\le
\sum_{r=1}^{m}\sum_{t\in J_r} g_t^{(H_r)}
+
C_{\mathrm{dm}}\sqrt{M(M+\lambda)\log(KT)}\sum_{r=1}^{m}\sqrt{|J_r|}
+
\frac{\lambda}{2}\sum_{r=1}^{m-1}\|v_{a_{r+1}}-v_{a_{r+1}-1}\|_1.
\end{align}
Since $\vec v_t,\vec v_{t-1}\in\Delta_N$, one has $\|\vec v_t-\vec v_{t-1}\|_1\le 2$, so each boundary term is at most $\lambda$. Thus
\begin{equation}
\label{eq:sum-blocks-bounded}
\sum_{t=s}^{e}\ip{v_t}{g_t}
+
\frac{\lambda}{2}\sum_{t=s+1}^{e}\|\vec v_t-\vec v_{t-1}\|_1
\le
\sum_{r=1}^{m}\sum_{t\in J_r} g_t^{(H_r)}
+
C_{\mathrm{dm}}\sqrt{M(M+\lambda)\log(KT)}\sum_{r=1}^{m}\sqrt{|J_r|}
+
\lambda(m-1).
\end{equation}
Combining \cref{eq:master-to-meta-geometric,eq:sum-blocks-bounded}, we obtain
\begin{align}
&\E\left[\sum_{t=s}^{e} f_t(x_t)
+\lambda\sum_{t=s+1}^{e}\one\{x_t\neq x_{t-1}\}\right] \\
&\qquad\le
\sum_{r=1}^{m}\sum_{t\in J_r} g_t^{(H_r)}
+
C_{\mathrm{dm}}\sqrt{M(M+\lambda)\log(KT)}\sum_{r=1}^{m}\sqrt{|J_r|}
+
\lambda(m-1).
\label{eq:before-base-sum}
\end{align}

By \cref{cor:static-block}, for any \(\vec u\in\X\) and every \(r\),
\begin{align}
    \sum_{t\in J_r}\bigl(g_t^{(H_r)} - f_t(\vec u)\bigr)
    \le C_{\mathrm{base}}\sqrt{|J_r|}.
\end{align}
summing over all the dyadic intervals we obtain 
\begin{align}
  \label{eq:sum-of-base-regret}
  \sum_{r=1}^{m}\sum_{t\in J_r} g_t^{(H_r)}
  \le
  \min_{u\in \X}\sum_{t=s}^{e} f_t(u)
  +
  C_{\text{base}} \sum_{r=1}^{m}  \sqrt{|J_r|}.
\end{align}

Substituting \cref{eq:sum-of-base-regret} into \cref{eq:before-base-sum}, subtracting the comparator term from both sides, and using: 
\[
\lambda(m-1)\le \lambda(2\lceil\log_2 L\rceil+1),\qquad \text{by the first part of \cref{eq:dyadic-cover}},
\]
and
\[
\sum_{r=1}^{m}\sqrt{|J_r|}\le C_{\mathrm{gc}}\sqrt{L}, \qquad \text{by the second part of  \cref{eq:dyadic-cover}},
\]
the lemma statement follows 
\end{proof}

\subsection{Proof of \cref{cor:dr-pwc}}
\label{app:sar-to-dr-pwc}

Let $\comseq\in\X^T$ be an arbitrary comparator sequence, and let
\[
S_T \;\doteq\; \sum_{t=2}^T \one\{\vec u_t\neq \vec u_{t-1}\}.
\]
This sequence induces a partition of $[T]$ into $S_T+1$ maximal consecutive segments on which $\vec u_t$ is constant,
\[
\mathcal S_i \;=\; [s_i,e_i], \qquad i=1,\dots,S_T+1,
\]
with $s_1=1$, $e_{S_T+1}=T$, and $e_i+1=s_{i+1}$ for $i=1,\dots,S_T$. Writing $\vec u^{(i)}$ for the common value of $\vec u_t$ on $\mathcal S_i$,
\begin{equation}
\label{eq:app-comparator-decomp}
\sum_{t=1}^T f_t(\vec u_t)
\;=\;
\sum_{i=1}^{S_T+1}\sum_{t\in\mathcal S_i} f_t(\vec u^{(i)})
\;\ge\;
\sum_{i=1}^{S_T+1}\min_{\vec u\in\X}\sum_{t\in\mathcal S_i} f_t(\vec u).
\end{equation}

\paragraph{Decomposing the switching cost along the partition.}
The total indicator switching cost splits into within-segment and boundary contributions:
\begin{equation}
\label{eq:app-switch-decomp}
\sum_{t=2}^T \one\{\vec x_t\neq\vec x_{t-1}\}
\;=\;
\underbrace{\sum_{i=1}^{S_T+1}\sum_{t=s_i+1}^{e_i}\one\{\vec x_t\neq\vec x_{t-1}\}}_{\text{within segments}}
\;+\;
\underbrace{\sum_{i=2}^{S_T+1}\one\{\vec x_{s_i}\neq\vec x_{s_i-1}\}}_{\text{$S_T$ boundary switches}}.
\end{equation}

Combining \cref{eq:app-comparator-decomp,eq:app-switch-decomp} and taking expectation,
\begin{align}
\E\left[\Rind_T(\comseq)\right]
&\;\le\;
\sum_{i=1}^{S_T+1}
\left(
\E\left[\sum_{t\in\mathcal S_i} f_t(\vec x_t)
+\lambda\sum_{t=s_i+1}^{e_i}\one\{\vec x_t\neq\vec x_{t-1}\}\right]
-\min_{\vec u\in\X}\sum_{t\in\mathcal S_i} f_t(\vec u)
\right) \nonumber\\
&\qquad
+\lambda\sum_{i=2}^{S_T+1}\Pr\left(\vec x_{s_i}\neq\vec x_{s_i-1}\right) \nonumber\\
&\;=\;
\sum_{i=1}^{S_T+1}\mathrm{SAR}_{\mathcal S_i}(\vec u^{(i)})
\;+\;
\lambda\sum_{i=2}^{S_T+1}\Pr\left(\vec x_{s_i}\neq\vec x_{s_i-1}\right),
\label{eq:app-dr-as-sum-sar}
\end{align}
where the last equality uses the definition of $\mathrm{SAR}$ in \cref{eq:sar-def}. Each boundary probability is at most one, so the second sum is bounded by $\lambda S_T$.

\paragraph{Applying the SAR guarantee.}
By \cref{thm:reduction-geometric}, for every segment $\mathcal S_i$,
\[
\mathrm{SAR}_{\mathcal S_i}(\vec u^{(i)})
\;=\;\tilde {\mathcal{O}}\left(\sqrt{|\mathcal S_i|}\right).
\]
Summing over $i$ and applying Cauchy--Schwarz,
\begin{equation}
\label{eq:app-cauchy-schwarz}
\sum_{i=1}^{S_T+1}\sqrt{|\mathcal S_i|}
\;\le\;
\sqrt{S_T+1}\,\sqrt{\sum_{i=1}^{S_T+1}|\mathcal S_i|}
\;=\;
\sqrt{(S_T+1)\,T}.
\end{equation}

Substituting into \cref{eq:app-dr-as-sum-sar},
\[
\E\left[\Rind_T(\comseq)\right]
\;\le\;
\tilde \bigo\left(\sqrt{(S_T+1)\,T}\right)
\;+\;
\lambda S_T,
\]
which is the claimed bound. \qed

\subsection{Proof for the DR guarantee}
\label{app:dr-proof}
\begin{proof}[Proof of \cref{thm:dynamic-dyadic-master}]
Fix any comparator sequence \(\comseq\subseteq\X\). By definition,
\[
\expec{}{\Rind_T(\comseq)}
=
\expec{}{\sum_{t=1}^T f_t(\vec x_t)}
-
\sum_{t=1}^T f_t(\vec u_t)
+
\lambda\sum_{t=2}^T \Pr(\vec x_t\neq \vec x_{t-1}).
\]
For \(t=1\), since \(c_1^{(H)}=0\) for every \(H\in\mathcal H\), we have
\(g_1^{(H)}=\ell_1^{(H)}\), and therefore
\[
\expec{}{f_1(\vec x_1)}=\langle v_1,g_1\rangle.
\]
For each \(t\ge 2\), \cref{lem:one-step-master-dyadic} gives
\[
\expec{}{f_t(\vec x_t)}
+
\lambda \Pr(\vec x_t\neq \vec x_{t-1})
\le
\langle v_t,g_t\rangle
+
\frac{\lambda}{2}\|v_t-v_{t-1}\|_1.
\]
Summing over \(t=1,\dots,T\), we obtain
\[
\expec{}{\sum_{t=1}^T f_t(\vec x_t)}
+
\lambda\sum_{t=2}^T \Pr(\vec x_t\neq \vec x_{t-1})
\le
\sum_{t=1}^T \langle v_t,g_t\rangle
+
\frac{\lambda}{2}\sum_{t=2}^T \|v_t-v_{t-1}\|_1.
\]
Subtracting \(\sum_{t=1}^T f_t(\vec u_t)\), we get
\begin{equation}
\label{eq:dynamic-master-step-1}
\expec{}{\Rind_T(\comseq)}
\le
\sum_{t=1}^T \langle v_t,g_t\rangle
+
\frac{\lambda}{2}\sum_{t=2}^T \|v_t-v_{t-1}\|_1
-
\sum_{t=1}^T f_t(\vec u_t).
\end{equation}

Now fix any \(H\in\mathcal H\), and add and subtract
\(\sum_{t=1}^T g_t^{(H)}\) on the right-hand side of
\cref{eq:dynamic-master-step-1}. This yields
\[
\begin{aligned}
\expec{}{\Rind_T(\comseq)}
&\le
\left(
\sum_{t=1}^T \langle v_t,g_t\rangle
+
\frac{\lambda}{2}\sum_{t=2}^T \|v_t-v_{t-1}\|_1
-
\sum_{t=1}^T g_t^{(H)}
\right)
\\
&\qquad\qquad+
\left(
\sum_{t=1}^T g_t^{(H)}
-
\sum_{t=1}^T f_t(\vec u_t)
\right).
\end{aligned}
\]
We now control the two brackets separately. The first is the regret of the
meta-learner relative to expert \(H\). Applying
\cref{lem:meta-dyadic} on the full interval \([1,T]\), whose length is \(T\),
gives
\[
\sum_{t=1}^T \langle v_t,g_t\rangle
+
\frac{\lambda}{2}\sum_{t=2}^T \|v_t-v_{t-1}\|_1
-
\sum_{t=1}^T g_t^{(H)}
\le
C_{\mathrm{dm}}\sqrt{M(M+\lambda)\,T\log(KT)}.
\]
Hence, for every \(H\in\mathcal H\),
\[
\expec{}{\Rind_T(\comseq)}
\le
\sum_{t=1}^T g_t^{(H)}
-
\sum_{t=1}^T f_t(\vec u_t)
+
C_{\mathrm{dm}}\sqrt{M(M+\lambda)\,T\log(KT)}.
\]
Taking the minimum over \(H\in\mathcal H\) proves
\cref{eq:dr-meta}.

It remains to instantiate the expert term. By \cref{thm:dyadic-restart}, there
exists a dyadic restart scale \(k^\dagger\in\mathcal H\) such that
\[
\sum_{t=1}^T g_t^{(k^\dagger)}
-
\sum_{t=1}^T f_t(\vec u_t)
=
\RindH{k^{\dagger}}_T
\le
\sqrt 2\,A\,\frac{T}{\sqrt{k^\star}}
+
(B+G)P_T\,k^\star.
\]
Substituting \(H=k^\dagger\) into \cref{eq:dr-meta} gives
\cref{eq:dr-final}.
\end{proof}

\section{Estimating the surrogate}
\label{subsec:hat-surrogate}

The surrogate
\[
g_t^{(H)}
=
\ell_t^{(H)}
+
\lambda c_t^{(H)}
\]
in \cref{eq:expert-losses} is convenient for analysis but is not directly
computable. Indeed,
\[
\ell_t^{(H)}
=
\E_{\vec x\sim \cQ_t^{(H)}}[f_t(\vec x)]
\]
is an expectation under the expert's sampling distribution, while
\[
c_t^{(H)}
=
\TV{\cQ_t^{(H)}-\cQ_{t-1}^{(H)}}
\]
is a total-variation distance between two high-dimensional densities.
Neither quantity has a closed form in general. We therefore use the
following implementable surrogate.

Let
\[
\epsilon_H
\doteq
\frac{\beta d}{\sigma_H}
+
\frac{\sqrt d\,G}{\mu_H}
\]
be the within-block TV bound from \cref{lem:switching-part-3} for the
scale-\(H\) parameters used by FPRLL(H). Define
\begin{align}
\label{eq:hat-c}
\hat c_t^{(H)}
&\doteq
\begin{cases}
0, & t=1,\\
\min\{\epsilon_H,1\}, & t,t-1 \text{ lie in the same block of } \cE_H,\\
1, & t \text{ is the first round of a new block of } \cE_H,
\end{cases}
\\[2mm]
\label{eq:hat-ell}
\hat\ell_t^{(H)}
&\doteq
f_t(\vec y_t^{(H)}),
\qquad
\vec y_t^{(H)}\sim \cQ_t^{(H)},
\\[2mm]
\label{eq:hat-g}
\hat g_t^{(H)}
&\doteq
\hat\ell_t^{(H)}
+
\lambda \hat c_t^{(H)}.
\end{align}
The auxiliary samples \(\vec y_t^{(H)}\) are drawn freshly for each
round and scale, after \(f_t\) is revealed, and independently of the
current meta-action conditional on the current history. The meta-learner
in \cref{alg:master} is run with
\[
\hat{\vec g}_t
\doteq
(\hat g_t^{(H)})_{H\in\cH}
\]
in place of the exact vector \(\vec g_t\).

It is useful to separate the sampled surrogate from its conditional
mean. Define
\[
\bar g_t^{(H)}
\doteq
\ell_t^{(H)}
+
\lambda \hat c_t^{(H)}.
\]
Then \(\hat g_t^{(H)}\) is an unbiased estimate of
\(\bar g_t^{(H)}\), not necessarily of \(g_t^{(H)}\). More precisely,
conditional on the history after \(f_t\) is revealed but before the
auxiliary samples are drawn,
\[
\E[\hat g_t^{(H)}]
=
\bar g_t^{(H)}.
\]
Moreover, \(\bar g_t^{(H)}\) upper-bounds the exact surrogate. Indeed,
\cref{lem:switching-part-3} gives \(c_t^{(H)}\le \epsilon_H\) inside a
restart block, while at restart boundaries we use the trivial bound
\(c_t^{(H)}\le 1\). Hence
\[
c_t^{(H)}
\le
\hat c_t^{(H)}
\qquad\text{and therefore}\qquad
g_t^{(H)}
\le
\bar g_t^{(H)}.
\]

We now explain why using \(\hat{\vec g}_t\) does not change the guarantees.
The original analysis uses the surrogate vector in three places.

\paragraph{1. The one-step master reduction.}
The original one-step bound is
\[
\E[f_t(\vec x_t)]
+
\lambda\Pr(\vec x_t\neq \vec x_{t-1})
\le
\langle \vec v_t,\vec g_t\rangle
+
\frac{\lambda}{2}\|\vec v_t-\vec v_{t-1}\|_1 .
\]
Since \(g_t^{(H)}\le \bar g_t^{(H)}\) for every \(H\) and
\(\vec v_t\in\Delta_K\),
\[
\langle \vec v_t,\vec g_t\rangle
\le
\langle \vec v_t,\bar{\vec g}_t\rangle .
\]
Taking expectation over the auxiliary samples gives
\[
\E[\langle \vec v_t,\hat{\vec g}_t\rangle]
=
\E[\langle \vec v_t,\bar{\vec g}_t\rangle].
\]
Therefore,
\[
\E[f_t(\vec x_t)]
+
\lambda\Pr(\vec x_t\neq \vec x_{t-1})
\le
\E[\langle \vec v_t,\hat{\vec g}_t\rangle]
+
\frac{\lambda}{2}\E[\|\vec v_t-\vec v_{t-1}\|_1].
\]
Thus the real one-step cost of the master is still controlled, in
expectation, by the sampled surrogate losses fed to the meta-learner.

\paragraph{2. The meta-regret bound.}
The meta-regret lemma is applied directly to the realized sequence
\[
\hat{\vec g}_1,\ldots,\hat{\vec g}_T.
\]
This sequence is valid full-information feedback for the meta-learner:
\(\hat{\vec g}_t\) is observed after the meta-action is chosen, and it is
generated independently of the current meta-action conditional on the
history. Also, by boundedness of the losses,
\[
0\le \hat g_t^{(H)}\le M_f+\lambda \doteq M.
\]
Therefore \cref{lem:meta-dyadic} applies with \(\hat{\vec g}_t\). For
any interval \(I=[s,e]\) of length \(L\),
\[
\sum_{t=s}^{e}
\langle \vec v_t,\hat{\vec g}_t\rangle
+
\frac{\lambda}{2}
\sum_{t=s+1}^{e}
\|\vec v_t-\vec v_{t-1}\|_1
\le
\min_{H\in\cH}
\sum_{t=s}^{e}
\hat g_t^{(H)}
+
C_{\mathrm{dm}}\sqrt{M(M+\lambda)L\log(KT)}.
\]
Taking expectations and using
\[
\E[\min_H Z_H]\le \min_H \E[Z_H],
\]
we obtain
\[
\E\!\left[
\sum_{t=s}^{e}
\langle \vec v_t,\hat{\vec g}_t\rangle
+
\frac{\lambda}{2}
\sum_{t=s+1}^{e}
\|\vec v_t-\vec v_{t-1}\|_1
\right]
\le
\min_{H\in\cH}
\sum_{t=s}^{e}
\left(
\ell_t^{(H)}
+
\lambda\hat c_t^{(H)}
\right)
+
C_{\mathrm{dm}}\sqrt{M(M+\lambda)L\log(KT)}.
\]

\paragraph{3. The best-expert term after the meta-regret bound.}
After applying the meta-regret bound, the remaining quantity is the cost
of the best scale \(H\):
\[
\sum_t
\left(
\ell_t^{(H)}
+
\lambda\hat c_t^{(H)}
\right).
\]
This differs from the original exact surrogate cost
\[
\sum_t g_t^{(H)}
=
\sum_t
\left(
\ell_t^{(H)}
+
\lambda c_t^{(H)}
\right)
\]
only in the switching term. The replacement is harmless because
\(\hat c_t^{(H)}\) is chosen to match the switching upper bounds already
used in the base-expert analysis. Inside each restart block,
\[
\hat c_t^{(H)}
=
\min\{\epsilon_H,1\}
\le
\epsilon_H,
\]
which is the same within-block movement bound used in
\cref{cor:static-block} and \cref{thm:dyadic-restart}. At a restart
boundary,
\[
\hat c_t^{(H)}=1,
\]
so the contribution is at most \(\lambda\), which is absorbed by the
existing restart-boundary terms.

Consequently, the arguments of \cref{cor:static-block} and
\cref{thm:dyadic-restart} give the same asymptotic bounds for
\[
\sum_t
\left(
\ell_t^{(H)}
+
\lambda\hat c_t^{(H)}
\right)
-
\sum_t f_t(\vec u_t)
\]
as for the original exact surrogate regret, up to universal constant
changes in the existing restart-boundary terms.

Combining the three observations, the guarantees of
\cref{thm:reduction-geometric}, \cref{cor:dr-pwc}, and
\cref{thm:dynamic-dyadic-master} continue to hold in expectation for the
algorithm using \(\hat{\vec g}_t\), with the same rates.

\end{document}